\documentclass{article}

% if you need to pass options to natbib, use, e.g.:
\PassOptionsToPackage{numbers, compress}{natbib}
% before loading neurips_2022

% ready for submission
% \usepackage{neurips_2022}

% to compile a preprint version, e.g., for submission to arXiv, add add the
% [preprint] option:
%     \usepackage[preprint]{neurips_2022}

% to compile a camera-ready version, add the [final] option, e.g.:
\usepackage[final]{neurips_2022}

% to avoid loading the natbib package, add option nonatbib:
    % \usepackage[nonatbib]{neurips_2022}

\usepackage[utf8]{inputenc} % allow utf-8 input
\usepackage[T1]{fontenc}    % use 8-bit T1 fonts
\usepackage{hyperref}       % hyperlinks
\usepackage{url}            % simple URL typesetting
\usepackage{booktabs}       % professional-quality tables
\usepackage{amsfonts}       % blackboard math symbols
\usepackage{nicefrac}       % compact symbols for 1/2, etc.
\usepackage{microtype}      % microtypography
\usepackage{xcolor}         % colors

\newcommand{\lin}{\text{lin}}
\newcommand{\gradnorm}{\text{Grad}}
\newcommand{\tracenorm}{\text{Trace}} 
\newcommand{\snip}{\text{SNIP}}
\newcommand{\grasp}{\text{GraSP}}
\newcommand{\synflow}{\text{SynFlow}}

\newcommand{\fisher}{\text{Fisher}}
\newcommand{\naswot}{\text{NASWOT}}
\newcommand{\knas}{\text{KNAS}}

\newcommand{\frob}{\text{F}}
\newcommand{\tr}{\text{tr}}

\newcommand{\ms}{\phantom{1}}

\usepackage[title]{appendix}
\usepackage{bm}
\usepackage{multicol}
\usepackage{amsmath}
\usepackage{amssymb}
\usepackage{mathtools}
\usepackage{array}
\usepackage{multirow}
\usepackage{float}
\usepackage{amsthm}
\usepackage{xr}
\usepackage{thmtools, thm-restate}
\usepackage{threeparttable}
\usepackage{nicefrac}
\usepackage{enumerate}
\usepackage{siunitx}
\usepackage{algorithmic}
\usepackage[ruled,vlined]{algorithm2e}
\usepackage{subcaption}
\usepackage{caption}

\newcolumntype{L}[1]{>{\raggedright\let\newline\\\arraybackslash\hspace{0pt}}m{#1}}
\newcolumntype{C}[1]{>{\centering\let\newline\\\arraybackslash\hspace{0pt}}m{#1}}
\newcolumntype{R}[1]{>{\raggedleft\let\newline\\\arraybackslash\hspace{0pt}}m{#1}}

%%%%% NEW MATH DEFINITIONS %%%%%

\usepackage{amsmath,amsfonts,bm}

% Mark sections of captions for referring to divisions of figures

% Highlight a newly defined term

% Figure reference, lower-case.

% Figure reference, capital. For start of sentence

% Section reference, lower-case.

% Section reference, capital.

% Reference to two sections.

% Reference to three sections.

% Reference to an equation, lower-case.
% \def\eqref#1{equation~\ref{#1}}
% Reference to an equation, upper case
% \def\Eqref#1{Equation~\ref{#1}}
% A raw reference to an equation---avoid using if possible

% Reference to a chapter, lower-case.

% Reference to an equation, upper case.

% Reference to a range of chapters

% Reference to an algorithm, lower-case.

% Reference to an algorithm, upper case.

% Reference to a part, lower case

% Reference to a part, upper case

\def\1{\bm{1}}

\def\eps{{\epsilon}}

% Random variables

% rm is already a command, just don't name any random variables m

\def\rx{{\textnormal{x}}}
\def\ry{{\textnormal{y}}}

% Random vectors

% Elements of random vectors

% Random matrices
\def\rmA{{\mathbf{A}}}

\def\rmH{{\mathbf{H}}}
\def\rmI{{\mathbf{I}}}

\def\rmV{{\mathbf{V}}}
\def\rmW{{\mathbf{W}}}
\def\rmX{{\mathbf{X}}}

\def\rmTheta{{\mathbf{\Theta}}}
\def\rmLambda{{\mathbf{\Lambda}}}

% Elements of random matrices

% Vectors
\def\vzero{{\bm{0}}}
\def\vone{{\bm{1}}}

\def\vtheta{{\bm{\theta}}}
\def\vTheta{{\bm{\Theta}}}

\def\vepsilon{{\bm{\epsilon}}}

\def\vv{{\bm{v}}}
\def\vw{{\bm{w}}}
\def\vx{{\bm{x}}}
\def\vy{{\bm{y}}}

% Elements of vectors

% Matrix

% Tensor
\DeclareMathAlphabet{\mathsfit}{\encodingdefault}{\sfdefault}{m}{sl}
\SetMathAlphabet{\mathsfit}{bold}{\encodingdefault}{\sfdefault}{bx}{n}

% Graph
\def\gA{{\mathcal{A}}}
\def\gB{{\mathcal{B}}}

\def\gD{{\mathcal{D}}}

\def\gF{{\mathcal{F}}}
\def\gG{{\mathcal{G}}}
\def\gH{{\mathcal{H}}}

\def\gL{{\mathcal{L}}}
\def\gM{{\mathcal{M}}}

\def\gO{{\mathcal{O}}}
\def\gP{{\mathcal{P}}}

\def\gR{{\mathcal{R}}}

% Sets

% Don't use a set called E, because this would be the same as our symbol
% for expectation.

\def\sN{{\mathbb{N}}}

\def\sP{{\mathbb{P}}}

\def\sR{{\mathbb{R}}}

% Entries of a matrix

% entries of a tensor
% Same font as tensor, without \bm wrapper

% The true underlying data generating distribution

% The empirical distribution defined by the training set

% The model distribution

% Stochastic autoencoder distributions

 % Laplace distribution

\newcommand{\E}{\mathbb{E}}
\newcommand{\Ls}{\mathcal{L}}

% Wolfram Mathworld says $L^2$ is for function spaces and $\ell^2$ is for vectors
% But then they seem to use $L^2$ for vectors throughout the site, and so does
% wikipedia.

 % See usage in notation.tex. Chosen to match Daphne's book.

\DeclareMathOperator*{\argmax}{arg\,max}
\DeclareMathOperator*{\argmin}{arg\,min}

\newtheorem{theorem}{Theorem}
\newtheorem{corollary}{Corollary}

\newtheorem{lemma}{Lemma}
\newtheorem*{remark}{Remark}

\usepackage{apptools}
\AtAppendix{\counterwithin{lemma}{section}}
\AtAppendix{\counterwithin{theorem}{section}}
\AtAppendix{\counterwithin{definition}{section}}

\let\svthefootnote\thefootnote
\textheight 1in
\newcommand\blankfootnote[1]{%
  \let\thefootnote\relax\footnotetext{#1}%
  \let\thefootnote\svthefootnote%
}

\title{Unifying and Boosting Gradient-Based Training-Free Neural Architecture Search}

% The \author macro works with any number of authors. There are two commands
% used to separate the names and addresses of multiple authors: \And and \AND.
%
% Using \And between authors leaves it to LaTeX to determine where to break the
% lines. Using \AND forces a line break at that point. So, if LaTeX puts 3 of 4
% authors names on the first line, and the last on the second line, try using
% \AND instead of \And before the third author name.

\author{%
Yao Shu$^{\dagger}$, Zhongxiang Dai$^{\dagger}$, Zhaoxuan Wu$^{\S}$, Bryan Kian Hsiang Low$^{\dagger}$\\
Dept. of Computer Science, National University of Singapore, Republic of Singapore$^{\dagger}$ \\
Institute of Data Science, National University of Singapore, Republic of Singapore$^\S$ \\
Integrative Sciences and Engineering Programme, NUSGS, Republic of Singapore$^\S$ \\
\texttt{\{shuyao,daizhongxiang,lowkh\}@comp.nus.edu.sg}$^{\dagger}$ \\
\texttt{wu.zhaoxuan@u.nus.edu}$^{\S}$
}

\begin{document}

\maketitle

\begin{abstract}
\emph{Neural architecture search} (NAS) has gained immense popularity owing to its ability to automate neural architecture design.
A number of training-free metrics are recently proposed to realize NAS without training, hence making NAS more scalable.
Despite their competitive empirical performances, a unified theoretical understanding of these training-free metrics is lacking.
As a consequence, \emph{(a)} the relationships among these metrics are unclear, \emph{(b)} there is no theoretical interpretation for their empirical performances, and \emph{(c)} there may exist untapped potential in existing training-free NAS, which probably can be unveiled through a unified theoretical understanding.
To this end, this paper presents a unified theoretical analysis of gradient-based training-free NAS, which allows us to
\emph{(a)} theoretically study their relationships, \emph{(b)} theoretically guarantee their generalization performances, and \emph{(c)} exploit our unified theoretical understanding to develop a novel framework named \emph{hybrid NAS} (HNAS) which consistently boosts training-free NAS in a principled way.
Remarkably, HNAS can enjoy the advantages of both training-free (i.e., the superior search efficiency) and training-based (i.e., the remarkable search effectiveness) NAS, which we have demonstrated through extensive experiments.
\end{abstract}

\section{Introduction}
\blankfootnote{Correspondence to: Zhongxiang Dai <daizhongxiang@comp.nus.edu.sg>.}
Recent years have witnessed a surging interest in applying \emph{deep neural networks} (DNNs) in real-world applications, e.g., machine translation \citep{machine-translation-survey}, object detection \citep{object-detection-survey}, among others. To achieve compelling performances in these applications, many domain-specific neural architectures have been handcrafted by human experts with considerable efforts. However, these efforts have gradually become unaffordable due to the growing demand for customizing neural architectures for different tasks. To this end, \emph{neural architecture search} (NAS) \citep{nas} has been proposed to design neural architectures automatically. While many \emph{training-based} NAS algorithms \citep{enas, darts} have achieved state-of-the-art (SOTA) performances in various tasks, their search costs usually are unaffordable in resource-constrained scenarios mainly due to their requirement for training DNNs during search. As a result, a number of training-free metrics have been developed to realize \emph{training-free NAS} \citep{naswot, te-nas}.
Surprisingly, these training-free NAS algorithms are able to achieve competitive empirical performances even compared with other training-based NAS algorithms while incurring significantly reduced search costs. Moreover, the architectures selected by these training-free NAS algorithms have been empirically found to transfer well to different tasks \citep{te-nas, nasi}.

Despite the impressive empirical performances of the NAS algorithms using training-free metrics, \emph{a unified theoretical analysis of these training-free metrics} is still lacking in the literature, leading to a few significant implications.
Firstly, the theoretical relationships of these training-free metrics are unclear, making it challenging to explain \emph{why they usually lead to comparable empirical results} \citep{zero-cost}.
Secondly, there is no theoretical guarantee for the empirically observed compelling performances of the architectures selected by NAS algorithms using these training-free metrics. As a consequence, the reason \emph{why NAS using these training-free metrics works well} is still not well understood, and hence there lacks theoretical assurances for NAS practitioners when deploying these algorithms.
To the best of our knowledge, the theoretical aspect of NAS with training-free metrics has only been preliminarily studied by \citet{nasi}. However, their analyses are only based on the training rather than generalization performances of different architectures and are restricted to a \emph{single} training-free metric.
Thirdly, there may exist untapped potential in existing training-free NAS algorithms, which probably can be unveiled through a unified theoretical understanding of their training-free metrics.
% That is, such a unified theoretical understanding may lead to a non-trivial NAS framework that could consistently boost existing training-free NAS algorithms.
% both theoretically and empirically.

To this end, we perform a unified theoretical analysis of \emph{gradient-based} training-free NAS to resolve all the three problems discussed above in this paper.
Firstly, we theoretically prove the connections among different gradient-based training-free metrics in Sec.~\ref{sec:connection}.
Secondly, based on these provable connections, we derive a unified generalization bound for DNNs with these metrics and then use it to provide principled interpretations for the compelling empirical performances of existing training-free NAS algorithms (Secs.~\ref{sec:generalization} and \ref{sec:generalization-nas}).
Moreover, we demonstrate that our theoretical interpretation for training-free NAS algorithms, surprisingly, displays the same preference of architecture topology (i.e., wide or deep) as training-based NAS algorithms under certain conditions (Sec.~\ref{sec:topology}), which helps to justify the practicality of our theoretical interpretations.
Thirdly, by exploiting our unified theoretical analysis, we develop a novel NAS framework named \emph{hybrid NAS} (HNAS) to consistently boost existing training-free NAS algorithms (Sec.~\ref{sec:hnas}) in a principled way.
Remarkably, through a theory-inspired combination with \emph{Bayesian optimization} (BO), our HNAS framework enjoys the advantages of both training-based (i.e., remarkable search effectiveness) and training-free (i.e., superior search efficiency) NAS simultaneously, making it more advanced than existing training-free and training-based NAS algorithms.
Lastly, we use extensive experiments to verify the insights derived from our unified theoretical analysis, as well as the search effectiveness and efficiency of our non-trivial HNAS framework (Sec.~\ref{sec:exps}).

\section{Related Works}

Recently, a number of training-free metrics have been proposed to estimate the generalization~performances of neural architectures, allowing the model training in NAS to be completely avoided.
For instance, \citet{naswot} have developed a heuristic metric using the correlation of activations in an initialized DNN. Meanwhile, \citet{zero-cost} have empirically revealed a large correlation between training-free metrics that were formerly applied in network pruning, e.g., SNIP \citep{snip} and GraSP \citep{grasp}, and the generalization performances of candidate architectures in the search space. 
These results hence indicate the feasibility of using training-free metrics to estimate the performances of candidate architectures in NAS. 
\citet{te-nas} have proposed a heuristic metric to trade off the trainability and expressibility of neural architectures in order to
find well-performing architectures in various NAS benchmarks.
\citet{knas} have applied the mean of the Gram matrix of gradients to quickly estimate the performances of architectures. More recently, \citet{nasi} have employed the theory of \emph{Neural Tangent Kernel} (NTK) \citep{ntk} to formally derive a performance estimator using the trace norm of the NTK matrix with initialized model parameters, which, surprisingly, is shown to be data- and label-agnostic. Though these existing works have demonstrated the feasibility of training-free NAS through their compelling empirical results, the reason as to \emph{why training-free NAS performs well in practice} and the answer to the question of \emph{how training-free NAS can be further boosted} remain mysteries in the literature. This paper aims to provide theoretically grounded answers to these two questions through a unified analysis of existing gradient-based training-free metrics.

\section{Notations and Backgrounds}\label{sec:notations}
\subsection{Neural Tangent Kernel}\label{sec:ntk}
To simplify the analysis in this paper, we consider a $L$-layer DNN with identical widths $n_1=\cdots=n_{L-1}=n$ and scalar output (i.e., $n_L=1$) based on the formulation of DNNs in \citep{ntk}. Let $f(\vx, \vtheta)$ be the output of a DNN with input $\vx \in \sR^{n_0}$ and parameters $\vtheta \in \sR^{d}$ that are initialized using the standard normal distribution, the NTK matrix $\rmTheta \in \sR^{m \times m}$ over a dataset of size $m$ is defined as
\begin{equation}
    \rmTheta(\vx, \vx'; \vtheta) = \nabla_{\vtheta}f(\vx,\vtheta)^{\top}\nabla_{\vtheta}f(\vx',\vtheta) \ .
\end{equation}

\citet{ntk} have shown that this NTK matrix $\rmTheta$ will finally converge to a deterministic form $\rmTheta_{\infty}$ in the infinitely wide DNN model. Meanwhile, \citet{exact-ntk, over-parameterization} have further proven that a similar result, i.e., $\rmTheta \approx \rmTheta_{\infty}$, can also be achieved in over-parameterized DNNs of finite width. Besides, \citet{exact-ntk, ntk-linear} have revealed that the training dynamics of DNNs can be well-characterized using this NTK matrix at initialization (i.e., $\rmTheta_0$ based on the initialized model parameters $\vtheta_0$) under certain conditions. More recently, \citet{tensor-program-2b} have further demonstrated that these conclusions about NTK matrix shall also hold for DNNs with any reasonable architecture, even including recurrent neural networks (RNNs) and graph neural networks (GNNs). Therefore, the conclusions drawn based on the formulation above in this paper are expected to be applicable to the NAS search spaces with complex architectures, which we will validate empirically.

\subsection{Gradient-Based Training-Free Metrics for NAS}\label{sec:metrics}
In this paper, we mainly focus on the study of those \emph{gradient-based} training-free metrics, i.e., the training-free metrics that are derived from the gradients of initialized model parameters, which we introduce below. Previous works have empirically shown that better model performances are usually associated with larger values of these training-free metrics in practice~\citep{zero-cost}.

\paragraph{Gradient norm of initialized model parameters.} 
While \citet{zero-cost} were the first to employ the gradient norm of initialized model parameters to estimate the generalization performance of candidate architectures, the same form has also been derived by \citet{nasi} to approximate their training-free metric efficiently. Following the notations in Sec.~\ref{sec:ntk}, let $\ell(\cdot, \cdot)$ be the loss function, we define the gradient norm over dataset $S=\{(\vx_i, y_i)\}_{i=1}^m$ as
\begin{equation}\label{eq:metric:gradnorm}
    \gM_{\gradnorm} \triangleq \left\|\frac{1}{m}\sum_{i=1}^m\nabla_{\vtheta} \ell(f(\vx_i, \vtheta_0), y_i) \right\|_2 \ .
\end{equation}

\paragraph{\snip\  and \grasp.}
$\snip$~\citep{snip} and $\grasp$~\citep{grasp} were originally proposed for training-free network pruning, and \citet{zero-cost} have applied them in training-free NAS to estimate the performances of candidate architectures without model training. Following the notations in Sec.~\ref{sec:ntk}, let $\rmH_i \in \sR^{d \times d}$ denote the hessian matrix induced by input $\vx_i$, the metrics of $\snip$ and $\grasp$ on dataset $S=\{(\vx_i, y_i)\}_{i=1}^m$ can be defined as
\begin{equation}
\begin{aligned}
    \gM_{\snip} \triangleq \left|\frac{1}{m}\sum_i^m\vtheta_0^{\top} \nabla_{\vtheta}\ell(f(\vx_i, \vtheta_0), y_i)\right|\ , \; 
    \gM_{\grasp} \triangleq \left|\frac{1}{m}\sum_i^m  \vtheta_0^{\top}\left(\rmH_i\nabla_{\vtheta}\ell(f(\vx_i, \vtheta_0), y_i)\right)\right| \ .
\end{aligned}
\end{equation}
Of note, we use the scaled (i.e, by $1/m$) absolute value of the original $\grasp$ metric in \citep{grasp} throughout this paper to match the mathematical form of other training-free metrics. 
% Meanwhile, the original $\grasp$ metric in \citep{grasp} is naturally upper bounded by $\gM_{\grasp}$, which thus does not affect the generality of the theoretical results in this paper.

\paragraph{Trace norm of NTK matrix at initialization.}
Recently, \citet{nasi} have reformulated NAS into a constrained optimization problem to maximize the trace norm of the NTK matrix at initialization. In addition, \citet{nasi} have empirically shown that this trace norm is highly correlated with the generalization performance of candidate architectures under their derived constraint. Let $\rmTheta_0$ be the NTK matrix based on initialized model parameters $\vtheta_0$ of a DNN, then without considering the constraint in \citep{nasi}, we frame this training-free metric on dataset $S=\{(\vx_i, y_i)\}_{i=1}^m$ as
\begin{equation}\label{eq:metric:tracenorm}
    \gM_{\tracenorm} \triangleq \sqrt{\|\rmTheta_0\|_{\tr} / m} \ .
\end{equation}

\section{Theoretical Analyses of Training-Free NAS}\label{sec:analyses}
\subsection{Connections among Training-Free Metrics}\label{sec:connection}
Notably, though the gradient-based training-free metrics introduced in Sec.~\ref{sec:metrics} seem to have distinct mathematical forms, most of them will actually achieve similar empirical performances in practice~\citep{zero-cost}. More interestingly, these metrics in fact share the similarity of using the gradients of initialized model parameters in their calculations. 
Based on these facts, we propose the following hypothesis to explain the similar performances achieved by different training-free metrics in Sec.~\ref{sec:metrics}: \emph{The training-free metrics in Sec.~\ref{sec:metrics} may be theoretically connected and hence could provide similar characterization for the generalization performances of neural architectures}. We validate this hypothesis affirmatively and use the following theorem to establish the theoretical connections among these metrics.

\begin{theorem}\label{th:connection}
Let the loss function $\ell(\cdot, \cdot)$ in gradient-based training-free metrics be $\beta$-Lipschitz~continuous and $\gamma$-Lipschitz smooth in the first argument. There exist the constant $C_1, C_2, C_3 >0$ such that the following holds with a high probability, 
\begin{equation*}
\gM_{\normalfont\gradnorm} \leq C_1\gM_{\normalfont\tracenorm}, \; \gM_{\normalfont\snip} \leq C_2\gM_{\normalfont\tracenorm}, \; \gM_{\normalfont\grasp} \leq C_3\gM_{\normalfont\tracenorm}\ .
\end{equation*}
\end{theorem}

The proof of Theorem~\ref{th:connection} are given in Appendix~\ref{sec:proof:connection}. Notably, our Theorem~\ref{th:connection} implies that with a high probability, architectures of larger $\gM_{\gradnorm}$, $\gM_{\snip}$ or $\gM_{\gradnorm}$ will also achieve a larger $\gM_{\tracenorm}$ given the inequalities above. That is, the value of $\gM_{\gradnorm}$, $\gM_{\snip}$ and $\gM_{\gradnorm}$ for different architectures in the NAS search space should be highly correlated with the value of $\gM_{\tracenorm}$. As a consequence, these training-free metrics should be able to provide similar estimation of the generalization performances of architectures (validated in Sec.~\ref{sec:exp:generalization}) and hence similar performances can be achieved when using these metrics (validated in Sec.~\ref{sec:exp:hnas}).
Overall, the training-free NAS metrics from Sec.~\ref{sec:metrics} can all be theoretically connected with $\gM_{\tracenorm}$ despite their distinct mathematical forms. Note that though our Theorem~\ref{th:connection} is only able to establish the theoretical connections between $\gM_{\tracenorm}$ and other training-free metrics, our empirical results in Appendix~\ref{sec:app:connection} further reveal that \textit{any two} training-free metrics from Sec.~\ref{sec:metrics} will also be highly correlated. Interestingly, these results also serve as principled justifications for the similar performances achieved by these training-free metrics in \citep{zero-cost}.

\subsection{A Generalization Bound Induced by Training-free Metrics}\label{sec:generalization}
Let dataset $S{=}\{(\vx_i, y_i)\}_{i=1}^m$ be randomly sampled from a data distribution $\gD$, we denote $\Ls_{S}(\cdot)$ as the training error on $S$ and $\Ls_{\gD}(\cdot)$ as the corresponding \emph{generalization error} on $\gD$. Intuitively, a smaller generalization error indicates a better \emph{generalization performance}.
Thanks to the common theoretical underpinnings of gradient-based training-free metrics formalized by Theorem~\ref{th:connection}, we can perform a \emph{unified generalization analysis} for DNNs in terms of these metrics by making use of the NTK theory \citep{ntk}. 
Define $\ell(f, y) \triangleq (f-y)^2/2$ and $\eta_0 \triangleq \min\{2n^{-1}(\lambda_{\min}(\vTheta_{\infty}) + \lambda_{\max}(\vTheta_{\infty}))^{-1}, m\lambda_{\max}^{-1}(\rmTheta_0)\}$ with $\lambda_{\min}(\cdot), \lambda_{\max}(\cdot)$ being the 
minimum and maximum eigenvalue of a matrix respectively, we derive the following theorem:

\begin{theorem}\label{th:generalization}
Assume $\left\|\vx_i\right\|_2 \leq 1$ and $f(\vx_i,\vtheta_0),\lambda_{\min}(\rmTheta_0),y_i \in [0,1]$ for any $(\vx_i, y_i) \in S$. There exists a constant $N \in \sN$ such that for any $n>N$, when applying gradient descent with learning rate $\eta < \eta_0$, the generalization error of $f_t$ at time $t>0$ can be bounded as below with a high probability,
\begin{equation*}
    \Ls_{\gD}(f_t) \leq \Ls_{S}(f_t) + \gO(\kappa / \gM) \ .
\end{equation*}
Here, $\gM$ can be any metric in Sec.~\ref{sec:metrics} and $\kappa \triangleq \lambda_{\max}(\rmTheta_0)/\lambda_{\min}(\rmTheta_0)$ is the condition number of~$\rmTheta_0$.
\end{theorem}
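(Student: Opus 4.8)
The plan is to prove the bound for the trace-norm metric $\gM_{\tracenorm}$ only, and then invoke Theorem~\ref{th:connection} to lift it to the remaining metrics. Indeed, Theorem~\ref{th:connection} gives $\gM_{\gradnorm}, \gM_{\snip}, \gM_{\grasp} = \gO(\gM_{\tracenorm})$, i.e. each of these metrics is at most a constant multiple of $\gM_{\tracenorm}$ with high probability; taking reciprocals yields $\kappa/\gM_{\tracenorm} = \gO(\kappa/\gM)$ for every other metric $\gM$ from Sec.~\ref{sec:metrics}. Hence a bound $\Ls_{\gD}(f_t) \le \Ls_S(f_t) + \gO(\kappa/\gM_{\tracenorm})$ is, by transitivity of $\gO(\cdot)$, automatically a valid bound $\Ls_{\gD}(f_t) \le \Ls_S(f_t) + \gO(\kappa/\gM)$ for any such $\gM$, so it suffices to treat $\gM_{\tracenorm}$.

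For that core bound I would use the over-parameterized NTK machinery. Since $n>N$, the empirical NTK at initialization is close to its limit, $\rmTheta_0 \approx \rmTheta_{\infty}$, and the constraint $\eta<\eta_0$ (whose two terms encode, respectively, the GD stability threshold $\propto(\lambda_{\min}(\vTheta_{\infty})+\lambda_{\max}(\vTheta_{\infty}))^{-1}$ and the empirical-NTK stability threshold $m/\lambda_{\max}(\rmTheta_0)$) keeps the GD trajectory in the lazy/linearized regime, where $f_t$ stays within a bounded ball around initialization in function space. I would then apply a Rademacher-complexity generalization bound for this linearized class (in the spirit of the fine-grained NTK analyses of Arora et al. and Cao \& Gu): with high probability, uniformly over the reachable functions,
\[
\Ls_{\gD}(f_t) - \Ls_S(f_t) \le \gO\!\left(\sqrt{\vy^\top \rmTheta_0^{-1}\vy / m}\right) + \gO\!\left(1/\sqrt{m}\right),
\]
where the dominant data-dependent term is the RKHS-norm-type complexity $\sqrt{\vy^\top\rmTheta_0^{-1}\vy/m}$ and the $\gO(1/\sqrt m)$ term is the usual concentration residual. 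The assumptions $\|\vx_i\|_2\le 1$ and $f(\vx_i,\vtheta_0),y_i\in[0,1]$ are precisely what keep the features and losses bounded so that these steps go through.

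The remaining step is to re-express this complexity term through $\gM_{\tracenorm}$. Using $y_i\in[0,1]$ we have $\|\vy\|_2^2\le m$, so $\vy^\top\rmTheta_0^{-1}\vy \le \|\vy\|_2^2/\lambda_{\min}(\rmTheta_0) \le m/\lambda_{\min}(\rmTheta_0)$, and the dominant term is $\gO(1/\sqrt{\lambda_{\min}(\rmTheta_0)})$. Since $\|\rmTheta_0\|_{\tr}=\sum_i\lambda_i(\rmTheta_0)\le m\,\lambda_{\max}(\rmTheta_0)$, we get $\gM_{\tracenorm}=\sqrt{\|\rmTheta_0\|_{\tr}/m}\le\sqrt{\lambda_{\max}(\rmTheta_0)}$, hence
\[
\frac{\kappa}{\gM_{\tracenorm}} \ge \frac{\lambda_{\max}(\rmTheta_0)/\lambda_{\min}(\rmTheta_0)}{\sqrt{\lambda_{\max}(\rmTheta_0)}} = \frac{\sqrt{\lambda_{\max}(\rmTheta_0)}}{\lambda_{\min}(\rmTheta_0)} \ge \frac{1}{\sqrt{\lambda_{\min}(\rmTheta_0)}},
\]
so the dominant term is $\gO(\kappa/\gM_{\tracenorm})$. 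Finally, the assumption $\lambda_{\min}(\rmTheta_0)\in[0,1]$ gives $1/\sqrt{\lambda_{\min}(\rmTheta_0)}\ge 1\ge 1/\sqrt m$, so the $\gO(1/\sqrt m)$ concentration term is absorbed into $\gO(\kappa/\gM_{\tracenorm})$, completing the bound for $\gM_{\tracenorm}$ and hence, by the first paragraph, for every metric.

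I expect the main obstacle to be the middle step: rigorously establishing the NTK-regime generalization inequality for deep, finite-width networks trained by gradient descent under the squared loss — in particular verifying that for $n>N$ and $\eta<\eta_0$ the trajectory stays close enough to initialization that $\rmTheta_0$ (rather than $\rmTheta_{\infty}$) governs both the optimization and the Rademacher complexity of the reachable class, and that the dominant term is exactly $\sqrt{\vy^\top\rmTheta_0^{-1}\vy/m}$. By comparison, the reduction via Theorem~\ref{th:connection} and the trace-norm/eigenvalue manipulations are routine.
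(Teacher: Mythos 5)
Your proposal matches the paper's proof in both structure and substance: the paper likewise establishes the bound only for $\gM_{\tracenorm}$ --- by coupling the trained network to its linearization (Lee et al.'s lemma, valid for $n>N$ and $\eta<\eta_0$), bounding the parameter displacement by $\sqrt{\widehat{\vy}^{\top}\rmTheta_0^{-1}\widehat{\vy}}$ with $\widehat{\vy}=\vy-f(\rmX,\vtheta_0)$, and applying the linear-class Rademacher bound plus the standard uniform-convergence inequality --- and then transfers to the remaining metrics exactly as you do, by inverting $\gM=\gO(\gM_{\tracenorm})$ from Theorem~\ref{th:connection}. The only real divergence is the final algebraic step: the paper turns $\widehat{\vy}^{\top}\rmTheta_0^{-1}\widehat{\vy}$ into $m^2\kappa/\gM_{\tracenorm}^2$ via the trace inequality $\|\rmTheta_0\|_{\tr}\,\|\rmTheta_0^{-1}\|_{\tr}\le m^2\kappa$, whereas you use the cruder bound $\widehat{\vy}^{\top}\rmTheta_0^{-1}\widehat{\vy}\le m/\lambda_{\min}(\rmTheta_0)$ together with $\gM_{\tracenorm}\le\sqrt{\lambda_{\max}(\rmTheta_0)}$; both routes (even after accounting for the extra $\sqrt{\kappa\lambda_{\min}(\rmTheta_0)}$ prefactor the paper carries from $\|\nabla_{\vtheta}f(\rmX,\vtheta_0)\|_{2,2}$) close the argument.
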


Its proof is in Appendix~\ref{sec:proof:th:generalization} and the second term $\gO(\kappa / \gM)$ in Theorem \ref{th:generalization} represents the \emph{generalization gap} of DNN models.
Notably, our Theorem \ref{th:generalization} provides an explicit theoretical connection between the gradient-based training-free metrics from Sec.~\ref{sec:metrics} and the generalization gap of DNNs, which later serves as the foundation to theoretically interpret the compelling performances achieved by existing training-free NAS algorithms (Sec.~\ref{sec:generalization-nas}). Compared to the traditional Rademacher complexity \citep{foundation}, these training-free metrics provide alternative methods to measure the complexity of DNNs when estimating the generalization gap of DNNs.

\subsection{Concrete Generalization Guarantees for Training-Free NAS}\label{sec:generalization-nas}

Since the $\Ls_{S}(\cdot)$ in our Theorem \ref{th:generalization} may also depend on the training-free metric $\gM$, it also needs to be taken into account when analyzing the generalization performance (or the generalization error $\Ls_{\gD}(\cdot)$) for training-free NAS methods.
To this end, in this section, we derive concrete generalization guarantees for NAS methods using training-free metrics by considering two different scenarios (i.e., the \emph{realizable} and \emph{non-realizable} scenarios) for the training error term $\Ls_{S}(\cdot)$ in Theorem \ref{th:generalization}, which finally give rise to principled interpretations for different training-free NAS methods \citep{te-nas, nasi, zero-cost}.

\paragraph{The realizable scenario.} 
Similar to \citep{foundation}, we assume that a zero training error (i.e., $\Ls_{S}(f_t) \rightarrow 0$ when $t$ is sufficiently large) can be achieved in the realizable scenario.
By further assuming that the condition number $\kappa$ in Theorem~\ref{th:generalization} is bounded by $\kappa_0$ for all candidate architectures in the search space, we can then derive the following generalization guarantee (Corollary \ref{corol:generalization-realizable}) for the realizable scenario.

\begin{corollary}\label{corol:generalization-realizable}
Under the conditions in Theorem~\ref{th:generalization}, for $f_t$ at convergence (i.e., $t \rightarrow \infty$) in the realizable scenario and for any training-free metric $\gM$ from Sec.~\ref{sec:metrics}, the following holds with a high probability,
\begin{equation*}
    \Ls_{\gD}(f_t) \leq \gO(1 / \gM) \ .
\end{equation*}
\end{corollary}

Corollary~\ref{corol:generalization-realizable} is obtained by introducing $\Ls_{S}(f_t)=0$ and $\kappa \leq \kappa_0$ into Theorem~\ref{th:generalization}. Importantly,~Corollary~\ref{corol:generalization-realizable} suggests that in the realizable scenario, the generalization error of DNNs is negatively correlated with the metrics from Sec.~\ref{sec:metrics}. That is, an architecture with a larger value of training-free metric $\gM$ generally achieves an improved generalization performance. This implies that in order to select well-performing architectures, we can simply maximize $\gM$ to find $\gA^* = \argmax_{\gA} \gM(\gA)$ where $\gA$ denotes any architecture in the search space.
Interestingly, this formulation aligns with the training-free NAS method from \citep{zero-cost}, which has made use of the metrics $\gM_{\normalfont\gradnorm},\gM_{\normalfont\snip}$ and $\gM_{\normalfont\grasp}$ to achieve good empirical performances. 
Therefore, our Corollary \ref{corol:generalization-realizable} provides a valid generalization guarantee and also a principled justification for the method from \citep{zero-cost}.

\paragraph{The non-realizable scenario.} 
In practice, different candidate architectures in a NAS search space typically have diverse non-zero training errors \citep{nasi} and $\kappa$ \citep{te-nas}. 
Therefore, the assumptions of the zero training error and the bounded $\kappa$ in the realizable scenario above may be impractical.
In light of this, we drop these two assumptions and derive the following generalization guarantee (Corollary~\ref{corol:generalization-non-realizable}) for the non-realizable scenario, which, interestingly, facilitates theoretically grounded interpretations for the training-free NAS methods from \citep{nasi, te-nas}.

\begin{corollary}\label{corol:generalization-non-realizable}
Under the conditions in Theorem \ref{th:generalization}, for any $f_t$ at time $t>0$ and any training-free metric $\gM$ from Sec.~\ref{sec:metrics} in the non-realizable~scenario, there exists a constant $C>0$ such that with a high probability,
\begin{equation*}
    \Ls_{\gD}(f_t) \leq \frac{1}{2}\left(m - \eta\gM^2 / C\right)^{2t} + \gO(\kappa/\gM) \ .
\end{equation*}
\end{corollary}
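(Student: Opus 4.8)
The plan is to treat this as a refinement of Theorem~\ref{th:generalization} together with an explicit training-error bound. Theorem~\ref{th:generalization} already supplies $\Ls_{\gD}(f_t) \le \Ls_{S}(f_t) + \gO(\kappa/\gM)$, so the second summand in the claim is inherited verbatim. The only new content, relative to the realizable Corollary~\ref{corol:generalization-realizable}, is that we may no longer set $\Ls_{S}(f_t)=0$; instead I must produce an explicit geometric bound on $\Ls_{S}(f_t)$ valid at every finite time $t$. Thus the whole proof reduces to establishing $\Ls_{S}(f_t) \le \tfrac12 m\,(1-\eta\gM^2/(mc))^{2t}$ for some architecture-dependent constant $c>0$, and then adding the inherited generalization-gap term.

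To bound the training error I would work in the NTK regime used throughout the paper (valid for $n>N$, with the finite-width deviation $\rmTheta_0\approx\rmTheta_{\infty}$ absorbed as a lower-order term). Under gradient descent on $\ell(f,y)=(f-y)^2/2$, the residual vector $\vr_t$ with entries $f_t(\vx_i)-y_i$ evolves linearly as $\vr_t=(I-\eta\rmTheta_0)^t\vr_0$, and $\Ls_{S}(f_t)=\tfrac12\|\vr_t\|_2^2$. The learning-rate condition $\eta<\eta_0$ keeps every contraction factor $1-\eta\lambda_i(\rmTheta_0)$ inside $[0,1)$, so the slowest-contracting mode carries $\lambda_{\min}(\rmTheta_0)$ and
\begin{equation*}
\Ls_{S}(f_t)=\tfrac12\|\vr_t\|_2^2 \le \tfrac12\bigl(1-\eta\lambda_{\min}(\rmTheta_0)\bigr)^{2t}\|\vr_0\|_2^2 \le \tfrac12 m\bigl(1-\eta\lambda_{\min}(\rmTheta_0)\bigr)^{2t},
\end{equation*}
where the last inequality uses $\|\vr_0\|_2^2\le m$, itself a consequence of $f_0(\vx_i),y_i\in[0,1]$.

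It then remains to express the contraction rate through the metric. The cleanest anchor is $\gM_{\gradnorm}$: since $\nabla_{\vtheta}\ell(f(\vx_i,\vtheta_0),y_i)=(f_0(\vx_i)-y_i)\nabla_{\vtheta}f(\vx_i,\vtheta_0)$ for the squared loss, one obtains the identity $m^2\gM_{\gradnorm}^2=\vr_0^\top\rmTheta_0\vr_0$, and hence $m^2\gM_{\gradnorm}^2\le\lambda_{\max}(\rmTheta_0)\|\vr_0\|_2^2\le m\,\lambda_{\max}(\rmTheta_0)$. Dividing by the condition number $\kappa$ yields $\lambda_{\min}(\rmTheta_0)=\lambda_{\max}(\rmTheta_0)/\kappa\ge m\gM_{\gradnorm}^2/\kappa$, so $1-\eta\lambda_{\min}(\rmTheta_0)\le 1-\eta\gM_{\gradnorm}^2/(mc)$ with $c=\kappa/m^2$. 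Substituting into the display proves the claim for $\gM=\gM_{\gradnorm}$, and Theorem~\ref{th:connection}, together with the mutual comparability of the metrics, transfers the same bound to $\gM_{\snip},\gM_{\grasp},\gM_{\tracenorm}$ after rescaling $c$ by the corresponding constant.

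The hard part is exactly this last step. The decay is dictated by the \emph{slowest} mode, i.e.\ by $\lambda_{\min}(\rmTheta_0)$, whereas the metrics (especially $\gM_{\tracenorm}=\sqrt{\|\rmTheta_0\|_{\tr}/m}$) reflect the trace, hence the average, of the spectrum; bridging the two unavoidably routes through $\lambda_{\max}(\rmTheta_0)$ and the condition number. This is precisely why $c$ must be allowed to depend on the architecture through $\kappa$ and cannot be universal once the bounded-$\kappa$ assumption of the realizable case is dropped, mirroring the fact that $\kappa$ also survives explicitly in the inherited $\gO(\kappa/\gM)$ term. Two subsidiary points need care: ensuring $\lambda_{\min}(\rmTheta_0)>0$ with high probability (overparametrization makes $\rmTheta_0$ full rank), and verifying that the finite-width deviation of the true dynamics from the linearized NTK dynamics remains lower-order at time $t$, so that it does not corrupt the geometric factor.
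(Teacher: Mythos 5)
Your overall decomposition matches the paper's: keep the $\gO(\kappa/\gM)$ term from Theorem~\ref{th:generalization}, bound $\Ls_S(f_t)$ by the training error of the linearized model up to an $\gO(1/\sqrt{n})$ correction, and show the linearized training error decays geometrically. Where you diverge is the step that converts the decay rate into the metric, and this is where the substantive problem lies. You control the decay by the slowest mode, $\Ls_S(f^{\lin}_t)\le\frac12 m(1-\frac{\eta}{m}\lambda_{\min}(\rmTheta_0))^{2t}$, and then buy the inequality $\lambda_{\min}(\rmTheta_0)\ge m\gM_{\gradnorm}^2/\kappa$ by passing through $\lambda_{\max}$ and dividing by the condition number. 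This forces $c$ to scale with $\kappa$ (and with $m$), i.e., $c$ is not a constant across architectures. The corollary's downstream use --- the trade-off threshold $\gM=\sqrt{mc/\eta}$ and the NAS objective \eqref{eq:nas-final}, where $c$ is absorbed into architecture-independent hyperparameters $\mu,\nu$ while $\kappa$ survives only in the separate $\gO(\kappa/\gM)$ term --- requires $c$ to be the metric-comparison constant from Theorem~\ref{th:connection}, not something that varies with each architecture's spectrum. The paper avoids this entirely: it keeps the full spectral sum $\frac12\sum_i(1-\frac{\eta}{m}\lambda_i)^{2t}$, observes that the average contraction deficit is \emph{exactly} the trace-norm metric, $\frac{1}{m}\sum_i(1-\frac{\eta}{m}\lambda_i)=1-\frac{\eta}{m}\gM_{\tracenorm}^2$, collapses the sum to $\frac12 m(1-\frac{\eta}{m}\gM_{\tracenorm}^2)^{2t}$ via a Jensen-type step, and only then invokes Theorem~\ref{th:connection} to replace $\gM_{\tracenorm}$ by any other metric --- so $c$ is precisely the $\gM^2\le c\,\gM_{\tracenorm}^2$ constant and carries no $\kappa$.

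A second, smaller issue: your transfer to the other metrics goes the wrong way. You anchor at $\gM_{\gradnorm}$ and invoke Theorem~\ref{th:connection}, but that theorem only gives upper bounds of the form $\gM=\gO(\gM_{\tracenorm})$; from $\lambda_{\min}\ge m\gM_{\gradnorm}^2/\kappa$ together with $\gM_{\gradnorm}\le\beta\gM_{\tracenorm}$ you cannot conclude a lower bound on $\lambda_{\min}$ in terms of $\gM_{\tracenorm}$. The fix within your own scheme is to anchor at $\gM_{\tracenorm}$ directly (using $\gM_{\tracenorm}^2=\frac1m\|\rmTheta_0\|_{\tr}\le\lambda_{\max}(\rmTheta_0)$) and then use Theorem~\ref{th:connection} to push the bound to $\gM_{\gradnorm},\gM_{\snip},\gM_{\grasp}$ --- the same order of operations the paper uses --- but even after that repair you are still left with the $\kappa$-dependent $c$. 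In short: your per-step inequalities are individually sound (and your $\lambda_{\min}$ control of the decay is arguably more careful than the paper's Jensen step), but the statement you end up proving has a materially weaker constant than the one the corollary needs, and the route that recovers the paper's constant is to identify $\gM_{\tracenorm}^2$ with the average eigenvalue rather than to detour through $\lambda_{\min}$ and $\kappa$.
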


Its proof is given in Appendix~\ref{sec:proof:corol:generalization-non-realizable}. Notably, our Corollary \ref{corol:generalization-non-realizable} suggests that when $\gM \in [0, \sqrt{mC/\eta}]$, an architecture with a larger value of the metric $\gM$ will lead to a better generalization performance because such a model has both a faster convergence 
(i.e., the first term decreases faster w.r.t time $t$)
and a smaller generalization gap (i.e., the second term is smaller). 
Interestingly, \citet{nasi} have leveraged this insight to introduce the training-free metric of $\gM_{\tracenorm}$ with a constraint, which has achieved a higher correlation with the generalization performance of architectures than the metrics from \citep{zero-cost}.
This therefore implies that our Corollary~\ref{corol:generalization-non-realizable} followed by \citep{nasi} provides a better characterization of the generalization performance of architectures than Corollary~\ref{corol:generalization-realizable} followed by \citep{zero-cost} since the non-realizable scenario we have considered will be more realistic than the realizable scenario as explained above.
Meanwhile, Corollary~\ref{corol:generalization-non-realizable} also suggests that there exists a trade-off in terms of $\gM$ between the model convergence (i.e., the first term) and the generalization gap (i.e., the second term) when $\gM>\sqrt{mC/\eta}$, which surprisingly is similar to the empirically motivated trainability and expressivity trade-off in \citep{te-nas}.
In addition, Corollary~\ref{corol:generalization-non-realizable} also indicates that for architectures achieving similar values of $\gM$, the ones with smaller condition numbers $\kappa$ generally achieve better generalization performance.
Interestingly, such a result also aligns with the conclusion from \citep{te-nas}. Therefore, our Corollary~\ref{corol:generalization-non-realizable} also provides a principled justification for the training-free NAS method in \citep{te-nas}.

\subsection{Connection to Architecture Topology}\label{sec:topology}
Interestingly, we can prove that the condition number $\kappa$ in our Corollary~\ref{corol:generalization-non-realizable} is theoretically related to the architecture topology, i.e., whether the architecture is wide (and shallow) or deep (and narrow), to further support the practicality and the superiority of our Corollary \ref{corol:generalization-non-realizable}. In particular, inspired by the theoretical analysis from \citep{understand-nas}, we firstly analyze the eigenvalues of the NTK matrices of two different architecture topologies (i.e., wide vs. deep architectures), which gives us an insight into the difference between their corresponding $\kappa$. We mainly consider the following wide (i.e., $f$) and deep (i.e., $f'$) architecture illustrated in Figure~\ref{fig:topology}, respectively:
\begin{equation}
\begin{aligned}
f(\vx) = \textstyle \vone^{\top}\sum_{i=1}^L \rmW^{(i)}\vx\ , \; f'(\vx) = \vone^{\top}(\prod_{i=1}^L \rmW^{(i)})\vx \label{eq:wide-deep}
\end{aligned}
\end{equation}
where $\rmW^{(i)} \in \sR^{n \times n}$ for any $i \in \{1,\cdots,L\}$ and every element of $\rmW^{(i)}$ is independently initialized using the standard normal distribution. Here, $\vone$ denotes an $n$-dimensional vector with every element being one. Let $\rmTheta_0$ and $\rmTheta'_0$ be the NTK matrices of $f$ and $f'$ that are evaluated on the finite dataset $S=\{(\vx_i, y_i)\}_{i=1}^m$, respectively, we derive the following theorem:

\begin{theorem}\label{th:topology}
Let dataset $S$ be normalized using its statistical mean and covariance such that $\E[\vx]=0$ and $\rmX^{\top}\rmX = \rmI$ given $\rmX \triangleq [\vx_1 \vx_2\cdots\vx_m]$, we have
\begin{equation*}
\begin{aligned}
\rmTheta_0 = Ln\cdot \rmI\ , \; \E\left[\rmTheta'_0\right] = Ln^L  \cdot \rmI \ .
\end{aligned}
\end{equation*}
\end{theorem}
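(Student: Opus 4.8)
The plan is to compute both NTK matrices directly from the definition $\rmTheta_0(\vx,\vx') = \nabla_\vtheta f(\vx)^\top \nabla_\vtheta f(\vx')$, exploiting the fact that for these linear networks the per-layer gradients have a clean closed form. Since the normalized dataset satisfies $\rmX^\top\rmX = \rmI$, it suffices to show that the $(i,j)$ entry of each NTK equals the claimed scalar times $\vx_i^\top\vx_j$; the matrix identity then follows at once. For the wide network $f(\vx) = \vone^\top\sum_{i=1}^L \rmW^{(i)}\vx$, I would first observe that $\vone^\top\rmW^{(k)}\vx = \sum_{a,b}W^{(k)}_{ab}x_b$, so $\partial f/\partial W^{(k)}_{ab} = x_b$ independently of the parameters. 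Hence the layer-$k$ gradient is the fixed matrix $\vone\vx^\top$, and
\begin{equation*}
\rmTheta_0(\vx,\vx') = \sum_{k=1}^L \sum_{a,b} x_b x'_b = \sum_{k=1}^L n\,(\vx^\top\vx') = Ln\,(\vx^\top\vx')\ ,
\end{equation*}
where the sum over $a$ contributes the factor $n$. Because this gradient carries no dependence on the random weights, the identity holds deterministically, giving $\rmTheta_0 = Ln\cdot\rmX^\top\rmX = Ln\cdot\rmI$ with no expectation required.

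The deep network is the substantive part. Writing $f'(\vx)=\vone^\top\rmW^{(1)}\cdots\rmW^{(L)}\vx$ and setting $\vu^{(k)} \triangleq (\rmW^{(1)}\cdots\rmW^{(k-1)})^\top\vone$ and $\vv^{(k)}(\vx) \triangleq \rmW^{(k+1)}\cdots\rmW^{(L)}\vx$, the gradient with respect to $\rmW^{(k)}$ is the outer product $\vu^{(k)}\vv^{(k)}(\vx)^\top$. Using $\langle \va\vb^\top, \vc\vd^\top\rangle = (\va^\top\vc)(\vb^\top\vd)$, the NTK decomposes as
\begin{equation*}
\rmTheta'_0(\vx,\vx') = \sum_{k=1}^L \|\vu^{(k)}\|^2\,\big(\vv^{(k)}(\vx)^\top\vv^{(k)}(\vx')\big)\ .
\end{equation*}
The key structural observation is that $\vu^{(k)}$ depends only on $\rmW^{(1)},\dots,\rmW^{(k-1)}$ while $\vv^{(k)}$ depends only on $\rmW^{(k+1)},\dots,\rmW^{(L)}$; these index sets are disjoint and $\rmW^{(k)}$ appears in neither, so the two factors are independent and the expectation of each summand factorizes. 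I would then propagate second moments through the Gaussian matrices using the two identities $\E[\|\rmW^\top\vz\|^2] = n\,\E[\|\vz\|^2]$ and $\E[(\rmW\va)^\top(\rmW\vb)] = n\,\E[\va^\top\vb]$ (both following from $\E[\rmW^\top\rmW]=n\rmI$), applied to $\vz,\va,\vb$ independent of $\rmW$. Via the recursions $\vu^{(k)} = \rmW^{(k-1)\top}\vu^{(k-1)}$ and $\vv^{(k)}(\vx) = \rmW^{(k+1)}\vv^{(k+1)}(\vx)$, starting from $\|\vone\|^2 = n$ and iterating $k-1$ times gives $\E[\|\vu^{(k)}\|^2] = n^k$, while starting from $\vv^{(L)}(\vx)=\vx$ and iterating $L-k$ times gives $\E[\vv^{(k)}(\vx)^\top\vv^{(k)}(\vx')] = n^{L-k}(\vx^\top\vx')$. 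Summing over layers yields $\E[\rmTheta'_0(\vx,\vx')] = \sum_{k=1}^L n^k n^{L-k}(\vx^\top\vx') = Ln^L(\vx^\top\vx')$, whence $\E[\rmTheta'_0] = Ln^L\rmI$.

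The main obstacle is organizing the deep-network computation: establishing the independence split cleanly so the product expectation factorizes, and verifying the per-matrix second-moment recursions multiply the norm and the inner product by exactly $n$ at each step with correct exponent bookkeeping, so that layer $k$ contributes $n^k\cdot n^{L-k}=n^L$ regardless of $k$ and the sum collapses to $Ln^L$. The wide case and the final substitution $\rmX^\top\rmX=\rmI$ are routine by comparison.
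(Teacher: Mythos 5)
Your proposal is correct and follows essentially the same route as the paper: write the per-layer gradient of the deep network as an outer product of an upstream and a downstream factor, use independence of the disjoint weight blocks together with $\E[\rmW^\top\rmW]=n\rmI$ to propagate second moments so each layer contributes $n^{k}\cdot n^{L-k}=n^{L}$, and handle the wide network deterministically since its gradients are weight-independent. The only difference is cosmetic bookkeeping (the paper splits the factor of $n$ out as a sum over rows, getting $n\cdot n^{i-1}\cdot n^{L-i}$ per layer), so there is nothing further to reconcile.
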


Its proof is in Appendix~\ref{sec:proof:th:topology}. 
Notably, Theorem~\ref{th:topology} shows that the NTK matrix of the wide architecture in \eqref{eq:wide-deep} is guaranteed to be a scaled identity matrix, whereas the NTK matrix of the deep architecture in \eqref{eq:wide-deep} is a scaled identity matrix \emph{only in expectation} over random initialization. Consequently, we always have $\kappa=1$ for the initialized wide architecture, while $\kappa>1$ with high probability for the initialized deep architecture. 
Also note that as we have discussed in Sec.~\ref{sec:generalization-nas}, our Corollary \ref{corol:generalization-non-realizable} shows that (given similar values of $\gM$) an architecture with a smaller $\kappa$ is likely to generalize better.
Therefore, this implies that wide architectures generally achieve better generalization performance than deep architectures (given similar values of $\gM$).
This, surprisingly, aligns with the findings from \citep{understand-nas} which shows that wide architectures are preferred in \emph{training-based NAS} due to their competitive performances in practice, thus further implying that our Corollary \ref{corol:generalization-non-realizable} is more practical and superior to our Corollary \ref{corol:generalization-realizable}. More interestingly, based on the definition of $\gM_{\tracenorm}$~\eqref{eq:metric:tracenorm}, Theorem~\ref{th:topology} also indicates that deep architectures are expected to have larger values of $\gM_{\tracenorm}$ (due to the larger scale of $\E\left[\rmTheta'_0\right]$ for deep architectures) and hence achieve larger model complexities than wide architectures.

\section{Hybrid Neural Architecture Search}\label{sec:hnas}
\subsection{A Unified Objective for Training-Free NAS}
Our theoretical understanding of training-free NAS in Sec.~\ref{sec:analyses} finally allows us to address the following question in a principled way: \emph{How can we consistently boost existing training-free NAS algorithms?}
Specifically, to realize this target, we propose to select well-performing architectures by minimizing the upper bound on the generalization error in Corollary~\ref{corol:generalization-non-realizable} given any training-free metric from Sec.~\ref{sec:metrics}.
We expect this choice to lead to improved performances over the method from \citep{zero-cost} because Corollary~\ref{corol:generalization-non-realizable} provides a more practical generalization guarantee for training-free NAS than Corollary~\ref{corol:generalization-realizable} followed by \citep{zero-cost} (Sec.~\ref{sec:generalization-nas}).
Formally, let $\gA$ be any architecture in the search space and let $\gM$ be any training-free metric from Sec.~\ref{sec:metrics}, then NAS problem can be formulated below in a unified manner:
\begin{equation}
\begin{aligned}
\min_{\gA} \frac{1}{2}\left(m - \eta\gM^2(\gA)/C\right)^{2t} + \gO\left(\kappa(\gA) / \gM(\gA)\right) \ . \label{eq:nas-non-realizable}
\end{aligned}
\end{equation}
We further reformulate \eqref{eq:nas-non-realizable} into the following form:
\begin{equation}
\begin{gathered}
\min_{\gA} \kappa(\gA) / \gM(\gA) + \mu F(\gM^2(\gA) - \nu) \label{eq:nas-final}
\end{gathered}
\end{equation}
where $F(x) \triangleq x^{2t}$, and $\mu$ and $\nu$ are hyperparameters we introduced to absorb the impact of all other parameters in \eqref{eq:nas-non-realizable}. Compared with the diverse form of NAS objectives in \citep{te-nas, nasi, zero-cost}, our \eqref{eq:nas-final} presents a non-trivial unified form of NAS objectives for all the training-free metrics from Sec.~\ref{sec:metrics}, making it easier for practitioners to deploy NAS with different types of evaluated training-free metrics. Our NAS objective in \eqref{eq:nas-final} is a natural consequence of our generalization guarantee in Corollary~\ref{corol:generalization-non-realizable} and therefore will be more theoretically grounded, in contrast to the heuristic objective in \citep{te-nas}. Moreover, our \eqref{eq:nas-final} advances the training-free NAS method based on $\gM_{\tracenorm}$ from \citep{nasi}, because our \eqref{eq:nas-final} \emph{(a)} is derived 
from the generalization error instead of the training error (that is followed by \citep{nasi}) of DNNs, which therefore will be more sound and practical, \emph{(b)} have unified all the gradient-based training-free metrics from Sec.~\ref{sec:metrics}, and \emph{(c)} have considered the impact of condition number $\kappa$ which is shown to be critical in practice (see our Appendix~\ref{sec:app:generalization}). Above all, our unified NAS objective in \eqref{eq:nas-final} is expected to be able to lead to improved performances over other existing training-free NAS methods.

\subsection{Optimization and Search Algorithm}
\begin{algorithm}[t]
  \caption{Hybrid Neural Architecture Search (HNAS)}
  \label{alg:hnas}
\begin{algorithmic}[1]
  \STATE {\bfseries Input:} Training and validation dataset, metric $\gM$ evaluated on architecture pool $\gP$, $F(\cdot)$ for \eqref{eq:nas-final}, evaluation history $\gH_{0}=\varnothing$, a BO algorithm $\gB$, number of iterations/queries $K$
  \FOR{iteration $k=1, \ldots, K$}
  \STATE Choose $\mu_k, \nu_k$ using the BO algorithm $\gB$
  \STATE Obtain the optimal candidate $\gA_k^*$ in $\gP$ by solving \eqref{eq:nas-final}
  \STATE Evaluate the validation performance of $\gA_k^*$, e.g., $\gL_{\text{val}}(\gA_k^*)$ after training $\gA_k^*$
  \STATE Update the GP surrogate in the BO algorithm $\gB$ using the evaluation history $\gH_{k} = \gH_{k-1} \bigcup \left\{\left((\mu_k, \nu_k), \gL_{\text{val}}(\gA_k^*)\right)\right\}$
  \ENDFOR
  \STATE Select the final $\gA^*$ with the best validation performance, e.g., $\gA^* = \argmin_{\gA \in \{\gA_k^*\}_{k=1}^K} \gL_{\text{val}}(\gA)$
\end{algorithmic}
\end{algorithm}

Our theoretically motivated NAS objective in \eqref{eq:nas-final} has unified all training-free metrics from Sec.~\ref{sec:metrics} and improved over existing training-free NAS methods. However, its practical deployment requires the determination of the hyperparameters $\mu$ and $\nu$,\footnote{Of note, we usually fix $t=1$, which is already reasonably good for $F(\cdot)$. So, the practical deployment of \eqref{eq:nas-final} will mainly be affected by the choice of $\mu$ and $\nu$.} which can be non-trivial in practice.
To this end, we further introduce \emph{Bayesian optimization} (BO) \citep{bo} to optimize the hyperparameters $\mu$ and $\nu$ in order \emph{to maximize the true validation performance} of the architectures selected by different $\mu$ and $\nu$.
In particular, BO uses a Gaussian process (GP) as a surrogate to model the objective function (i.e., the validation performance here) in order to sequentially choose the queried inputs (i.e., the values of $\mu$ and $\nu$).
This finally completes our theoretically grounded NAS framework called \emph{hybrid NAS} (HNAS), which not only novelly unifies all training-free metrics from Sec.~\ref{sec:metrics} but also boosts NAS algorithms based on these training-free metrics in a principled way (Algorithm \ref{alg:hnas}).
% \footnote{The pool $\gP$ in Algorithm \ref{alg:hnas} can also be a reduced search space for a high evaluation efficiency of the metrics from Sec.~\ref{sec:metrics}.}

Specifically, in every iteration $k$ of HNAS, we firstly select the optimal candidate $\gA_k^*$ by maximizing our training-free NAS objective in \eqref{eq:nas-final} using the values of $\mu$ and $\nu$ queried by the BO algorithm in the current iteration (line 3-4 of Algorithm \ref{alg:hnas}). Next, we evaluate the validation performance of $\gA_k^*$ (e.g., validation error $\gL_{\text{val}}(\gA_k^*)$) and then use it to update the GP surrogate that is applied in the BO algorithm (line 5-6 of Algorithm \ref{alg:hnas}), which then will be used to choose the values of $\mu$ and $\nu$ in the next iteration.
After HNAS completes, the final selected architecture is chosen as the one achieving the best validation performance among all the optimal candidates, e.g., $\gA^* = \argmin_{\gA \in \{\gA_k^*\}_{k=1}^K} \gL_{\text{val}}(\gA)$ (see Appendix \ref{sec:app:opt-details} for more optimization details of Algorithm \ref{alg:hnas}).
Thanks to the utilization of validation performance as the objective for BO, our HNAS is expected to be able to enjoy the advantages of both training-free (i.e., the superior search efficiency) and training-based NAS (i.e., the remarkable search effectiveness) as supported by our extensive empirical results in Sec.~\ref{sec:exp:hnas}. In addition, by novelly introducing BO to optimize the low-dimensional continuous hyperparameters $\mu$ and $\nu$ rather than the high-dimensional discrete architectural hyperparameters in the NAS search space, HNAS is able to avoid the issues of high-dimensional discrete optimization that standard BO algorithms usually attain when they are directly applied for NAS \citep{bonas}, allowing HNAS to be more efficient and effective in practice as empirically supported in our Sec.~\ref{sec:exp:hnas}.

\section{Experiments}\label{sec:exps}
\subsection{Connections among Training-Free Metrics}\label{sec:exp:connection}

\begin{figure}[t]
%\vskip 0.1in
\centering
\begin{tabular}{cc}
    \hspace{-2mm}\includegraphics[width=0.50\columnwidth]{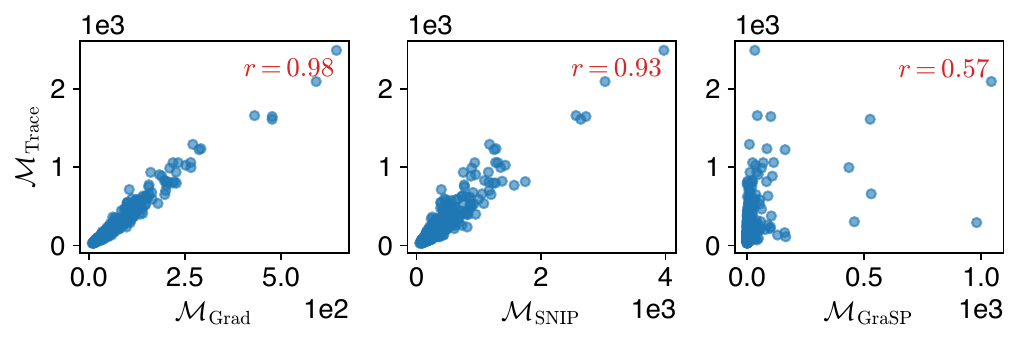} & \hspace{-4mm}\includegraphics[width=0.50\columnwidth]{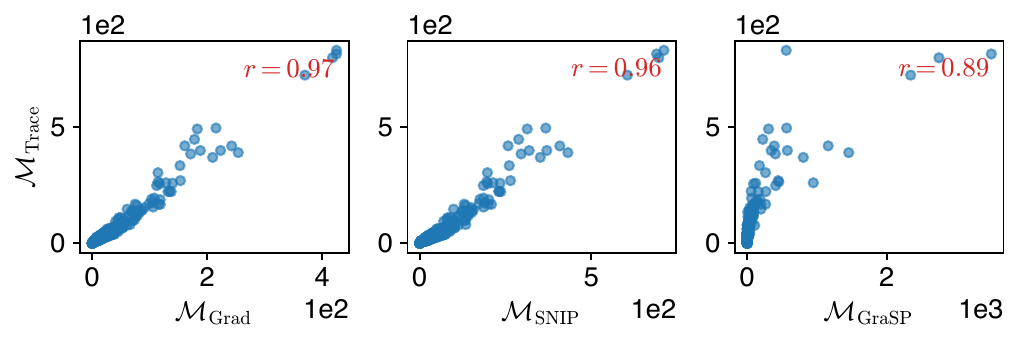} \\
    {(a) NAS-Bench-101} & {(b) NAS-Bench-201}
\end{tabular}
\caption{Spearman correlation between $\gM_{\tracenorm}$ and other training-free metrics from Sec.~\ref{sec:metrics}, which are evaluated in NAS-Bench-101/201. The correlation coefficient $r$ is given in the corner of each plot.}
\label{fig:connection}
\vskip -0.15in
\end{figure}

We firstly validate the theoretical connections between $\gM_{\tracenorm}$ and other training-free metrics from Sec. \ref{sec:metrics} by examining their Spearman correlations for all architectures in NAS-Bench-101 \citep{nasbench101} and NAS-Bench-201 \citep{nasbench201} with CIFAR-10 \citep{cifar}. Figure~\ref{fig:connection} illustrates the result where all these training-free metrics are evaluated using a batch (with size 64) of sampled data following that of \citep{zero-cost}. Of note, we will follow the same approach to evaluate these training-free metrics in our following sections.
The results in Figure~\ref{fig:connection} show that $\gM_{\tracenorm}$ and other training-free metrics from Sec.~\ref{sec:metrics} are indeed highly correlated since they consistently achieve high positive correlations in different search spaces. These empirical results actually align with the interpretation of our Theorem~\ref{th:connection} (Sec.~\ref{sec:connection}). Moreover, the correlation between any two training-free~metrics from Sec.~\ref{sec:metrics} is in Appendix~\ref{sec:app:connection}, which further verifies the connection among all these training-free metrics. Above all, in addition to the theoretical justification in our Theorem~\ref{th:connection}, our empirical results have also supported the connections among all the training-free metrics from Sec.~\ref{sec:metrics}.

\begin{figure}[!t]
\begin{minipage}[t]{0.50\linewidth}
% \vskip -0.15in
\renewcommand\multirowsetup{\centering}
\captionof{table}{Correlation coefficients between the test errors evaluated on CIFAR-10 and the generalization guarantees in Sec.~\ref{sec:generalization-nas} for the architectures in NAS-Bench-101/201.}
\label{tab:generalization}
\centering
\resizebox{\columnwidth}{!}{
\begin{tabular}{lcccc}
\toprule
\multirow{2}{*}{\textbf{Metric}} & \multicolumn{2}{c}{\textbf{NAS-Bench-101}} &
\multicolumn{2}{c}{\textbf{NAS-Bench-201}} \\
\cmidrule(l){2-3} \cmidrule(l){4-5} 
& Spearman & Kendall's Tau & Spearman & Kendall's Tau \\
\midrule
\multicolumn{5}{c}{\textbf{Realizable scenario}} \\
$\gM_{\gradnorm}$ & $-$0.25 & $-$0.17 & 0.64 & 0.47 \\
$\gM_{\snip}$ & $-$0.21 & $-$0.15 & 0.64 & 0.47 \\
$\gM_{\grasp}$ & $-$0.45 & $-$0.31 & 0.57 & 0.40 \\
$\gM_{\tracenorm}$ & $-$0.30 & $-$0.21 & 0.54 & 0.39 \\
\midrule
\multicolumn{5}{c}{\textbf{Non-realizable scenario}} \\
$\gM_{\gradnorm}$ & $\ms$0.35 & $\ms$0.23 & 0.75 & 0.56 \\
$\gM_{\snip}$ & $\ms$0.37 & $\ms$0.25 & 0.75 & 0.56 \\
$\gM_{\grasp}$ & $\ms$0.46 & $\ms$0.32 & 0.69 & 0.50\\
$\gM_{\tracenorm}$ & $\ms$0.33 & $\ms$0.23 & 0.70 & 0.51 \\
\bottomrule
\end{tabular}
}
\vskip -0.1in
\end{minipage}\hfill
\begin{minipage}[t]{0.49\linewidth}
\renewcommand\multirowsetup{\centering}
\captionof{table}{Comparison of topology, $\gM_{\tracenorm}$ \& $\kappa$ of different architectures. The topology width/depth of each architecture is followed by the maximum value in the search space (separated by a slash). 
}
\label{tab:topology}
\centering
\resizebox{\columnwidth}{!}{
% \begin{tabular}{L{4.25cm}C{1cm}C{1cm}cc}
\begin{tabular}{lcccc}
\toprule
\multirow{2}{*}{\textbf{Architecture}} & \multicolumn{2}{c}{\textbf{Topology}} & \multirow{2}{*}{$\gM_{\tracenorm}$}  & 
% \multirow{2}{*}{$\kappa$}\\
\multirow{2}{*}{$\kappa$}\\
\cmidrule(l){2-3}
 & Width & Depth & & \\
\midrule
NASNet & 5.0/5.0 & 2/6 & 31$\pm$2 & 118$\pm$41\\
AmoebaNet & 4.0/5.0 & 4/6 & 36$\pm$2 & 110$\pm$39 \\
ENAS & 5.0/5.0 & 2/6 & 36$\pm$2 & $\ms$98$\pm$33 \\
DARTS & 3.5/4.0 & 3/5 & 33$\pm$2 & 122$\pm$58\\
SNAS & 4.0/4.0 & 2/5 & 31$\pm$2 &  126$\pm$47 \\
\midrule
WIDE & 4.0/4.0 & 2/5 & 27$\pm$1 & 141$\pm$36 \\
DEEP & \textbf{1.5}/4.0 & \textbf{5}/5 & \textbf{131}$\pm$\textbf{16} & $\ms$\textbf{209}$\pm$\textbf{107} \\
\bottomrule
\end{tabular}
}
\vskip -0.1in
\end{minipage}
\vskip -0.15in
\end{figure}

\subsection{Generalization Guarantees for Training-Free NAS}\label{sec:exp:generalization}
We then demonstrate the validity of our generalization guarantees for training-free NAS (Sec. \ref{sec:generalization-nas}) by examining the correlation between the generalization bound in the realizable (Corollary \ref{corol:generalization-realizable}) or non-realizable (Corollary \ref{corol:generalization-non-realizable}) scenario and the test errors of architectures in NAS-Bench-101/201. Similar to HNAS (Algorithm \ref{alg:hnas}), we employ BO with a sufficiently large number of iterations (e.g., hundreds of iterations) to determine the non-trivial parameters in Corollary \ref{corol:generalization-non-realizable}. Table \ref{tab:generalization} summarizes the results on CIFAR-10 where a higher positive correlation implies a better agreement between our generalization guarantee and the generalization performance of architectures. Notably, the generalization bound in the realizable scenario performs a compelling characterization of the test errors in NAS-Bench-201 with relatively high positive correlations, whereas it fails to provide a precise characterization in a larger search space, i.e., NAS-Bench-101. Remarkably, our generalization bound in the non-realizable scenario is able to perform consistent improvement over it by obtaining higher positive correlations. These results imply that the Corollary \ref{corol:generalization-realizable} may only provide a good characterization for training-free NAS in certain cases (e.g., in the small-scale search space NAS-Bench-201), whereas our Corollary \ref{corol:generalization-non-realizable} generally is more valid and robust in practice. As a consequence, our \eqref{eq:nas-non-realizable} following Corollary \ref{corol:generalization-non-realizable} should be able to improve over the NAS objective following Corollary \ref{corol:generalization-realizable} as we have justified in Sec. \ref{sec:hnas}. Interestingly, the comparable results achieved by all training-free metrics from Sec. \ref{sec:metrics} again validate the connections among these metrics (Theorem \ref{th:connection}). Moreover, our additional results in Appendix \ref{sec:app:generalization} further confirm the validity and practicality of our generalization guarantees for training-free NAS.

\subsection{Connection to Architecture Topology}

To support the theoretical connections between architecture topology (wide vs. deep) and the value of training-free metric $\gM_{\tracenorm}$ as well as the condition number $\kappa$ shown in Sec. \ref{sec:topology}, we compare the topology width/depth, $\gM_{\tracenorm}$ and $\kappa$ of the architectures selected by different SOTA training-based NAS algorithms in the DARTS search space, including NASNet \citep{nasnet}, AmoebaNet \citep{amoebanet}, ENAS \citep{enas}, DARTS \citep{darts}, and SNAS \citep{snas}. Table~\ref{tab:topology} summarizes the results where we apply the same definition of topology width/depth in \citep{understand-nas} (refer to \citep{understand-nas} for more details).
We also include the widest (called WIDE) and the deepest (called DEEP) architecture in the DARTS search space into this comparison. As shown in our Table~\ref{tab:topology}, wide architectures (i.e., all architectures except DEEP) consistently achieve lower condition number $\kappa$ and smaller values of $\gM_{\tracenorm}$ than deep architecture (i.e., DEEP), which aligns with our theoretical insights in Sec.~\ref{sec:topology}.

\subsection{Effectiveness and Efficiency of HNAS}\label{sec:exp:hnas}
\begin{table*}[t!]
%\vskip 0.1in
\caption{Comparison of NAS algorithms in NAS-Bench-201. The result of HNAS is reported with the mean and standard deviation of 5 independent searches and its search costs are evaluated on a Nvidia 1080Ti. C \& D in the last column denote continuous and discrete search space, respectively.}
\label{tab:sota-nasbench201}
\centering
% \begin{small}
\resizebox{\textwidth}{!}{
\begin{threeparttable}
\begin{tabular}{lccccccc}
\toprule
\multirow{2}{*}{\textbf{Algorithm}} & \multicolumn{3}{c}{\textbf{Test Accuracy (\%)}} &
\multirow{2}{*}{\textbf{Cost}} &
\multirow{2}{*}{\textbf{Method}} &
\multirow{2}{*}{\textbf{Applicable}} \\
\cmidrule(l){2-4} 
& C10 & C100 & IN-16 & (GPU Sec.) & & \textbf{Space}\\
\midrule 
ResNet \citep{resnet} & 93.97 & 70.86 & 43.63 & - & manual & - \\
\midrule
REA$^{\dagger}$ & 93.92$\pm$0.30 & 71.84$\pm$0.99 & 45.15$\pm$0.89 & 12000 & evolution & C \& D \\
RS (w/o sharing)$^{\dagger}$ & 93.70$\pm$0.36 & 71.04$\pm$1.07 & 44.57$\pm$1.25 & 12000 & random & C \& D \\
REINFORCE$^{\dagger}$ & 93.85$\pm$0.37 & 71.71$\pm$1.09 & 45.24$\pm$1.18 & 12000 & RL & C \& D \\
BOHB$^{\dagger}$ & 93.61$\pm$0.52 & 70.85$\pm$1.28 & 44.42$\pm$1.49 & 12000 & BO+bandit & C \& D \\
\midrule
ENAS$^{\ddagger}$ \citep{enas} & 93.76$\pm$0.00 & 71.11$\pm$0.00 & 41.44$\pm$0.00 & 15120 & RL & C \\
DARTS (1st)$^{\ddagger}$ \citep{darts} & 54.30$\pm$0.00 & 15.61$\pm$0.00 & 16.32$\pm$0.00 & 16281 & gradient & C \\
DARTS (2nd)$^{\ddagger}$ \citep{darts} & 54.30$\pm$0.00 & 15.61$\pm$0.00 & 16.32$\pm$0.00 & 43277 & gradient & C \\
GDAS$^{\ddagger}$ \citep{gdas} & 93.44$\pm$0.06 & 70.61$\pm$0.21 & 42.23$\pm$0.25 & 8640 & gradient & C \\
DrNAS$^{\sharp}$ \citep{drnas} & 93.98$\pm$0.58 & 72.31$\pm$1.70 & 44.02$\pm$3.24 & 14887 & gradient & C \\
\midrule
NASWOT \citep{naswot} & 92.96$\pm$0.81 & 69.98$\pm$1.22 &  44.44$\pm$2.10 & 306 & training-free & C \& D\\
TE-NAS \citep{te-nas} & 93.90$\pm$0.47 & 71.24$\pm$0.56 & 42.38$\pm$0.46 & 1558 & training-free & C \\
KNAS \citep{knas} & 93.05 & 68.91 & 34.11 & 4200 & training-free & C \& D \\
NASI \citep{nasi} & 93.55$\pm$0.10 & 71.20$\pm$0.14 & 44.84$\pm$1.41 & 120 & training-free & C \\
GradSign \citep{gradsign} & 93.31$\pm$0.47 & 70.33$\pm$1.28 & 42.42$\pm$2.81 & - & training-free & C \& D \\
\midrule
HNAS ($\gM_{\gradnorm}$) & \textbf{94.04}$\pm$0.21 & 71.75$\pm$1.04 & \textbf{45.91}$\pm$0.88 & 3010 & hybrid & C \& D \\
HNAS ($\gM_{\snip}$) & \textbf{93.94}$\pm$0.02 & 71.49$\pm$0.11 & \textbf{46.07}$\pm$0.14 & 2976 & hybrid & C \& D \\
HNAS ($\gM_{\grasp}$) & \textbf{94.13}$\pm$0.13 & \textbf{72.59}$\pm$0.82 & \textbf{46.24}$\pm$0.38 & 3148 & hybrid & C \& D \\
HNAS ($\gM_{\tracenorm}$) & \textbf{94.07}$\pm$0.10 & \textbf{72.30}$\pm$0.70 & \textbf{45.93}$\pm$0.37 & 3006 & hybrid & C \& D \\
\midrule
\textbf{Optimal} & 94.37 & 73.51 & 47.31 & - & - & -\\
\bottomrule
\end{tabular}
\begin{tablenotes}\footnotesize
    \item[$\dagger$] Reported by \citet{nasbench201}.
    \item[$\ddagger$] Re-evaluated using the codes provided by \citet{nasbench201}.
    \item[$\sharp$] Re-evaluated under a comparable search budget as other training-based NAS algorithms with first-order optimization, e.g., ENAS and DARTS (1st). Note that this search budget is smaller than the one reported in its original paper and hence will lead to decreased search performances.
\end{tablenotes}
\end{threeparttable}
} 
% \end{small}
\vskip -0.1in
\end{table*}

To justify that our theoretically motivated HNAS is able to enjoy the advantages of both training-free (i,e., the superior search efficiency) and training-based (i.e., the remarkable search effectiveness) NAS, we compare it with other baselines in NAS-Bench-201 (Table~\ref{tab:sota-nasbench201}). We refer to Appendix~\ref{sec:app:exp-details} for our experimental details. As summarized in Table~\ref{tab:sota-nasbench201}, HNAS, surprisingly, advances both training-based and training-free baselines by consistently selecting architectures achieving the best performances, leading to smaller gaps toward the optimal test errors in the search space. Meanwhile, HNAS requires at most 13$\times$ lower search costs than training-based NAS algorithms, which is even smaller than the training-free baseline KNAS. Moreover, thanks to the superior evaluation efficiency of training-free metrics, HNAS can be deployed efficiently in not only continuous (where search space is represented as a supernet) but also discrete search space. As for NAS under limited search budgets (Figure~\ref{fig:sota-nasbench201}), HNAS also advances all other baselines by achieving improved search efficiency and effectiveness. Appendix~\ref{sec:app:sota-darts} further includes the impressive search results achieved by HNAS on CIFAR-10/100 and ImageNet in the DARTS search space.
Overall, our HNAS is indeed able to enjoy the advantages of both training-free (i.e., the superior search efficiency) and training-based NAS (i.e., the remarkable search effectiveness), which consistently boosts existing training-free NAS methods.

\begin{figure}[t]
%\vskip 0.1in
\centering
\hspace{-2mm}\includegraphics[width=\columnwidth]{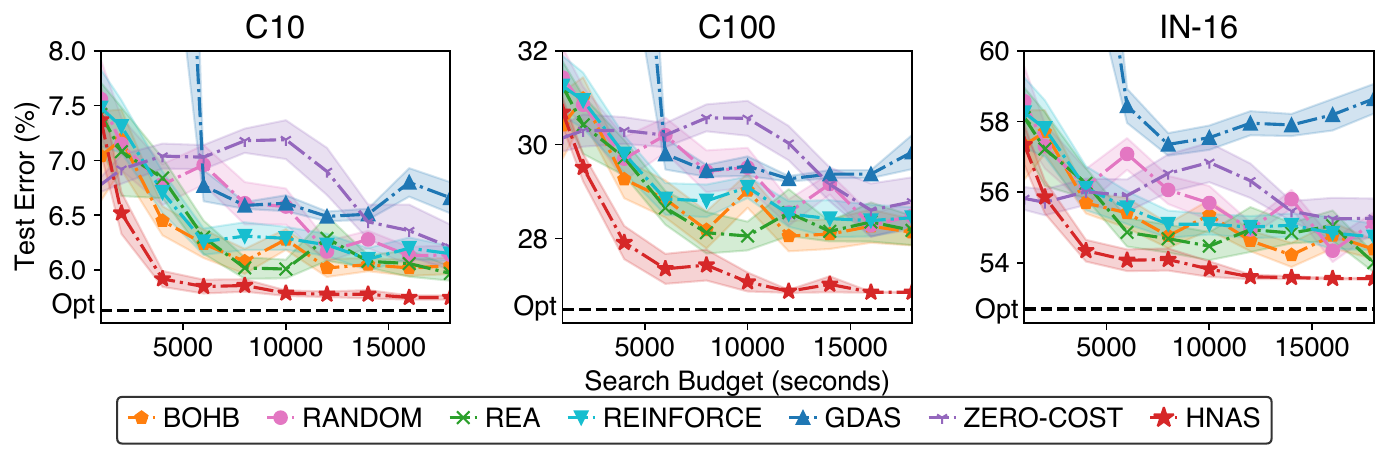}
% \vskip -0.1in
\caption{Comparison between HNAS ($\gM_{\tracenorm}$) and other NAS baselines in NAS-Bench-201 under varying search budgets. Here, the ZERO-COST method is borrowed from \citep{zero-cost} by using $\gM_{\tracenorm}$. Note that each algorithm is reported with the mean and standard error of ten independent searches, and the black dashed line in each plot denotes the the minimal (optimal) test error that can be achieved by the architectures in NAS-Bench-201 on the corresponding dataset.
}
\label{fig:sota-nasbench201}
\vskip -0.15in
\end{figure}

\section{Conclusion \& Discussion}
This paper performs a unified theoretical analysis of NAS algorithms using gradient-based training-free metrics, which allows us to \emph{(a)} theoretically unveil the connections among these training-free metrics, \emph{(b)} provide theoretical guarantees for the empirically observed compelling performance of these training-free NAS algorithms, and \emph{(c)} exploit these theoretical understandings to develop a novel framework called HNAS that can consistently boost existing training-free NAS. 
We expect that our theoretical understanding to provide valuable prior knowledge for the design of training-free metrics and NAS search space in the future.
Moreover,
we expect our theoretical analyses for DNNs to be capable of inspiring more theoretical understanding and improvement over existing machine learning algorithms that are based on DNNs, e.g., the recent training-free data valuation algorithm \citep{davinz}. In addition, the impressive performance achieved by our HNAS framework is expected to be able to encourage more attention to the integration of training-free and training-based approaches in other fields in order to enjoy the advantages of these two types of methods simultaneously. 

\begin{ack}
    This research/project is supported by the National Research Foundation Singapore and DSO National Laboratories under the AI Singapore Programme (AISG Award No: AISG$2$-RP-$2020$-$018$) and by A*STAR under its RIE$2020$ Advanced Manufacturing and Engineering (AME) Programmatic Funds (Award A$20$H$6$b$0151$). 
\end{ack}

\newpage
\bibliographystyle{unsrtnat}
\bibliography{workspace/reference}

\newpage
\section*{Checklist}
% %%% BEGIN INSTRUCTIONS %%%
% The checklist follows the references.  Please
% read the checklist guidelines carefully for information on how to answer these
% questions.  For each question, change the default \answerTODO{} to \answerYes{},
% \answerNo{}, or \answerNA{}.  You are strongly encouraged to include a {\bf
% justification to your answer}, either by referencing the appropriate section of
% your paper or providing a brief inline description.  For example:
% \begin{itemize}
%   \item Did you include the license to the code and datasets? \answerYes{See Section~\ref{gen_inst}.}
%   \item Did you include the license to the code and datasets? \answerNo{The code and the data are proprietary.}
%   \item Did you include the license to the code and datasets? \answerNA{}
% \end{itemize}
% Please do not modify the questions and only use the provided macros for your
% answers.  Note that the Checklist section does not count towards the page
% limit.  In your paper, please delete this instructions block and only keep the
% Checklist section heading above along with the questions/answers below.
% %%% END INSTRUCTIONS %%%

\begin{enumerate}

\item For all authors...
\begin{enumerate}
  \item Do the main claims made in the abstract and introduction accurately reflect the paper's contributions and scope?
    \answerYes{}
  \item Did you describe the limitations of your work?
    \answerYes{} See Sec~\ref{sec:ntk}, i.e., the simplified analysis on the fully connected neural networks with scalar output.
  \item Did you discuss any potential negative societal impacts of your work?
    \answerNo{} I do not see any potential negative societal impacts of this paper.
  \item Have you read the ethics review guidelines and ensured that your paper conforms to them?
    \answerYes{}
\end{enumerate}

\item If you are including theoretical results...
\begin{enumerate}
  \item Did you state the full set of assumptions of all theoretical results?
    \answerYes{}
\item Did you include complete proofs of all theoretical results?
    \answerYes{}
\end{enumerate}

\item If you ran experiments...
\begin{enumerate}
  \item Did you include the code, data, and instructions needed to reproduce the main experimental results (either in the supplemental material or as a URL)?
    \answerYes{}
  \item Did you specify all the training details (e.g., data splits, hyperparameters, how they were chosen)?
    \answerYes{}
        \item Did you report error bars (e.g., with respect to the random seed after running experiments multiple times)?
    \answerYes{}
        \item Did you include the total amount of compute and the type of resources used (e.g., type of GPUs, internal cluster, or cloud provider)?
    \answerYes{}
\end{enumerate}

\item If you are using existing assets (e.g., code, data, models) or curating/releasing new assets...
\begin{enumerate}
  \item If your work uses existing assets, did you cite the creators?
    \answerYes{}
  \item Did you mention the license of the assets?
    \answerNo{} All assets I have used are public.
  \item Did you include any new assets either in the supplemental material or as a URL?
    \answerNA{}
  \item Did you discuss whether and how consent was obtained from people whose data you're using/curating?
    \answerNA{}
  \item Did you discuss whether the data you are using/curating contains personally identifiable information or offensive content?
    \answerNA{}
\end{enumerate}

\item If you used crowdsourcing or conducted research with human subjects...
\begin{enumerate}
  \item Did you include the full text of instructions given to participants and screenshots, if applicable?
    \answerNA{}
  \item Did you describe any potential participant risks, with links to Institutional Review Board (IRB) approvals, if applicable?
    \answerNA{}
  \item Did you include the estimated hourly wage paid to participants and the total amount spent on participant compensation?
    \answerNA{}
\end{enumerate}

\end{enumerate}

\appendix
\begin{appendices}
\onecolumn

\section{Proofs}
Throughout the proofs of this paper, we use lower-case bold-faced symbols to denote column vectors (e.g., $\vx$), and upper-case bold-faced symbols to represent matrices (e.g., $\rmA$).

\subsection{Proof of Theorem~\ref{th:connection}}\label{sec:proof:connection}

\paragraph{Connecting $\gM_{\normalfont\gradnorm}$ with $\gM_{\normalfont\tracenorm}$.} As the loss function $\ell(\cdot, \cdot)$ is assumed to be $\beta$-Lipschitz continuous in the first argument, the following holds based on the notations in Sec.~\ref{sec:notations}:
\begin{equation}
\begin{aligned}
    \gM_{\gradnorm} &\stackrel{(a)}{=} \left\|\frac{1}{m}\sum_{i=1}^m \nabla_{\vtheta}\ell(f(\vx_i, \vtheta_0), y_i)\right\|_2 \\
    &\stackrel{(b)}{\leq} \frac{1}{m}\sum_{i=1}^m \left\|\nabla_{\vtheta}\ell(f(\vx_i, \vtheta_0), y_i)\right\|_2 \\
    &\stackrel{(c)}{\leq} \frac{1}{m}\sum_{i=1}^m\big|\nabla_{f}\ell(f(\vx_i, \vtheta_0), y_i)\big|\|\nabla_{\vtheta}f(\vx_i, \vtheta_0)\|_2 \\
    &\stackrel{(d)}{\leq} \frac{\beta}{m}\sum_{i=1}^m\|\nabla_{\vtheta}f(\vx_i, \vtheta_0)\|_2 \\
    &\stackrel{(e)}{\leq} \frac{\beta}{m}\sqrt{m\sum_{i=1}^m\|\nabla_{\vtheta}f(\vx_i, \vtheta_0)\|_2^2} \\
    &\stackrel{(f)}{=} \beta \gM_{\tracenorm} \label{eq:grad&trace}
\end{aligned}
\end{equation}
where we let $\nabla_{f}\ell(f(\vx_i, \vtheta_0), y_i)$ be the gradient of the output of DNN model $f$. Note that $(a)$ follows from the definition of $\gM_{\gradnorm}$ in Sec.~\ref{sec:metrics} and $(b)$ derives from the Minkowski inequality. In addition, $(d)$ is from the definition of Lipschitz continuity and $(e)$ follows from the Cauchy-Schwarz inequality. Finally, $(f)$ is based on the definition of NTK matrix in Sec.~\ref{sec:ntk} and $\gM_{\tracenorm}$ in Sec.~\ref{sec:metrics}, i.e., 
\begin{equation}
\begin{aligned}
    \gM_{\tracenorm} = \sqrt{\frac{1}{m}\|\rmTheta_0\|_{\tr}} = \sqrt{\frac{1}{m}\sum_{i=1}^m \|\nabla_{\vtheta}f(\vx_i, \vtheta_0)\|_2^2} \ .
\end{aligned}
\end{equation}

Let $C_1 \triangleq \beta$, we then have
\begin{equation}
    \gM_{\gradnorm} \leq C_1\gM_{\tracenorm} \ .
\end{equation}

\paragraph{Connecting $\gM_{\normalfont\snip}$ with $\gM_{\normalfont\gradnorm}$.} We firstly introduce the following lemma.

\begin{lemma}[\citet{laurent2000adaptive}]
    If $\rx_1,\cdots,\rx_k$ are independent standard normal random variables, for $\ry=\sum_{i=1}^k \rx_i^2$ and any $\eps$, 
    \begin{equation*}
         \sP(\ry - k \geq 2\sqrt{k\eps} + 2\eps) \leq \exp(-\eps) \ .
    \end{equation*}
\end{lemma}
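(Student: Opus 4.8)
The plan is to prove this as the standard upper-tail bound for a chi-squared random variable via the Chernoff (exponential Markov) method, exploiting that $\ry = \sum_{i=1}^k \rx_i^2$ is chi-squared with $k$ degrees of freedom and $\E[\ry] = k$. First I would fix $\lambda \in (0, 1/2)$ and, for a deviation level $x>0$ to be specialized later to $x = 2\sqrt{k\eps} + 2\eps$, write $\pr(\ry - k \geq x) = \pr(e^{\lambda(\ry - k)} \geq e^{\lambda x}) \leq e^{-\lambda x}\,\E[e^{\lambda(\ry-k)}]$ by Markov's inequality applied to the nonnegative variable $e^{\lambda(\ry-k)}$.

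The next step is to evaluate the moment generating function. By independence of the $\rx_i$ and the elementary Gaussian integral $\E[e^{\lambda \rx_i^2}] = (1-2\lambda)^{-1/2}$, valid for $\lambda < 1/2$, I obtain $\E[e^{\lambda(\ry-k)}] = e^{-\lambda k}(1-2\lambda)^{-k/2}$, so that $\pr(\ry - k \geq x) \leq \exp\!\big(-\lambda x - \lambda k - \tfrac{k}{2}\ln(1-2\lambda)\big)$. I would then minimize the exponent over $\lambda \in (0,1/2)$: differentiating and setting the derivative to zero gives the optimal $\lambda^\star = \tfrac{1}{2}\cdot\tfrac{x}{x+k}$, and substituting back yields the clean Cramér bound $\pr(\ry - k \geq x) \leq \exp\!\big(-\tfrac{k}{2}[\,u - \ln(1+u)\,]\big)$ with $u \triangleq x/k$.

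It then remains to verify that this bound is at most $e^{-\eps}$ for the specific choice $x = 2\sqrt{k\eps} + 2\eps$. Writing $a \triangleq \sqrt{\eps/k}$, so that $u = 2a + 2a^2$ and $\eps = ka^2$, the required inequality $\tfrac{k}{2}[\,u-\ln(1+u)\,] \geq \eps$ reduces after cancellation exactly to $2a \geq \ln(1 + 2a + 2a^2)$, i.e. $e^{2a} \geq 1 + 2a + 2a^2$. This last step is immediate from the power series $e^{2a} = 1 + 2a + 2a^2 + \tfrac{(2a)^3}{6} + \cdots \geq 1 + 2a + 2a^2$ for $a \geq 0$, which closes the argument.

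I expect the main obstacle to be this final reduction: one must pick the parametrization $a=\sqrt{\eps/k}$ so that the transcendental bound collapses to an elementary power-series comparison, rather than being distracted by the $\sqrt{k\eps}$ cross term. An alternative route that avoids the exact optimization is the Bernstein-type estimate $-2\lambda - \ln(1-2\lambda) \leq 2\lambda^2/(1-2\lambda)$, which follows from $-\ln(1-v) = \sum_{j\geq 1} v^j/j$ by bounding $1/j \leq 1/2$ for $j \geq 2$; this gives $\ln\E[e^{\lambda(\ry-k)}] \leq k\lambda^2/(1-2\lambda)$, and feeding this sub-exponential bound into the standard Bernstein optimization recovers the same $2\sqrt{k\eps} + 2\eps$ deviation. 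Either way the technical heart is controlling the logarithmic term, and I would present the exact Cramér version since its closing inequality is the cleanest.
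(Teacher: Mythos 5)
Your proof is correct and complete. The paper itself states this lemma without any proof --- it is the classical upper-tail bound for a $\chi^2_k$ variable due to Laurent and Massart --- so there is no in-paper argument to compare against; your Chernoff/Cram\'er derivation is the standard one, and every step checks out: the optimizer $\lambda^\star = \tfrac{x}{2(x+k)}$ indeed lies in $(0,1/2)$, the exponent collapses to $-\tfrac{k}{2}[u-\ln(1+u)]$ with $u=x/k$, and the final reduction to $e^{2a}\geq 1+2a+2a^2$ for $a=\sqrt{\eps/k}\geq 0$ is immediate from the power series. The Bernstein-type alternative you sketch via $-2\lambda-\ln(1-2\lambda)\leq 2\lambda^2/(1-2\lambda)$ is in fact the route taken in the original Laurent--Massart paper, so either version would be a legitimate way to fill the gap the authors left.
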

Following the common practice in \citep{ntk, exact-ntk}, each element of $\vtheta_0 \in \sR^d$ follows from the standard normal distribution independently. We therefore can bound $\|\vtheta_0\|_2^2$ using the lemma above. Specifically, let $\delta=\exp(-\eps) \in (0,1)$, with probability at least $1-\delta$ over random initialization, we have:
\begin{equation}
\begin{aligned}
    \|\vtheta_0\|_2^2 \leq d + 2\sqrt{d\ln\frac{1}{\delta}}+2\ln\frac{1}{\delta} \ .
\end{aligned}
\end{equation}

Using the results above and following the definition of $\gM_{\gradnorm}$, with probability at least $1-\delta$ over random initialization, we have
\begin{equation}
\begin{aligned}
    \gM_{\snip} &= \frac{1}{m}\sum_{i=1}^m\left|\vtheta_0^{\top} \nabla_{\vtheta}\gL(f(\vx_i, \vtheta_0), y_i)\right| \\
    &\leq \frac{1}{m}\sum_{i=1}^m \|\vtheta_0\|_2\|\nabla_{\vtheta}\ell(f(\vx_i, \vtheta_0), y_i)\|_2 \\
    &\leq \sqrt{d + 2\sqrt{d\ln\frac{1}{\delta}}+2\ln\frac{1}{\delta}} \cdot \frac{1}{m}\sum_{i=1}^m \left\|\nabla_{\vtheta}\ell(f(\vx_i, \vtheta_0), y_i)\right\|_2 \\
    & \leq \beta\sqrt{d + 2\sqrt{d\ln\frac{1}{\delta}}+2\ln\frac{1}{\delta}} \gM_{\tracenorm} \ .
\end{aligned}
\end{equation}
The last inequality follows from the same derivation in \eqref{eq:grad&trace}. Let $C_2 \triangleq \beta\sqrt{d + 2\sqrt{d\ln\frac{1}{\delta}}+2\ln\frac{1}{\delta}}$, the following then holds with a high probability (i.e., at least $1-\delta$),
\begin{equation}
    \gM_{\snip} \leq C_2\gM_{\tracenorm} \ .
\end{equation}

\paragraph{Connecting $\gM_{\normalfont\grasp}$ and $\gM_{\normalfont\gradnorm}$.} 
We firstly introduce the following lemma adapted from \citep{ntk-linear}.
\begin{lemma}[Lemma 1 in \citep{ntk-linear}]\label{th:continuity&smoothness}
     Let $\delta \in (0,1)$. There exist the constant $\rho_1, \rho_2>0$ such that for any $r>0$, $\vtheta, \vtheta' \in B(\vtheta_0, r/\sqrt{n})$ and any input $\vx$ within the dataset, with probability at least $1-\delta$ over random initialization, we have
    \begin{equation*}
    \begin{aligned}
        \left\|\nabla_{\vtheta}f(\vx, \vtheta)\right\|_2 &\leq \rho_1 \\
        \left\|\nabla_{\vtheta}f(\vx, \vtheta) - \nabla_{\vtheta'}f(\vx, \vtheta')\right\|_2 &\leq \rho_2\left\|\vtheta - \vtheta'\right\|_2 
    \end{aligned}
    \end{equation*}
    where $B(\vtheta_0, r/\sqrt{n}) \triangleq \{\vtheta \mid \|\vtheta - \vtheta_0\| \leq r/\sqrt{n}\}$.
\end{lemma}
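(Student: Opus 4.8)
The plan is to read this as a local regularity statement for the parameter-gradient of $f$ around the Gaussian initialization, and to follow the strategy of \citet{ntk-linear} recast in the NTK parameterization of \citet{ntk}. Write the forward pass as $\va^{(0)} = \vx$, $\vg^{(\ell)} = n^{-1/2}\rmW^{(\ell)}\va^{(\ell-1)} + \vb^{(\ell)}$ and $\va^{(\ell)} = \sigma(\vg^{(\ell)})$, and let $\vs^{(\ell)} \triangleq \nabla_{\vg^{(\ell)}} f$ be the backpropagated sensitivity, so that the gradient block for the $\ell$-th weight is $\partial f/\partial\rmW^{(\ell)} = n^{-1/2}\vs^{(\ell)}(\va^{(\ell-1)})^{\top}$ (bias blocks are handled identically). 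Both asserted inequalities then reduce to controlling, \emph{uniformly over the ball} $B(\vtheta_0, r/\sqrt n)$, the forward norms $\|\va^{(\ell)}\|_2$ and backward norms $\|\vs^{(\ell)}\|_2$, together with their variation as $\vtheta$ moves inside the ball.

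For the boundedness constant $\rho_1$, I would first invoke Gaussian operator-norm concentration: with probability at least $1-\delta$, after a union bound over the $L$ layers, each initialized block satisfies $\|n^{-1/2}\rmW^{(\ell)}\|_{\mathrm{op}} \le c$ at $\vtheta_0$ for an absolute constant $c$. Since any $\vtheta \in B(\vtheta_0, r/\sqrt n)$ perturbs each weight block by at most $r/\sqrt n$ in Frobenius norm, the scaled operator norms stay below $c + r/n$ \emph{deterministically} throughout the ball; this is the device that upgrades a single high-probability event at the center to a uniform bound. Combining these with $\|\vx\|_2 \le 1$ and the Lipschitzness of $\sigma$, a forward induction over $\ell$ bounds $\|\va^{(\ell)}\|_2$, and a backward induction (using boundedness of $\sigma'$) bounds $\|\vs^{(\ell)}\|_2$. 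Each gradient block is then $O(1)$ in norm, and summing the $L$ blocks gives $\|\nabla_{\vtheta}f(\vx,\vtheta)\|_2 \le \rho_1$.

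For the gradient-Lipschitz constant $\rho_2$, I would run a perturbation analysis for two settings $\vtheta, \vtheta' \in B(\vtheta_0, r/\sqrt n)$, with weight blocks $\rmW^{(\ell)}$ and $\tilde{\rmW}^{(\ell)}$. A forward induction shows $\|\va^{(\ell)}(\vtheta) - \va^{(\ell)}(\vtheta')\|_2 \le C_\ell\|\vtheta - \vtheta'\|_2$: each layer contributes a direct weight-difference term $n^{-1/2}\|\rmW^{(\ell)} - \tilde{\rmW}^{(\ell)}\|_{\mathrm{op}}\|\va^{(\ell-1)}\|_2$ plus a propagated term from the previous layer, both controlled by the uniform norm bounds above and the Lipschitzness of $\sigma$. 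An analogous backward induction bounds $\|\vs^{(\ell)}(\vtheta) - \vs^{(\ell)}(\vtheta')\|_2$, now additionally using Lipschitzness of $\sigma'$ (the activation smoothness inherited from \citet{ntk-linear}) to control the change of the diagonal $\sigma'(\vg^{(\ell)})$ factors. Writing the block difference as $n^{-1/2}[\vs^{(\ell)}(\vtheta) - \vs^{(\ell)}(\vtheta')](\va^{(\ell-1)}(\vtheta))^{\top} + n^{-1/2}\vs^{(\ell)}(\vtheta')(\va^{(\ell-1)}(\vtheta) - \va^{(\ell-1)}(\vtheta'))^{\top}$ and bounding each summand by a forward/backward variation times the uniform bound on the complementary factor, summation over $\ell$ yields $\|\nabla_{\vtheta}f(\vx,\vtheta) - \nabla_{\vtheta}f(\vx,\vtheta')\|_2 \le \rho_2\|\vtheta - \vtheta'\|_2$.

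The main obstacle is making every estimate hold \emph{simultaneously and uniformly} over the whole ball rather than only at $\vtheta_0$, since otherwise neither the forward nor the backward induction can be closed. The resolution, as above, is to isolate all the randomness into the finitely many operator-norm concentration events at the center and then propagate them deterministically across the $O(1/\sqrt n)$-ball. This keeps $\rho_1, \rho_2$ dependent on $L$, $r$, $\delta$, and the Lipschitz constants of $\sigma$ and $\sigma'$ but independent of $n$, and delivers the stated probability $1-\delta$ through the single union bound over the $L$ layers.
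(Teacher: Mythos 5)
The paper never proves this lemma---it is imported (with notation adapted) from \citet{ntk-linear} as a cited result, so there is no internal proof to compare against; your sketch is a correct reconstruction of the strategy used in that cited source, namely Gaussian operator-norm concentration at $\vtheta_0$ with a union bound over the $L$ layers, deterministic propagation of those bounds across the $O(n^{-1/2})$-ball (where the $n^{-1/2}$ scaling renders the per-block perturbation $r/n$ harmless), and forward/backward inductions using the Lipschitzness of $\sigma$ and $\sigma'$ to extract $\rho_1,\rho_2$ independent of $n$. In short: your proposal is correct and takes essentially the same approach as the original proof of Lemma 1 in \citep{ntk-linear}.
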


To ease the notation, we use $\nabla_f \ell(f(\vx_i, \vtheta_0), y_i)$ to denote the gradient of the output (i.e., $f(\vx_i, \vtheta_0)$) from the DNN model $f$. According to the definition of Hessian matrix, $\rmH_i$ applied in $\gM_{\grasp}$ can be computed as
\begin{equation}
\begin{aligned}
    \rmH_i &= \nabla^2_{\vtheta_0}\ell(f(\vx_i, \vtheta_0), y_i) \\
    &=\nabla_{\vtheta} \left[\nabla_{f}\ell(f(\vx_i, \vtheta_0), y_i)\nabla_{\vtheta}f(\vx_i, \vtheta_0) \right] \\
    &= \nabla_f^2 \ell(f(\vx_i, \vtheta_0), y_i) \nabla_{\vtheta}f(\vx_i, \vtheta_0)\nabla_{\vtheta}f(\vx_i, \vtheta_0)^{\top} + \nabla_{f}\ell(f(\vx_i, \vtheta_0), y_i)\nabla_{\vtheta}^2f(\vx_i, \vtheta_0) \ .
\end{aligned}
\end{equation}

Since $\ell(\cdot, \cdot)$ is assumed to be $\gamma$-Lipschitz smooth and $\beta$-Lipschitz continuous in the first argument, we can then bound the operator norm of this hessian matrix $\rmH_i$ induced by the input $\vx_i$ in the dataset with
\begin{equation}
\begin{aligned}
    \left\|\rmH_i\right\|_2 &= \left\|\nabla_f^2 \ell(f(\vx_i, \vtheta_0), y_i) \nabla_{\vtheta}f(\vx_i, \vtheta_0)\nabla_{\vtheta}f(\vx_i, \vtheta_0)^{\top} + \nabla_{f}\ell(f(\vx_i, \vtheta_0), y_i)\nabla_{\vtheta}^2f(\vx_i, \vtheta_0)\right\|_2 \\
    &\leq \left|\nabla_f^2 \ell(f(\vx_i, \vtheta_0), y_i)\right| \left\| \nabla_{\vtheta}f(\vx_i, \vtheta_0)\nabla_{\vtheta}f(\vx_i, \vtheta_0)^{\top}\right\|_2 + \left|\nabla_{f}\ell(f(\vx_i, \vtheta_0), y_i)\right|\left\|\nabla_{\vtheta}^2f(\vx_i, \vtheta_0)\right\|_2 \\
    &\leq \gamma\left\|\nabla_{\vtheta}f(\vx_i, \vtheta_0)\nabla_{\vtheta}f(\vx_i, \vtheta_0)^{\top}\right\|_2 + \beta\left\|\nabla_{\vtheta}^2f(\vx_i, \vtheta_0)\right\|_2 \\
    % &\leq \gamma\Tr(\nabla_{\vtheta}f(\vx_i, \vtheta_0)\nabla_{\vtheta}f(\vx_i, \vtheta_0)^{\top}) + \beta\left\|\nabla_{\vtheta}^2f(\vx_i, \vtheta_0)\right\|_2 \\
    &= \gamma \left\|\nabla_{\vtheta}f(\vx_i, \vtheta_0)\right\|_2^2 + \beta\left\|\nabla_{\vtheta}^2f(\vx_i, \vtheta_0)\right\|_2 \\
    &\leq \gamma\rho_1^2 + \beta \rho_2 \label{eq:hessian-norm}
\end{aligned}
\end{equation}
where the last inequality results from Lemma \ref{th:continuity&smoothness} and is satisfied with probability at least $1-\delta$ over random initialization.

Finally, let $\delta' \in (0,1)$, based on the definition of $\gM_{\grasp}$, the following then holds with probability at least $1-(m+1)\delta'$ over random initialization,
\begin{equation}
\begin{aligned}
    \gM_{\grasp} &= \frac{1}{m}\left|\sum_{i=1}^m  \vtheta_0^{\top}(\rmH_i\nabla_{\vtheta}\gL(f(\vx_i, \vtheta_0), y_i))\right| \\
    &\leq \frac{1}{m}\|\vtheta_0\|_2\sum_{i=1}^m\left\|\rmH_i\nabla_{\vtheta}\gL(f(\vx_i, \vtheta_0), y_i)\right\|_2 \\ 
    &\leq \frac{1}{m}\|\vtheta_0\|_2\sum_{i=1}^m\|\rmH_i\|_2\|\nabla_{\vtheta}\gL(f(\vx_i, \vtheta_0), y_i)\|_2 \\ 
    &\leq (\gamma\rho_1^2 + \beta \rho_2)\sqrt{d + 2\sqrt{d\ln\frac{1}{\delta'}}+2\ln\frac{1}{\delta'}} \cdot \frac{1}{m}\sum_{i=1}^m \left\|\nabla_{\vtheta}\ell(f(\vx_i, \vtheta_0), y_i)\right\|_2 \\
    &\leq \beta(\gamma\rho_1^2 + \beta \rho_2)\sqrt{d + 2\sqrt{d\ln\frac{1}{\delta'}}+2\ln\frac{1}{\delta'}} \gM_{\tracenorm} \ . \label{grasp&grad}
\end{aligned}
\end{equation}

Similarly, let $\delta=(m+1)\delta'$ and $C_3 = \beta(\gamma\rho_1^2 + \beta \rho_2)\sqrt{d + 2\sqrt{d\ln\frac{m+1}{\delta}}+2\ln\frac{m+1}{\delta}}$, with a high probability (i.e., at least $1-\delta$), we finally have
\begin{equation}
    \gM_{\grasp} \leq C_3\gM_{\tracenorm} \ ,
\end{equation}
which concludes our proof.

\begin{remark}
\emph{In addition to the provable theoretical connection between $\gM_{\tracenorm}$ and other training-free metrics from Sec.~\ref{sec:metrics}, we can further reveal the connection between $\gM_{\tracenorm}$ and recently proposed training-free metric $\gM_{\knas}$ in \citep{knas}. Specifically, let the training-free metric $\gM_{\knas}$ be defined as
\begin{equation}
    \gM_{\knas} \triangleq \sqrt{\left|\frac{1}{m^2}\sum_{i,j=1}^m \nabla_{\vtheta}f(\vx_i, \vtheta_0)^{\top}\nabla_{\vtheta}f(\vx_j, \vtheta_0)\right|} \ .
\end{equation}
Of note, we have adapted the original $\knas$ metric in \citep{knas} to match the mathematical form of other training-free metrics in Sec.~\ref{sec:metrics}. Interestingly, training-free metric $\gM_{\knas}$ is also gradient-based. As a result, we can also theoretically connect $\gM_{\knas}$ with $\gM_{\tracenorm}$ in a similar way:
\begin{equation}
\begin{aligned}
\gM_{\knas}^2 &= \left|\frac{1}{m^2}\sum_{i,j=1}^m \nabla_{\vtheta}f(\vx_i, \vtheta_0)^{\top}\nabla_{\vtheta}f(\vx_j, \vtheta_0)\right| \\
&\leq \frac{1}{m^2} \sqrt{m^2\sum_{i,j=1}^m \left(\nabla_{\vtheta}f(\vx_i, \vtheta_0)^{\top}\nabla_{\vtheta}f(\vx_j, \vtheta_0)\right)^2} \\
&= \frac{1}{m} \left\|\rmTheta_0\right\|_{\frob} \leq \frac{1}{m} \left\|\rmTheta_0\right\|_{\tr} = \gM_{\tracenorm}^2
\end{aligned}
\end{equation}
where the first inequality follows from the Cauchy-Schwarz inequality and the second equality is based on the definition of Frobenius norm. The last inequality derives from the matrix inequality $\|\cdot\|_{\frob} \leq \|\cdot\|_{\tr}$ while the last equality is obtained based on the definition of $\gM_{\tracenorm}$. Therefore, we have the following theoretical connection between $\gM_{\knas}$ and $\gM_{\tracenorm}$, which we will validate empirically in Appendix~\ref{sec:app:connection}.
\begin{equation}
    \gM_{\knas} \leq \gM_{\tracenorm} \ .
\end{equation}
Consequently, the theoretical results and the HNAS framework in this paper are also applicable to the training-free metric $\gM_{\knas}$. We have validated part of them empirically in Appendix \ref{sec:app:empirical}.
}
\end{remark}

\begin{remark}
\emph{
Note that our assumptions about the Lipschitz continuity and the Lipschitz smoothness of loss function $\ell(\cdot, \cdot)$ are usually satisfied for commonly employed loss functions in practice, e.g., Cross Entropy and Mean Square Error. For example, \citet{nasi} have justified that these two commonly applied loss functions indeed satisfy the Lipschitz continuity assumption. As for their Lipschitz smoothness, following a similar analysis in \citep{nasi}, we can also verify that there exists a constant $c>0$ such that $\|\nabla_{f}^2\ell(f, \cdot)\|_2 \leq c$ for both Cross Entropy and Mean Square Error.
}
\end{remark}

\subsection{Proof of Theorem~\ref{th:generalization}}\label{sec:proof:th:generalization}
\subsubsection{Estimating the Rademacher Complexity of DNNs}
Note that the Rademacher complexity of a hypothesis class $\gG$ over dataset $S=\{(\vx_i, y_i)\}_{i=1}^m$ of size $m$ is usually defined as
\begin{equation}
    \gR_S(\gG) = \E_{\vepsilon \in \{\pm1\}^m}\left[\sup_{g \in \gG} \frac{1}{m}\sum_{i=1}^m \epsilon_i g(\vx_i)\right] \ ,
\end{equation}
with $\eps_i \in \{\pm 1\}$. Let $\vtheta_0$ be the initialized parameters of DNN model $f$, we then define the following hypotheses that will be used to prove our lemmas and theorems:
\begin{equation}
\begin{aligned}
    \gF \triangleq \{\vx \mapsto f(\vx, \vtheta_t): t > 0\}, \quad \gF^{\lin} \triangleq \{\vx \mapsto f(\vx, \vtheta_0) + \nabla_{\vtheta}f(\vx, \vtheta_0)^{\top} (\vtheta_t - \vtheta_0): t > 0\} 
\end{aligned}
\end{equation}
where $f_t \in \gF$ and $f_t^{\lin} \in \gF^{\lin}$ are the function determined by the DNN model $f$ and its corresponding linearization at step $t$ of their optimization, respectively. Of note, the $\vtheta_t$ in $f_t$ and $f_t^{\lin}$ are not identical and should instead be determined by the optimization of $f_t$ and $f_t^{\lin}$ independently. Interestingly, $f_t$ can then be well characterized by $f_t^{\lin}$ as proved in the following lemma.

\begin{lemma}[Theorem H.1 \citep{ntk-linear}]\label{th:ntk-linear}
    Let $n_1=\cdots=n_{L-1}=n$ and assume $\lambda_{\min}(\rmTheta_{\infty}) > 0$. There exist the constant $c>0$ and $N>0$ such that for any $n>N$ and any $\vx \in \sR^{n_0}$ with $\|\vx\|_2 \leq 1$, the following holds with probability at least $1-\delta$ over random initialization when applying gradient descent with learning rate $\eta < \eta_0$,
    \begin{equation*}
    \sup_{t\geq0}\left\|f_t-f^{\text{\normalfont lin}}_t\right\|_2 \leq \frac{c}{\sqrt{n}} \ .
    \end{equation*}
\end{lemma}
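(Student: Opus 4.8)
The plan is to show that under gradient descent the parameters never leave a ball of radius $\gO(r/\sqrt{n})$ around $\vtheta_0$, so that within this ball the exact network and its linearization are driven by nearly identical dynamics; the residual between them can then be controlled by a Gr\"onwall-type argument. First I would establish that, because $\lambda_{\min}(\rmTheta_{\infty})>0$ and $\rmTheta_0 \to \rmTheta_{\infty}$ for large $n$, the training loss of both $f_t$ and $f_t^{\lin}$ decays linearly at a rate governed by $\eta\,\lambda_{\min}(\rmTheta_0)$. This exponential contraction is what keeps the total parameter displacement finite: summing the per-step updates $\vtheta_{t+1}-\vtheta_t = -\eta\sum_i \nabla_{\vtheta}f(\vx_i,\vtheta_t)\,\nabla_f\ell_i$ over all $t$ and using the geometric decay of the residuals $f_t(\vx_i)-y_i$ yields $\|\vtheta_t-\vtheta_0\|_2 = \gO(1/\sqrt{n})$, confining the trajectory to the ball $B(\vtheta_0, r/\sqrt{n})$ required by Lemma~\ref{th:continuity&smoothness}.

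Next I would exploit the local regularity of the network inside this ball. The empirical kernel $\rmTheta_t$ built from $\nabla_{\vtheta}f(\cdot,\vtheta_t)$ differs from $\rmTheta_0$ by at most $\gO(1/\sqrt{n})$: the gradient-Lipschitz bound in Lemma~\ref{th:continuity&smoothness} gives $\|\nabla_{\vtheta}f(\vx,\vtheta_t)-\nabla_{\vtheta}f(\vx,\vtheta_0)\|_2 \leq \rho_2\|\vtheta_t-\vtheta_0\|_2 = \gO(1/\sqrt{n})$, and pairing this with the uniform gradient-norm bound $\rho_1$ controls each entry of $\rmTheta_t-\rmTheta_0$. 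This is the key quantitative ingredient: it says the true dynamics, which involve the time-varying kernel $\rmTheta_t$, are an $\gO(1/\sqrt{n})$ perturbation of the linearized dynamics, which involve the frozen kernel $\rmTheta_0$.

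I would then write the evolution of the residual $g_t \triangleq f_t-f_t^{\lin}$ by subtracting the two update equations. The difference decomposes into two contributions: a driving term of size $\gO(1/\sqrt{n})$ coming from $\rmTheta_t-\rmTheta_0$, and a feedback term proportional to $g_t$ itself, arising because $\nabla_f\ell(f_t(\vx_i),y_i)$ differs from $\nabla_f\ell(f_t^{\lin}(\vx_i),y_i)$ by at most $\gamma|g_t(\vx_i)|$ via $\gamma$-Lipschitz smoothness. This produces a discrete Gr\"onwall inequality of the form $\|g_{t+1}\|_2 \leq (1+\eta A)\|g_t\|_2 + \eta B/\sqrt{n}$ for constants $A,B$ depending on $\rho_1,\rho_2,\beta,\gamma$ and the spectrum of $\rmTheta_0$.

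The main obstacle will be upgrading this into a bound uniform over all $t\geq 0$: a na\"ive application of Gr\"onwall yields a factor $(1+\eta A)^t$ that blows up. The resolution is to couple the Gr\"onwall recursion with the exponential decay of the residuals established in the first step, so that the driving term $\eta B/\sqrt{n}$ is effectively multiplied by the geometrically decaying loss and the accumulated error telescopes to a finite $\gO(1/\sqrt{n})$ quantity. Making the contraction from the convergence analysis dominate the expansion from the feedback term — which is where $\eta<\eta_0$ and sufficiently large $n$ enter — is the delicate part, and it is precisely what delivers the claimed bound $\sup_{t\geq0}\|f_t-f_t^{\lin}\|_2 \leq c/\sqrt{n}$.
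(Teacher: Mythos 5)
The paper does not actually prove this lemma: it is imported verbatim as Theorem~H.1 of the cited reference \citep{ntk-linear}, so there is no in-paper proof to compare against. Your sketch is a faithful reconstruction of the argument in that reference: confinement of the trajectory to $B(\vtheta_0, r/\sqrt{n})$, kernel stability $\|\rmTheta_t - \rmTheta_0\| = \gO(1/\sqrt{n})$ via the Jacobian Lipschitz bound, and a discrete Gr\"onwall recursion for $g_t = f_t - f_t^{\lin}$ whose driving term is damped by the geometric decay of the residuals --- you correctly identify that this last coupling is what rescues the bound from the $(1+\eta A)^t$ blowup and is where $\eta < \eta_0$ and $n > N$ enter. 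Two caveats. First, your presentation is sequential (``first establish confinement and linear decay, then kernel stability, then the Gr\"onwall bound''), but these three claims are mutually dependent: linear decay of the \emph{nonlinear} network's loss requires the kernel to stay close to $\rmTheta_0$, which requires the parameters to stay in the ball, which requires the loss to decay. In the actual proof they must be established simultaneously by a single induction on $t$ that maintains all three invariants at once; this joint induction is the substantive content, not a detail. Second, the lemma lives in the squared-loss setting used throughout the paper's Appendix (cf.\ the MSE dynamics in the surrounding lemmas), so invoking generic $\beta$- and $\gamma$-Lipschitz constants for the loss is unnecessary --- for $\ell(f,y) = (f-y)^2/2$ the feedback term is exactly $\nabla_f\ell(f_t) - \nabla_f\ell(f_t^{\lin}) = g_t$, which simplifies your recursion. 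Neither point invalidates the sketch as a roadmap, but the first one is where a complete write-up would have to do real work.
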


\begin{remark}
\emph{
According to \citep{ntk-linear}, $\lambda_{\min}(\rmTheta_{\infty}) > 0$ usually holds especially when any input $\vx$ from dataset $S$ satisfies $\left\|\vx\right\|_2=1$. In practice, $\left\|\vx\right\|_2=1$ can be achieved by normalizing each input $\vx$ from real-world dataset using its norm $\|\vx\|_2$, which typically servers as the data preprocessing procedure for the model training of DNNs.
}
\end{remark}

Moreover, we will show that the Rademacher complexity of the DNN model during model training (i.e., $\gF$) can also be bounded using its linearization model (i.e., $\gF^{\lin}$) based on the following lemmas.

\begin{lemma}\label{th:bounded-complexity}
    With Lemma~\ref{th:ntk-linear}, there exists a constant $c>0$ such that with probability at least $1-\delta$ over random initialization, the following holds
    \begin{equation*}
    \gR_S(\gF) \leq \gR_S(\gF^{\normalfont\lin}) + \frac{c}{\sqrt{n}} \ .
    \end{equation*}
\end{lemma}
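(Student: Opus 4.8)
The plan is to compare the two Rademacher complexities by exploiting the uniform-in-$t$ closeness of $f_t$ and $f_t^{\lin}$ that is guaranteed by Lemma~\ref{th:ntk-linear}. First I would fix an arbitrary sign vector $\vepsilon \in \{\pm1\}^m$ and, for each training step $t>0$, pair the DNN function $f_t \in \gF$ with the linearized function $f_t^{\lin} \in \gF^{\lin}$ produced at the \emph{same} step $t$. Writing the empirical Rademacher average as a sum over the data points, I then decompose it into the linearized contribution plus an error term:
\begin{equation*}
\frac{1}{m}\sum_{i=1}^m \epsilon_i f_t(\vx_i) = \frac{1}{m}\sum_{i=1}^m \epsilon_i f_t^{\lin}(\vx_i) + \frac{1}{m}\sum_{i=1}^m \epsilon_i\bigl(f_t(\vx_i)-f_t^{\lin}(\vx_i)\bigr) \ .
\end{equation*}

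The next step is to bound the error term uniformly in $t$. Since $|\epsilon_i|=1$, the triangle inequality followed by Cauchy--Schwarz over the $m$ data points yields
\begin{equation*}
\left|\frac{1}{m}\sum_{i=1}^m \epsilon_i\bigl(f_t(\vx_i)-f_t^{\lin}(\vx_i)\bigr)\right| \leq \frac{1}{\sqrt{m}}\left\|f_t - f_t^{\lin}\right\|_2 \leq \frac{c}{\sqrt{n}} \ ,
\end{equation*}
where the final inequality invokes Lemma~\ref{th:ntk-linear} (absorbing the harmless $1/\sqrt{m}\leq 1$ factor into the constant) and holds with probability at least $1-\delta$ over random initialization. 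Crucially, the bound $c/\sqrt{n}$ does not depend on $t$ because Lemma~\ref{th:ntk-linear} controls $\sup_{t\geq 0}\|f_t - f_t^{\lin}\|_2$.

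Because this error bound is uniform in $t$, I can add it to each term before passing to the supremum, which gives
\begin{equation*}
\sup_{t>0}\frac{1}{m}\sum_{i=1}^m \epsilon_i f_t(\vx_i) \leq \sup_{t>0}\frac{1}{m}\sum_{i=1}^m \epsilon_i f_t^{\lin}(\vx_i) + \frac{c}{\sqrt{n}} \ .
\end{equation*}
Taking the expectation over $\vepsilon \in \{\pm1\}^m$ on both sides and recalling the definition of $\gR_S(\cdot)$ then delivers the claimed inequality $\gR_S(\gF) \leq \gR_S(\gF^{\lin}) + c/\sqrt{n}$, and the high-probability qualifier is inherited directly from Lemma~\ref{th:ntk-linear}.

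I expect the only real subtlety to be that the two suprema range over \emph{different} function classes: the bound must be transported through the shared step index $t$ that pairs $f_t$ with $f_t^{\lin}$, rather than over a common parameter. The uniformity of Lemma~\ref{th:ntk-linear} in $t$ is exactly what makes this pairing legitimate. If the supremum over $t$ is not attained, I would simply run the argument on an arbitrary near-maximizer of the $\gF$-side supremum and let the approximation slack tend to zero, which is routine.
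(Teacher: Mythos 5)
Your proposal is correct and follows essentially the same route as the paper's proof: both pair $f_t$ with $f_t^{\lin}$ through the shared step index $t$, bound the per-sample discrepancy uniformly in $t$ via Lemma~\ref{th:ntk-linear}, and then pass the resulting $c/\sqrt{n}$ error through the supremum and the expectation over $\vepsilon$. Your explicit Cauchy--Schwarz handling of the error term and your remark about the suprema ranging over different classes are just slightly more careful renderings of steps the paper performs implicitly.
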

\begin{proof}
Based on Lemma~\ref{th:ntk-linear}, given $\eps_i \in \{\pm 1\}$, with probability at least $1-\delta$, there exists a constant $c > 0$ such that 
\begin{equation}
    \eps_i f_t \leq \eps_i f^{\lin}_t + \frac{c}{\sqrt{n}} \ .
\end{equation}

% Let $f^{\lin}(\vx, \vtheta) \triangleq f(\vx, \vtheta_0) + \nabla_{\vtheta}f(\vx, \vtheta_0)^{\top} (\vtheta - \vtheta_0)$ and $g(\vx, \vtheta) \triangleq \nabla_{\vtheta}f(\vx, \vtheta_0)^{\top} (\vtheta - \vtheta_0)$, 
Following the definition of Rademacher complexity, we can bound the complexity of $\gF$ by
\begin{equation}
\begin{aligned}
    \gR_S(\gF) &= \E_{\vepsilon \in \{\pm1\}^m}\left[\sup_{f \in \gF} \frac{1}{m}\sum_{i=1}^m \epsilon_i f(\vx_i, \vtheta)\right] \\
    &\leq \E_{\vepsilon \in \{\pm 1\}^m}\left[\sup_{f^{\lin} \in \gF^{\lin}} \frac{1}{m}\sum_{i=1}^m \left(\epsilon_i f^{\lin}(\vx_i) + \frac{c}{\sqrt{n}}\right) \right] \\
    &\leq \E_{\vepsilon \in \{\pm 1\}^m}\left[\sup_{f^{\lin} \in \gF^{\lin}} \frac{1}{m}\sum_{i=1}^m \epsilon_i f^{\lin}(\vx_i)\right] + \E_{\vepsilon \in \{\pm 1\}^m}\left[\frac{c}{\sqrt{n}} \right] \\
    & \leq \gR_S(\gF^{\lin}) + \frac{c}{\sqrt{n}} \ ,
\end{aligned}
\end{equation}
which completes the proof.
\end{proof}

\begin{lemma}\label{th:converged-parameter}
    Let $f(\rmX, \vtheta_0)\triangleq [f(\vx_1, \vtheta_0) \cdots f(\vx_m, \vtheta_0)]^{\top}$ and $\vy \triangleq [y_1 \cdots y_m]^{\top}$ be the outputs of DNN model $f$ at initialization and the target labels of a dataset $S=\{(\vx_i, y_i)\}_{i=1}^m$, respectively. Given MSE loss $\Ls = \sum_{i=1}^m \|f^{\normalfont\lin}(\vx_i, \vtheta) - y_i\|_2^2 / (2m)$ and NTK matrix at initialization $\rmTheta_0 = \nabla_{\vtheta}f(\rmX,\vtheta_0)\nabla_{\vtheta}f(\rmX,\vtheta_0)^{\top}$, assume $\lambda_{\min}(\rmTheta_0) > 0$, for any $t>0$, the following holds when applying gradient descent on $f^{\normalfont\lin}(\vx, \vtheta)$ with learning rate $\eta < m/\lambda_{\max}(\rmTheta_0)$:
    \begin{equation*}
        \|\vtheta_t - \vtheta_0\|_2 \leq \|\vtheta_{\infty} - \vtheta_0\|_2 =  \sqrt{\widehat{\vy}^{\top}\rmTheta_0^{-1}\widehat{\vy}}
    \end{equation*}
    where $\vtheta_t$ denotes the parameters of $f^{\normalfont\lin}$ at step $t$ of its model training and $\widehat{\vy} \triangleq \vy - f(\rmX, \vtheta_0)$. Besides, $\lambda_{\max}(\rmTheta_0)$ and $\lambda_{\min}(\rmTheta_0)$ denote the maximum and minimum eigenvalue of matrix $\rmTheta_0$.
\end{lemma}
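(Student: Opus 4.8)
The plan is to exploit the fact that, for the linearized model, the training objective is a convex quadratic in $\vtheta$, so that gradient descent reduces to an affine recursion whose trajectory I can write in closed form. Writing $J \triangleq \nabla_{\vtheta}f(\rmX,\vtheta_0) \in \sR^{m\times d}$ for the Jacobian at initialization, the prediction residual at parameters $\vtheta$ is $J(\vtheta-\vtheta_0) - \widehat{\vy}$, and $\rmTheta_0 = JJ^{\top}$. First I would compute $\nabla_{\vtheta}\Ls = \tfrac{1}{m} J^{\top}(J(\vtheta-\vtheta_0)-\widehat{\vy})$ and, setting $\vu_t \triangleq \vtheta_t - \vtheta_0$, obtain the recursion $\vu_{t+1} = (\rmI - \tfrac{\eta}{m}J^{\top}J)\vu_t + \tfrac{\eta}{m}J^{\top}\widehat{\vy}$ with $\vu_0 = 0$.

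The key observation is that every iterate lives in the row space of $J$ (the column space of $J^{\top}$): this holds for $\vu_0 = 0$ and is preserved by the update, since both $J^{\top}J\vu_t$ and $J^{\top}\widehat{\vy}$ lie in $\text{col}(J^{\top})$. I would then introduce the thin SVD $J = U\Sigma V^{\top}$, where $U\in\sR^{m\times m}$ is orthogonal, $\Sigma = \text{diag}(\sigma_1,\dots,\sigma_m)$ with every $\sigma_j > 0$ (nonzero because $\lambda_{\min}(\rmTheta_0)>0$), and $V\in\sR^{d\times m}$ has orthonormal columns spanning $\text{col}(J^{\top})$. Since $\rmTheta_0 = U\Sigma^2U^{\top}$, its eigenvalues are exactly the $\sigma_j^2$. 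Projecting via $\vc_t \triangleq V^{\top}\vu_t$ (so that $\|\vu_t\|_2 = \|\vc_t\|_2$) decouples the recursion into $m$ scalar updates $c^{(j)}_{t+1} = (1 - \tfrac{\eta\sigma_j^2}{m})\,c^{(j)}_t + \tfrac{\eta}{m}\,(\Sigma U^{\top}\widehat{\vy})_j$.

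The learning-rate hypothesis then does the work. Because $\eta < m/\lambda_{\max}(\rmTheta_0) = m/\sigma_1^2$, each factor $\alpha_j \triangleq 1 - \tfrac{\eta\sigma_j^2}{m}$ lies in $(0,1)$, so solving the scalar recursion with $c^{(j)}_0 = 0$ gives $c^{(j)}_t = c^{(j)}_\infty(1-\alpha_j^t)$ with $c^{(j)}_\infty = (U^{\top}\widehat{\vy})_j/\sigma_j$. Since $\alpha_j\in(0,1)$, the factor $1-\alpha_j^t$ increases monotonically from $0$ to $1$, so $|c^{(j)}_t| \le |c^{(j)}_\infty|$ for every $j$ and every $t$. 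Summing over $j$ yields $\|\vu_t\|_2 \le \|\vu_\infty\|_2$, and a direct computation gives $\|\vu_\infty\|_2^2 = \sum_j (U^{\top}\widehat{\vy})_j^2/\sigma_j^2 = \widehat{\vy}^{\top}U\Sigma^{-2}U^{\top}\widehat{\vy} = \widehat{\vy}^{\top}\rmTheta_0^{-1}\widehat{\vy}$, which is precisely the claimed limiting value $\|\vtheta_\infty - \vtheta_0\|_2$.

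The main obstacle I anticipate is handling the rank deficiency cleanly: when $d>m$ the operator $J^{\top}J$ is singular, so I must argue carefully that gradient descent never excites the null-space directions (this is exactly what starting at $\vu_0=0$ guarantees) and that the nonzero spectrum of $J^{\top}J$ coincides with that of $\rmTheta_0=JJ^{\top}$. The second delicate point is that the statement asserts monotonicity of the iterate norm, not merely convergence; this non-oscillatory behavior relies on the strict inequality $\eta < m/\lambda_{\max}(\rmTheta_0)$ keeping every $\alpha_j$ strictly positive, rather than the weaker condition $\eta < 2m/\lambda_{\max}(\rmTheta_0)$ that would suffice for convergence alone.
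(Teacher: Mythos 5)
Your proposal is correct and follows essentially the same route as the paper's proof: both reduce gradient descent on the linearized model to decoupled scalar geometric recursions along the eigendirections of $\rmTheta_0$ (your $c^{(j)}_t = c^{(j)}_\infty(1-\alpha_j^t)$ is exactly the paper's partial geometric sum $\frac{\eta\lambda_i}{m}\sum_{k=0}^{t-1}(1-\frac{\eta}{m}\lambda_i)^k$), deduce monotonicity of the iterate norm from $\alpha_j \in (0,1)$, and evaluate the limit via the geometric series to obtain $\widehat{\vy}^{\top}\rmTheta_0^{-1}\widehat{\vy}$. Your explicit thin-SVD argument that the iterates never leave the row space of the Jacobian is a cleaner packaging of what the paper handles implicitly by writing $\vtheta_{t+1}-\vtheta_0$ as $\nabla_{\vtheta}f(\rmX,\vtheta_0)^{\top}$ applied to a vector, but the substance is the same.
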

\begin{proof}

Following the update of gradient descent on MSE with learning rate $\eta < m/\lambda_{\max}(\rmTheta_0)$, we have
\begin{equation}
\begin{aligned}
    \vtheta_{t+1} &= \vtheta_t - \frac{\eta}{m} \nabla_{\vtheta}f(\rmX, \vtheta_0)^{\top} \left(f^{\lin}(\rmX, \vtheta_t) - \vy\right) \ . \label{eq:sg-update}
\end{aligned}
\end{equation}
Note that $\nabla_{\vtheta}f(\rmX, \vtheta_0)$ is a $m \times d$ matrix and $f(\rmX, \vtheta_0), f^{\lin}(\rmX, \vtheta_0), \vy$ are $m$-dimensional column vectors. By subtracting $\vtheta_0$, multiplying $\nabla_{\vtheta}f(\rmX, \vtheta_0)$ and then adding $f(\rmX, \vtheta_0)$ on both sides of the equality above, we achieve
\begin{equation}
\begin{aligned}
    f(\rmX, \vtheta_0) + \nabla_{\vtheta}f(\rmX, \vtheta_0)(\vtheta_{t+1} - \vtheta_0) &= f(\rmX, \vtheta_0) + \nabla_{\vtheta}f(\rmX, \vtheta_0)(\vtheta_t - \vtheta_0) - \frac{\eta}{m} \rmTheta_0 \left(f^{\lin}(\rmX, \vtheta_t) - \vy\right) \ ,
\end{aligned}
\end{equation}
which can be simplified as
\begin{equation}
\begin{aligned}
    f^{\lin}(\rmX, \vtheta_{t+1}) &= f^{\lin}(\rmX, \vtheta_t) - \frac{\eta}{m}\vTheta_0\Big[f^{\lin}(\rmX, \vtheta_t) - \vy\Big] \\
    &= \left(\rmI - \frac{\eta}{m} \rmTheta_0\right) f^{\lin}(\rmX, \vtheta_t) + \frac{\eta}{m} \rmTheta_0\vy \ .
\end{aligned}
\end{equation}

By recursively applying the equality above for $t+1$ times, we finally achieve
\begin{equation}
\begin{aligned}
    f^{\lin}(\rmX, \vtheta_{t+1}) &\stackrel{(a)}{=} \left(\rmI -\frac{\eta}{m} \rmTheta_0 \right)^{t+1} f^{\lin}(\rmX, \vtheta_0) + \sum_{j=0}^t \left(\rmI -\frac{\eta}{m} \rmTheta_0 \right)^{j} \left(\frac{\eta}{m} \rmTheta_0\vy\right) \\
    &\stackrel{(b)}{=} \left(\rmI -\frac{\eta}{m} \rmTheta_0 \right)^{t+1} f(\rmX, \vtheta_0) + \left[\rmI - (\rmI -\frac{\eta}{m} \rmTheta_0)^{t+1}\right]\left(\frac{\eta}{m} \rmTheta_0\right)^{-1}\frac{\eta}{m}\rmTheta_0\vy \\[8.2pt]
    &\stackrel{(c)}{=} \left(\rmI -\frac{\eta}{m} \rmTheta_0 \right)^{t+1} \Big(f(\rmX, \vtheta_0) - \vy\Big) + \vy 
\end{aligned}
\end{equation}
where $(b)$ follows from the sum of geometric series for matrix with $\eta < m/\lambda_{\max}(\rmTheta_0)$ as well as the fact that $f^{\lin}(\rmX, \vtheta_0)=f(\rmX, \vtheta_0)$. Note that this result can be integrated into \eqref{eq:sg-update} and provide the following explicit form of $\vtheta_{t+1} - \vtheta_0$ after applying gradient descent for $t+1$ times:
\begin{equation}
\begin{aligned}
    \vtheta_{t+1} - \vtheta_0 &= \sum_{k=0}^{t} \vtheta_{k+1} - \vtheta_{k} \\
    &= \frac{\eta}{m} \nabla_{\vtheta}f(\rmX, \vtheta_0)^{\top} \sum_{k=0}^t \left(\rmI -\frac{\eta}{m} \rmTheta_0 \right)^{k} \left(\vy - f(\rmX, \vtheta_0)\right) \\
    &= \frac{\eta}{m} \nabla_{\vtheta}f(\rmX, \vtheta_0)^{\top} \sum_{k=0}^{t} (\rmI - \frac{\eta}{m} \rmTheta_0)^{k} \widehat{\vy}
\end{aligned}
\end{equation}

Since $\rmTheta_0$ is symmetric, we can alternatively represent $\rmTheta_0$ as $\rmTheta_0 = \rmV \rmLambda \rmV^{\top}$ using principal component analysis (PCA) where $\rmV$ and $\rmLambda$ denotes the matrix of eigenvectors $\{\vv_i\}_{i=1}^m$ and eigenvalues $\{\lambda_i\}_{i=1}^m$, respectively. Based on this representation, we have
\begin{equation}
\begin{aligned}
    \|\vtheta_{t+1} - \vtheta_0\|_2 &= \sqrt{(\vtheta_{t+1} - \vtheta_0)^{\top}(\vtheta_{t+1} - \vtheta_0)} \\
    & = \frac{\eta}{m}\sqrt{\widehat{\vy}^{\top} \sum_{k=0}^{t} (\rmI - \frac{\eta}{m} \rmTheta_0)^{k} \nabla_{\vtheta}f(\rmX, \vtheta_0) \nabla_{\vtheta}f(\rmX, \vtheta_0)^{\top} \sum_{k'=0}^{t} (\rmI - \frac{\eta}{m} \rmTheta_0)^{k'} \widehat{\vy}} \\
    & = \frac{\eta}{m}\sqrt{\widehat{\vy}^{\top} \sum_{k=0}^{t} (\rmI - \frac{\eta}{m} \rmTheta_0)^{k} \rmTheta_0 \sum_{k'=0}^{t} (\rmI - \frac{\eta}{m} \rmTheta_0)^{k'} \widehat{\vy}} \\
    & = \frac{\eta}{m}\sqrt{\widehat{\vy}^{\top} \sum_{k=0}^{t} (\rmI - \frac{\eta}{m} \rmV \rmLambda \rmV^{\top})^{k} \rmV \rmLambda \rmV^{\top} \sum_{k'=0}^{t} (\rmI - \frac{\eta}{m} \rmV \rmLambda \rmV^{\top})^{k'} \widehat{\vy}} \\
    & = \frac{\eta}{m}\sqrt{\widehat{\vy}^{\top} \rmV \sum_{k=0}^{t} (\rmI - \frac{\eta}{m} \rmLambda )^{k} \rmV^{\top} \rmV \rmLambda \rmV^{\top} \rmV \sum_{k'=0}^{t} (\rmI - \frac{\eta}{m} \rmLambda)^{k'} \rmV^{\top} \widehat{\vy}} \\
    & = \frac{\eta}{m}\sqrt{\widehat{\vy}^{\top} \rmV \sum_{k=0}^{t} (\rmI - \frac{\eta}{m} \rmLambda )^{k} \rmLambda \sum_{k'=0}^{t} (\rmI - \frac{\eta}{m} \rmLambda)^{k'} \rmV^{\top} \widehat{\vy}} \\
    & = \frac{\eta}{m}\sqrt{\sum_{i=1}^m \lambda_i\left[\sum_{k=0}^t(1 - \frac{\eta}{m}\lambda_i)^{k}\right]^2 (\vv_i^{\top}\widehat{\vy})^2} \ . \label{eq:param-norm}
\end{aligned}
\end{equation}

Since $\eta < m/\lambda_{\max}(\rmTheta_0)$ and $\lambda_{\min}(\rmTheta_0) > 0$, we have $0 < 1 - \eta\lambda_i/m < 1$ and hence

\begin{equation}
\begin{aligned}
     \|\vtheta_{t} - \vtheta_0\|_2 &= \frac{\eta}{m}\sqrt{\sum_{i=1}^m \lambda_i\left[\sum_{k=0}^{t-1}(1 - \frac{\eta}{m}\lambda_i)^{k}\right]^2 (\vv_i^{\top}\widehat{\vy})^2} \\
     &\leq \frac{\eta}{m}\sqrt{\sum_{i=1}^m \lambda_i\left[\sum_{k=0}^{t}(1 - \frac{\eta}{m}\lambda_i)^{k}\right]^2 (\vv_i^{\top}\widehat{\vy})^2} \\[10.2pt]
     &= \|\vtheta_{t+1} - \vtheta_0\|_2 
\end{aligned}
\end{equation}

We complete the proof by recursively applying the inequalities above
\begin{equation}
\begin{aligned}
    \|\vtheta_{t} - \vtheta_0\|_2 &\leq \|\vtheta_{\infty} - \vtheta_0\|_2 \\
    &= \frac{\eta}{m}\sqrt{\sum_{i=1}^m \lambda_i\left[\sum_{k=0}^{\infty}(1 - \frac{\eta}{m}\lambda_i)^{k}\right]^2 (\vv_i^{\top}\widehat{\vy})^2} \\
    &= \frac{\eta}{m}\sqrt{\sum_{i=1}^m \lambda_i\left[ \frac{1}{\eta\lambda_i/m} \right]^2 (\vv_i^{\top}\widehat{\vy})^2} \\
    &= \sqrt{\sum_{i=1}^m \lambda_i^{-1} (\vv_i^{\top}\widehat{\vy})^2} \\
    &= \sqrt{\widehat{\vy}^{\top}\rmTheta_0^{-1}\widehat{\vy}}
\end{aligned}
\end{equation}
\end{proof}

\begin{lemma}[\citet{rademacher-linear}]\label{th:rademacher-linear}
Let $\gG \triangleq \{\vx \mapsto \vw^T\vx: \|\vw\|_2 \leq R\}$ be a family of linear functions defined over $\sR^d$ with bounded weight. Then the empirical Rademacher complexity of $\gG$ for $m$ samples $S \triangleq (\vx_1, \cdots, \vx_m)$ admits the following upper bounds:
\begin{equation*}
    \gR_S(\gG) \leq \frac{R}{m} \|\rmX^{\top}\|_{2,2} 
\end{equation*}
where $\rmX$ is the $d\times m$-matrix with $\vx_i$s as columns: $\rmX \triangleq [\vx_1 \cdots \vx_m]$.
\end{lemma}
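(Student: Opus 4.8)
The plan is to unwind the definition of the empirical Rademacher complexity directly for the linear class $\gG$ and reduce the whole quantity to a single expected $\ell_2$-norm that can be computed exactly. First I would substitute $g(\vx)=\vw^\top\vx$ into the definition, interchange the finite sum with the inner product, and pull the weight vector out of the sum so that the supremum acts on a single random vector $\vu \triangleq \sum_{i=1}^m \epsilon_i\vx_i$:
\begin{equation*}
\gR_S(\gG) = \frac{1}{m}\,\E_{\vepsilon}\left[\sup_{\|\vw\|_2\leq R}\ \vw^\top\Big(\sum_{i=1}^m \epsilon_i\vx_i\Big)\right].
\end{equation*}

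Next I would use the fact that maximizing a linear functional over an $\ell_2$-ball of radius $R$ returns $R$ times the $\ell_2$-norm of the vector (Cauchy--Schwarz, tight at $\vw = R\vu/\|\vu\|_2$), which turns the supremum into an exact norm and gives $\gR_S(\gG) = (R/m)\,\E_{\vepsilon}\|\vu\|_2$. The remaining work is to control this expected norm: I would apply Jensen's inequality in the direction justified by concavity of $\sqrt{\cdot}$, namely $\E_{\vepsilon}\|\vu\|_2 \leq \sqrt{\E_{\vepsilon}\|\vu\|_2^2}$, and then expand the squared norm as the double sum $\sum_{i,j}\epsilon_i\epsilon_j\,\vx_i^\top\vx_j$.

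Because the $\epsilon_i$ are independent Rademacher variables, $\E[\epsilon_i\epsilon_j]$ equals $1$ when $i=j$ and $0$ otherwise, so every cross term vanishes and the expectation collapses to $\sum_{i=1}^m\|\vx_i\|_2^2 = \Tr(\rmX^\top\rmX)$. Taking the square root then yields $\gR_S(\gG)\leq (R/m)\sqrt{\sum_{i=1}^m\|\vx_i\|_2^2}$, and I would close the argument by identifying the right-hand side with $(R/m)\|\rmX^\top\|_{2,2}$, recognizing $\|\rmX^\top\|_{2,2}$ as the $(2,2)$ group norm $\sqrt{\sum_i\|\vx_i\|_2^2}$ of the matrix whose rows are the $\vx_i^\top$.

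Since this is a classical result quoted from \citet{rademacher-linear}, there is no genuine obstacle in the argument itself; the only points requiring care are applying Jensen's inequality in the correct direction (the expected norm is bounded \emph{above} by the root of the expected squared norm) and the bookkeeping that matches the resulting sum of squared norms to the stated matrix-norm notation $\|\rmX^\top\|_{2,2}$. The independence-driven cancellation of the off-diagonal terms is the one computational step worth writing out explicitly, as it is what makes the bound exact rather than merely an inequality.
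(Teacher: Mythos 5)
Your proof is correct. The paper does not prove this lemma at all --- it imports it verbatim from the cited reference \citep{rademacher-linear} --- so there is no in-paper argument to compare against; your derivation is the standard one from that source: reduce the supremum over the $\ell_2$-ball to $R\,\E_{\vepsilon}\bigl\|\sum_{i}\epsilon_i\vx_i\bigr\|_2$ via Cauchy--Schwarz, bound the expected norm by Jensen, and use independence of the Rademacher variables to kill the cross terms, leaving $\sqrt{\sum_{i}\|\vx_i\|_2^2}=\|\rmX^{\top}\|_{2,2}$. All steps, including the direction of Jensen's inequality and the identification of the $(2,2)$ group norm, are handled correctly.
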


Based on our Lemma \ref{th:bounded-complexity} and Lemma \ref{th:converged-parameter}, we can finally bound the Rademacher complexity of a DNN model during its model training (i.e., $\gF$) using its linearization model (i.e., $\gF^{\lin}$). Specifically, under the conditions in Theorem \ref{th:ntk-linear} and Lemma~\ref{th:converged-parameter}, there exist the constant $c>0$ and $N>0$ such that for any $n>N$, with probability at least $1 - \delta$ over initialization, we have
\begin{equation}
\begin{aligned}
    \gR_S(\gF) &\stackrel{(a)}{\leq} \gR_S(\gF^{\lin}) + \frac{c}{\sqrt{n}} \\
    &\stackrel{(b)}{=} \E_{\vepsilon \in \{\pm1\}^m}\left[\sup_{t \ge 0} \frac{1}{m} \sum_{i=1}^m \epsilon_i \left(f(\vx_i, \vtheta_0) + \nabla_{\vtheta}f(\vx_i, \vtheta_0)^{\top} (\vtheta_t - \vtheta_0)\right) \right] + \frac{c}{\sqrt{n}} \\
    &\stackrel{(c)}{=} \E_{\vepsilon \in \{\pm1\}^m}\left[\sup_{t \ge 0}\frac{1}{m}\sum_{i=1}^m \epsilon_i \nabla_{\vtheta}f(\vx_i, \vtheta_0)^{\top} (\vtheta_t - \vtheta_0)\right] + \frac{1}{m}\sum_{i=1}^m \E_{\vepsilon \in \{\pm1\}^m}\left[\eps_i\right] f(\vx_i, \vtheta_0) + \frac{c}{\sqrt{n}} \\[5.2pt]
    &\stackrel{(d)}{\leq} \frac{\|\vtheta_{\infty} - \vtheta_0\|_2 \|\nabla_{\vtheta}f(\rmX, \vtheta_0)\|_{2,2}}{m} + \frac{c}{\sqrt{n}} \\[5.2pt]
    &\stackrel{(e)}{\leq} \frac{\|\nabla_{\vtheta}f(\rmX, \vtheta_0)\|_{2,2}}{m} \sqrt{\widehat{\vy}^{\top}\rmTheta_0^{-1}\widehat{\vy}} + \frac{c}{\sqrt{n}} \\
    &\stackrel{(f)}{\leq} \sqrt{\kappa\lambda_0} \cdot \sqrt{\frac{\widehat{\vy}^{\top}\rmTheta_0^{-1}\widehat{\vy}}{m}} + \frac{c}{\sqrt{n}} \label{eq:dnn-complexity}
\end{aligned}
\end{equation}
where $(d)$ derives from Lemma~\ref{th:rademacher-linear} and $(f)$ derives from the following inequalities based on the definition $\kappa \triangleq \lambda_{\max}(\rmTheta_0) / \lambda_{\min}(\rmTheta_0)$ and $\lambda_0 \triangleq \lambda_{\min}(\rmTheta_0)$.
\begin{equation}
\begin{aligned}
    \|\nabla_{\vtheta}f(\rmX, \vtheta_0)\|_{2,2} &= \sqrt{\sum_{i=1}^m \|\nabla_{\vtheta}f(\vx_i, \vtheta_0)\|_2^2} \\
    &=\sqrt{\sum_{i=1}^m \lambda_i(\rmTheta_0)} \\[8.2pt]
    &\leq \sqrt{m\kappa\lambda_0} \ .
\end{aligned}
\end{equation}

\subsubsection{Deriving the Generalization Bound for DNNs using Training-free Metrics}
Define the generalization error on the data distribution $\gD$ as $\Ls_{\gD}(g) \triangleq \E_{(\vx,y) \sim \gD}\ell(g(\vx), y)$ and the empirical error on the dataset $S=\{(\vx_i, y_i)\}_{i=1}^m$ that is randomly sampled from $\gD$ as $\Ls_S(g) \triangleq \sum_{i=1}^m\ell(g(\vx_i), y_i)$. Given the loss function $\ell(\cdot, \cdot)$ and the Rademacher complexity of any hypothesis class $\gG$, the generalization error on the hypothesis class $\gG$ can then be estimated by the empirical error using the following lemma.
\begin{lemma}[\citet{foundation}]\label{th:rademacher-generalization}
Suppose the loss function $\ell(\cdot, \cdot)$ is bounded in $[0, 1]$ and is $\beta$-Lipschitz continuous in the first argument. Then with probability at least $1-\delta$ over dataset $S$ of size $m$:
\begin{equation*}
    \sup_{g \in \gG}\{\Ls_{\gD}(g) - \Ls_S(g)\} \leq 2\beta\gR_S(\gG) + 3\sqrt{\ln(2/\delta)/(2m)} \ .
\end{equation*}
\end{lemma}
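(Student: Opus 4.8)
The plan is to establish this via the standard symmetrization-plus-concentration route for data-dependent Rademacher bounds, carried out on the induced loss class and only transferred to $\gG$ at the very end through a Lipschitz contraction. First I would introduce the loss class $\gH \triangleq \{(\vx,y) \mapsto \ell(g(\vx), y) : g \in \gG\}$ and the supremum deviation $\Phi(S) \triangleq \sup_{g \in \gG}\{\Ls_{\gD}(g) - \Ls_S(g)\}$, reading $\Ls_S$ as the normalized empirical risk $\tfrac1m\sum_i \ell(g(\vx_i), y_i)$ so that the constants below come out as stated. Since $\ell \in [0,1]$, replacing a single sample changes $\Phi(S)$ by at most $1/m$; McDiarmid's inequality then gives, with probability at least $1-\delta/2$, the bound $\Phi(S) \leq \E_S[\Phi(S)] + \sqrt{\log(2/\delta)/(2m)}$.

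Next I would control $\E_S[\Phi(S)]$ by a ghost-sample symmetrization: introducing an independent copy $S'$ and Rademacher variables $\vepsilon$, the usual two-line argument yields $\E_S[\Phi(S)] \leq 2\,\E_S[\gR_S(\gH)]$, i.e.\ twice the expected Rademacher complexity of the loss class. To replace this expectation by the empirical complexity appearing in the statement, I would apply McDiarmid a second time, now to the map $S \mapsto \gR_S(\gH)$, which also has bounded differences $1/m$ because $\ell$ is $[0,1]$-valued, obtaining with probability at least $1-\delta/2$ that $\E_S[\gR_S(\gH)] \leq \gR_S(\gH) + \sqrt{\log(2/\delta)/(2m)}$.

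The remaining step is Talagrand's contraction (Ledoux--Talagrand) lemma: because $\ell(\cdot, y)$ is $\beta$-Lipschitz in its first argument uniformly in $y$, one has $\gR_S(\gH) \leq \beta\,\gR_S(\gG)$ pointwise in $S$, which strips the loss off and leaves the Rademacher complexity of the hypothesis class $\gG$. A union bound over the two failure events of mass $\delta/2$ then combines the three displays; collecting the two identical $\sqrt{\log(2/\delta)/(2m)}$ deviation terms (one from each McDiarmid application) together with the factor-$2$ symmetrization slack reproduces the claimed $2\beta\,\gR_S(\gG) + 3\sqrt{\log(2/\delta)/(2m)}$.

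I expect the only genuinely delicate points to be bookkeeping rather than ideas: keeping the two concentration events each at level $\delta/2$ (so that the numerator is $\log(2/\delta)$ and the deviation constant is exactly $3$ rather than something larger), and invoking the contraction lemma in the correct form so that the Lipschitz constant $\beta$ multiplies only the Rademacher term and not the concentration slack---which is why I defer the contraction to the last step, after the empirical complexity of $\gH$ has already been isolated. Since the statement is quoted essentially verbatim from \citet{foundation}, in the paper itself it would suffice to cite it directly; the above merely records how one derives it.
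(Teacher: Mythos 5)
Your proposal is correct and is exactly the standard derivation (McDiarmid on the supremum deviation, ghost-sample symmetrization, a second McDiarmid to pass from expected to empirical Rademacher complexity, and Talagrand's contraction to strip the $\beta$-Lipschitz loss), with the constants $3$ and $\log(2/\delta)$ accounted for correctly since the factor-$2$ symmetrization slack multiplies the second deviation term. The paper does not prove this lemma but quotes it directly from \citet{foundation}, and your argument is precisely the proof given there, so there is nothing further to compare.
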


\begin{lemma}\label{th:inverse-trace}
    For a symmetric matrix $\rmA \in \sR^{m \times m}$ with eigenvalues $\{\lambda_i\}_{i=1}^m$ in an ascending order, define $\kappa \triangleq \lambda_m / \lambda_1$, the following inequality holds if $\lambda_1 > 0$,
    \begin{equation*}
        \left\|\rmA\right\|_{\normalfont\tr}\left\|\rmA^{-1}\right\|_{\normalfont\tr} \leq m^2\kappa \ .
    \end{equation*}
\end{lemma}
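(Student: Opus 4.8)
The plan is to reduce the entire statement to a spectral computation. Since $\rmA$ is symmetric and its smallest eigenvalue $\lambda_1$ is positive, the ascending ordering gives $0 < \lambda_1 \le \lambda_2 \le \cdots \le \lambda_m$, so $\rmA$ is positive definite. First I would record that for a symmetric positive-definite matrix the trace norm (sum of singular values) coincides with the sum of eigenvalues, since the singular values are exactly the positive eigenvalues; hence $\|\rmA\|_{\tr} = \sum_{i=1}^m \lambda_i$. Likewise $\rmA^{-1}$ is symmetric positive definite with eigenvalues $\{1/\lambda_i\}_{i=1}^m$, so $\|\rmA^{-1}\|_{\tr} = \sum_{i=1}^m 1/\lambda_i$. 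This rewrites the left-hand side as the product $\left(\sum_{i=1}^m \lambda_i\right)\left(\sum_{i=1}^m 1/\lambda_i\right)$.

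Next I would bound each factor by replacing every summand with the corresponding extreme eigenvalue. Because $\lambda_i \le \lambda_m$ for each $i$, we have $\sum_{i=1}^m \lambda_i \le m\lambda_m$; because $1/\lambda_i \le 1/\lambda_1$ for each $i$, we have $\sum_{i=1}^m 1/\lambda_i \le m/\lambda_1$. Multiplying these two estimates yields $\|\rmA\|_{\tr}\|\rmA^{-1}\|_{\tr} \le (m\lambda_m)(m/\lambda_1) = m^2 \lambda_m/\lambda_1 = m^2\kappa$, which is precisely the claimed bound.

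The only point requiring a word of justification — and it is mild bookkeeping rather than a genuine obstacle — is the identification of the trace norm with the sum of eigenvalues, which relies on positive definiteness (guaranteed by $\lambda_1 > 0$) so that all singular values are positive eigenvalues. Beyond that, the inequality is an immediate consequence of the two coordinatewise bounds $\lambda_i \le \lambda_m$ and $1/\lambda_i \le 1/\lambda_1$, so no sharper tool (such as Cauchy--Schwarz or an AM--HM argument) is needed to reach the stated constant $m^2\kappa$.
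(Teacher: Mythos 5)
Your proof is correct and follows essentially the same route as the paper: both identify $\|\rmA\|_{\tr}\|\rmA^{-1}\|_{\tr}$ with $\left(\sum_i \lambda_i\right)\left(\sum_i \lambda_i^{-1}\right)$ and then apply the coordinatewise bounds $\lambda_i \le \lambda_m$ and $1/\lambda_i \le 1/\lambda_1$ to obtain $m\lambda_m \cdot m/\lambda_1 = m^2\kappa$ (the paper writes these bounds in the equivalent form $\lambda_m/\kappa \le \lambda_i \le \lambda_1\kappa$). Your explicit justification that the trace norm equals the sum of eigenvalues for a positive-definite matrix is a detail the paper leaves implicit, but it changes nothing substantive.
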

\begin{proof}
Since eigenvalues $\{\lambda_i\}_{i=1}^m$ are in an ascending order, we have
\begin{align}
    \frac{\lambda_m}{\kappa} \leq \lambda_i \leq \lambda_1\kappa \ .
\end{align}
Based on the results above, we can connect the matrix norm $\|\rmA\|_{\tr}$ and $\|\rmA^{-1}\|_{\tr}$ with
\begin{equation}
\begin{aligned}
    \left\|\rmA\right\|_{\tr}\left\|\rmA^{-1}\right\|_{\tr} = (\sum_{i=1}^m \lambda_i) \cdot (\sum_{i=1}^m \lambda_i^{-1}) \leq \left(m\lambda_1\kappa\right)\cdot\frac{m\kappa}{\lambda_m} = \frac{m^2\kappa^2}{\kappa} = m^2\kappa \ ,
\end{aligned}
\end{equation}
which concludes the proof.
\end{proof}

We are now able to prove Theorem \ref{th:generalization} by combining the results in Lemma~\ref{th:rademacher-generalization} and \eqref{eq:dnn-complexity}. Specifically, under the conditions in Theorem \ref{th:ntk-linear} and Lemma~\ref{th:converged-parameter}, there exist constant $c, N>0$ such that for any $f_t \in \gF$ and any $n > N$, the following holds with probability at least $1-2\delta$ over random initialization,
\begin{equation}
\begin{aligned}
    \Ls_{\gD}(f_t) &\leq \Ls_S(f_t) + 2\beta\gR_S(\gF) + 3\sqrt{\frac{\ln(2/\delta)}{2m}} \\
    &\leq \Ls_S(f_t) + 2\beta\sqrt{\kappa\lambda_0} \cdot \sqrt{\frac{\widehat{\vy}^{\top}\rmTheta_0^{-1}\widehat{\vy}}{m}} + \frac{2\beta c}{\sqrt{n}} + 3\sqrt{\frac{\ln(2/\delta)}{2m}} \ . \label{eq:generalization-bound-ntk}
\end{aligned}
\end{equation}

Assume $f(\vx, \vtheta_0)$ and $y$ are bounded in $[0,1]$ for any pair $(\vx, y)$ in the dataset $S$, let $\{\vv_i\}_{i=1}^m$ and $\{\lambda_i\}_{i=1}^m$ be the eigenvectors and eigenvalues of $\rmTheta_0$, respectively, we then have $\widehat{y} \in [-1,1]^m$ and the following inequalities:
\begin{equation}
\begin{aligned}
    \widehat{\vy}^{\top}\rmTheta_0^{-1}\widehat{\vy} &= \sum_{i=1}^m \frac{(\vv_i^{\top}\widehat{\vy})^2}{\lambda_i} \leq \sum_{i=1}^m \frac{\|\vv_i\|_2^2\|\widehat{\vy}\|_2^2}{\lambda_i} \leq \sum_{i=1}^m \frac{m}{\lambda_i} \ .
\end{aligned}
\end{equation}

Based on the fact that $\|\rmTheta_0\|_{\tr} = \sum_{i=1}^m \lambda_i$ and Lemma~\ref{th:inverse-trace}, we finally achieve
\begin{equation}
\begin{aligned}
    \sqrt{\frac{\widehat{\vy}^{\top}\rmTheta_0^{-1}\widehat{\vy}}{m}} &\leq \sqrt{\left\|\rmTheta_0^{-1}\right\|_{\tr}} \leq \frac{m\sqrt{\kappa}}{\sqrt{\|\rmTheta_0\|_{\tr}}} = \frac{\sqrt{m\kappa}}{\gM_{\tracenorm}} \ . \label{eq:bounded-complexity}
\end{aligned}
\end{equation}

By introducing \eqref{eq:bounded-complexity} into \eqref{eq:generalization-bound-ntk}, with $\lambda_0 \leq 1$, we have
\begin{equation}
\begin{aligned}
    \Ls_{\gD}(f_t) 
    % &\leq \Ls_S(f_t) + 2\beta\sqrt{\kappa\lambda_0} \cdot \sqrt{\frac{\widehat{\vy}^{\top}\rmTheta_0^{-1}\widehat{\vy}}{m}} + \frac{2\beta c}{\sqrt{n}} + 3\sqrt{\frac{\ln(2/\delta)}{2m}} \\
    &\leq \Ls_S(f_t) + \frac{2\beta\kappa\sqrt{m}}{\gM_{\tracenorm}} + \frac{2\beta c}{\sqrt{n}} + 3\sqrt{\frac{\ln(2/\delta)}{2m}} \ . \label{eq:bound-tracenorm}
\end{aligned}
\end{equation}

Let $\gM$ be any metric introduced in Sec.~\ref{sec:metrics}, based on the results in our Theorem~\ref{th:connection} and the definition of $\gO(\cdot)$, the following inequality then holds with a high probability using the result above:
\begin{equation}
\begin{aligned}
    \Ls_{\gD}(f_t) \leq \Ls_{S}(f_t) + \gO(\kappa /\gM) \ , \label{eq:generalization-trace}
\end{aligned}
\end{equation}
which finally concludes our proof of Theorem~\ref{th:generalization}.

\begin{remark}
\emph{
Our \eqref{eq:generalization-trace} still holds when $\lambda_0 \leq z (z \neq 1)$, i.e., by simply placing $z$ into our \eqref{eq:bound-tracenorm}.
% As for $\lambda_{\min}(\rmTheta_0) > 0$, it can usually be satisfied by introducing zero-mean noise into the gradient of model parameters and our Theorem~\ref{th:generalization} will still hold with a high probability in this case.
Though our conclusion is based on the initialization using standard normal distribution and over-parameterized DNNs, our empirical results in Appendix~\ref{sec:app:ablation} show that this conclusion can also hold for DNNs initialized using other methods and also DNNs of small layer width.
}
%shuyao: explain lambda_0 can be smaller than c instead of 1
\end{remark}

\subsection{Proof of Corollary~\ref{corol:generalization-non-realizable}}\label{sec:proof:corol:generalization-non-realizable}
To prove our Corollary~\ref{corol:generalization-non-realizable}, we firstly consider the convergence of $f_t^{\lin}$ under the same conditions in Theorem~\ref{th:generalization}. Specifically, following the notations and results in Lemma~\ref{th:converged-parameter}, let $\{\vv_i\}_{i=1}^m$ and $\{\lambda_i\}_{i=1}^m$ be the eigenvectors and eigenvalues of $\rmTheta_0$, respectively, we have
\begin{equation}
\begin{aligned}
    \Ls_S(f_t^{\lin}) &\stackrel{(a)}{=} \frac{1}{2m}\left\|f^{\lin}(\rmX, \vtheta_t) - \vy\right\|^2_2 \\
    &\stackrel{(b)}{=} \frac{1}{2m}\left\|\left(\rmI - \frac{\eta}{m}\rmTheta_0\right)^{t}\left(f(\rmX, \vtheta_0) - \vy\right)\right\|^2_2 \\
    &\stackrel{(c)}{=} \frac{1}{2m}\left\|\left(\rmI - \frac{\eta}{m}\rmTheta_0\right)^{t}\widehat{\vy}\right\|^2_2 \\
    &\stackrel{(d)}{=} \frac{1}{2m}\sum_{i=1}^m\left(1-\frac{\eta}{m}\lambda_i\right)^{2t}\left(\vv_i^{\top}\widehat{\vy}\right)^2 \\
    &\stackrel{(e)}{\leq} \frac{1}{2m}\sum_{i=1}^m\left(1-\frac{\eta}{m}\lambda_i\right)^{2t}\left\|\vv_i\right\|_2^2\left\|\widehat{\vy}\right\|_2^2 
\end{aligned}
\end{equation}
where $(d)$ follows the same derivation in \eqref{eq:param-norm}. Moreover, based on $\widehat{\vy} \in [-1,1]^m$ and the fact that $\|\vv_i\|_2 = 1$, for any $t>0$ (i.e., $t= 1,2,\cdots$), we naturally have
\begin{equation}
\begin{aligned}
    \Ls_S(f_t^{\lin}) &\stackrel{(a)}{\leq} \frac{1}{2}\sum_{i=1}^m \left(1-\frac{\eta}{m}\lambda_i\right)^{2t} \\
    &\stackrel{(b)}{\leq} \frac{1}{2}\left(\sum_{i=1}^m 1-\frac{\eta}{m}\lambda_i\right)^{2t} \\[3pt]
    &\stackrel{(c)}{=} \frac{1}{2}\left(m - \frac{\eta}{m} \|\rmTheta_0\|_{\tr}\right)^{2t} \\[3pt]
    &\stackrel{(d)}{=} \frac{1}{2}\left(m - \eta \gM^2_{\tracenorm}\right)^{2t} \\[3pt]
    &\stackrel{(e)}{\leq} \frac{1}{2}\left(m - \eta\gM^2 / C\right)^{2t} 
\end{aligned}
\end{equation}
where 
% $(c)$ derives from the fact that $0 < 1 - \eta\lambda_i/m < 1$ as $\eta < m/\lambda_{\max}(\rmTheta_0)$ and $\lambda_{\min}(\rmTheta_0) > 0$. In addition, 
$(e)$ is based on the results in our Theorem \ref{th:connection}: For any training-free metric $\gM$ introduced in Sec.~\ref{sec:metrics}, there exists a constant $C$ such that the following holds with a high probability,
\begin{equation}
\begin{aligned}
\gM^2 \leq C\gM^2_{\tracenorm} \quad \Rightarrow \quad m - \eta\gM^2/C \geq m - \eta \gM^2_{\tracenorm} \ .
\end{aligned}
\end{equation}

Based on Lemma~\ref{th:ntk-linear} and the fact that loss function $\ell(f,y)=(f-y)^2/2$ is 1-Lipschitz continuous in the first argument, the following then holds with a high probability
\begin{equation}
\begin{aligned}
\left|\Ls_S(f_t) - \Ls_S(f_t^{\lin})\right| \leq \left|f_t - f_t^{\lin}\right| \leq \gO(\frac{1}{\sqrt{n}}) \ .
\end{aligned}
\end{equation}

By introducing the results above into our Theorem~\ref{th:generalization} with $1/\sqrt{n}$ being absorbed in $\gO(\cdot)$, we finally achieve the following results with a high probability,
\begin{equation}
\begin{aligned}
\Ls_{\gD}(f_t) &\leq \Ls_{S}(f_t) + \gO(\kappa/\gM) \leq \Ls_{S}(f_t^{\lin}) + \gO(\kappa/\gM) \\
&\leq \frac{1}{2}\left(m - \eta\gM^2 / C\right)^{2t} + \gO(\kappa/\gM) \ ,
\end{aligned}
\end{equation}
which thus concludes our proof.

\subsection{Proof of Theorem~\ref{th:topology}}\label{sec:proof:th:topology}

\begin{figure}[t]
\centering
\begin{subfigure}[t]{0.25\textwidth}
\centering
\includegraphics[width=\textwidth]{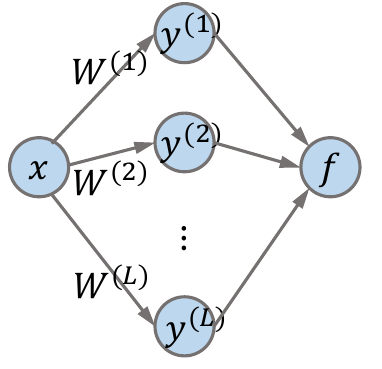}
\caption{Wide architecture}
\end{subfigure}
\hspace{8em}
\begin{subfigure}[t]{0.24\textwidth}
\centering
\includegraphics[width=\textwidth]{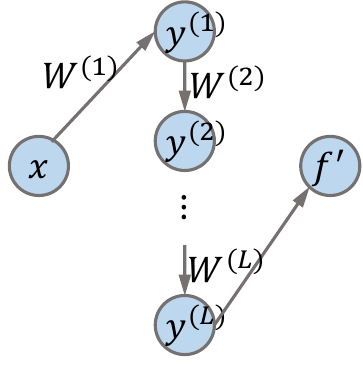}
\caption{Deep architecture}
\end{subfigure}
\caption{Two different architecture topologies for our analysis.}
\label{fig:topology}
\vskip -0.1in
\end{figure}

Let $\rmW^{(i)}_{j\cdot}$ denote the $j$-th row of matrix $\rmW^{(i)}$, based on the definition of $f$ and $f'$ in Sec.~\ref{sec:topology}, we can compute the gradient (represented as a column vector) of $\rmW^{(i)}_{j\cdot}$ for function $f$ and $f'$ respectively as below
\begin{equation}
\begin{aligned}
\nabla_{\rmW^{(i)}_{j\cdot}}f(\vx) &= \vx \\
\nabla_{\rmW^{(i)}_{j\cdot}}f'(\vx) &= \left(\prod_{k'=1}^{i-1}\rmW^{(k')}\vx\right) \vone^{\top}\left(\prod_{k=i+1}^L \rmW^{(k)}\right)_{\cdot j}  \label{eq:topology-ntk}
\end{aligned}
\end{equation}
where $\left(\prod_{k=i+1}^L \rmW^{(k)}\right)_{\cdot j}$ is defined as the $j$-th column of matrix $\left(\prod_{k=i+1}^L \rmW^{(k)}\right)$, i.e., 
\begin{equation}
\begin{aligned}
\left(\prod_{k=i+1}^L \rmW^{(k)}\right)_{\cdot j} \triangleq \left(\rmW^{(i+1)}\cdots\rmW^{(L)}\right)_{\cdot j} = \rmW^{(L)}\rmW^{(L-1)}\cdots\rmW^{(i+1)}_{\cdot j} \ ,
\end{aligned}
\end{equation}

Consequently, the NTK matrix of initialized wide architecture can be represented as
\begin{equation}
\begin{aligned}
\rmTheta_0(\vx, \vx') &= \sum_{i=1}^L \sum_{j=1}^{n} \left(\nabla_{\rmW^{(i)}_{j\cdot}}f(\vx)\right)^{\top}\nabla_{\rmW^{(i)}_{j\cdot}}f(\vx') \\
&= \sum_{i=1}^L \sum_{j=1}^{n}\vx^{\top}\vx' = nL\cdot\vx^{\top}\vx' \ .
\end{aligned}
\end{equation}

Meanwhile, the NTK matrix of initialized deep architecture can be represented as
\begin{equation}
\begin{aligned}
\rmTheta'_0(\vx, \vx') &=\sum_{i=1}^L \sum_{j=1}^{n} \left(\nabla_{\rmW^{(i)}_{j\cdot}}f'(\vx)\right)^{\top}\nabla_{\rmW^{(i)}_{j\cdot}}f'(\vx) \\
&=\sum_{i=1}^L \sum_{j=1}^{n} \left(\left(\prod_{k'=1}^{i-1}\rmW^{(k')}\vx\right) \vone^{\top}\left(\prod_{k=i+1}^L \rmW^{(k)}\right)_{\cdot j}\right)^{\top} \left(\prod_{k'=1}^{i-1}\rmW^{(k')}\vx'\right) \vone^{\top}\left(\prod_{k=i+1}^L \rmW^{(k)}\right)_{\cdot j} \\
&=\sum_{i=1}^L \sum_{j=1}^{n} \left(\vone^{\top}\left(\prod_{k=i+1}^L \rmW^{(k)}\right)_{\cdot j}\right)^2 \vx^{\top} \left(\prod_{k'=1}^{i-1}\rmW^{(k')}\right)^{\top} \left(\prod_{k'=1}^{i-1}\rmW^{(k')}\right)\vx' \\
&=\vx^{\top} \sum_{i=1}^L \sum_{j=1}^{n} \left(\vone^{\top}\left(\prod_{k=i+1}^L \rmW^{(k)}\right)_{\cdot j}\right)^2 \left(\prod_{k'=1}^{i-1}\rmW^{(k')}\right)^{\top} \left(\prod_{k'=1}^{i-1}\rmW^{(k')}\right)\vx' \ .
\end{aligned}
\end{equation}

Since each element in $\rmW^{(i)}$ is initialized using standard normal distribution, we have following simplified expectation by exploring the fact that $\E\left[\rmW^{(i)}\right] = \vzero\vzero^{\top}$ and $\E\left[\left(\rmW^{(i)}\right)^{\top}\rmW^{(i)}\right] = n\rmI$.
\begin{equation}
\begin{aligned}
\E\left[\left(\prod_{k'=1}^{i-1}\rmW^{(k')}\right)^{\top}\prod_{k'=1}^{i-1}\rmW^{(k')}\right] &= \E\left[\left(\rmW^{(1)}\right)^{\top}\cdots\left(\rmW^{(i-1)}\right)^{\top}\rmW^{(i-1)}\cdots\rmW^{(1)}\right] \\
&=\E\left[\left(\rmW^{(1)}\right)^{\top}\E\left[\cdots\E\left[\left(\rmW^{(i-1)}\right)^{\top}\rmW^{(i-1)}\right]\cdots\right]\rmW^{(1)}\right] \\
&=\E\left[\left(\rmW^{(1)}\right)^{\top}\E\left[\cdots\E\left[\left(\rmW^{(i-2)}\right)^{\top}\left(n\rmI\right)\rmW^{(i-2)}\right]\cdots\right]\rmW^{(1)}\right] \\[6.2pt]
&=n^{i-1}\rmI \ .
\end{aligned}
\end{equation}

Similarly, we also have
% \begin{equation}
% \begin{aligned}
% \E\left[\vone^{\top}\left(\prod_{k=i+1}^L \rmW^{(k)}\right)_{\cdot j}\right] &=\vone^{\top}\E\left[\left(\prod_{k=i+1}^L \rmW^{(k)}\right)_{\cdot j}\right] \\
% &= \vone^{\top}\E\left[\rmW^{(L)}\right]\E\left[\rmW^{(L-1)}\right] \cdots \E\left[\rmW^{(i+1)}_{\cdot j}\right] \\[8.2pt]
% &= 0 \ .
% \end{aligned}
% \end{equation}
\begin{equation}
\begin{aligned}
\E\left[\left(\vone^{\top}\left(\prod_{k=i+1}^L \rmW^{(k)}\right)_{\cdot j}\right)^2\right] &=\vone^{\top} \E\left[\left(\prod_{k=i+1}^L \rmW^{(k)}\right)_{\cdot j}\left(\left(\prod_{k=i+1}^L \rmW^{(k)}\right)_{\cdot j}\right)^{\top}\right] \vone \\
&=\vone^{\top}\E\left[\rmW^{(L)} \E\left[\cdots\E\left[\rmW^{(i+1)}_{\cdot j}\left(\rmW^{(i+1)}_{\cdot j}\right)^{\top}\right]\cdots\right]\left(\rmW^{(L)}\right)^{\top}\right]\vone \\
&=\vone^{\top}\E\left[\rmW^{(L)} \E\left[\cdots \E\left[\rmW^{(i+2)}\rmI \left(\rmW^{(i+2)}\right)^{\top}\right] \cdots\right]\left(\rmW^{(L)}\right)^{\top}\right]\vone \\
&=\vone^{\top}\E\left[\rmW^{(L)} \E\left[\cdots \E\left[\rmW^{(i+3)}n\rmI \left(\rmW^{(i+3)}\right)^{\top}\right] \cdots\right]\left(\rmW^{(L)}\right)^{\top}\right]\vone \\
&= n^{l-i-1}\vone^{\top}\vone\\[8.2pt]
&= n^{L-i} \ .
\end{aligned}
\end{equation}

Since $\rmW^{(i)}$ in each layer is initialized independently, we achieve the following result by introducing the equality above and expectation over model parameters into \eqref{eq:topology-ntk}.
\begin{equation}
\begin{aligned}
\E\left[\rmTheta'_0(\vx,\vx')\right] &= \vx^{\top} \E\left[\sum_{i=1}^L \sum_{j=1}^{n} \left(\vone^{\top}\left(\prod_{k=i+1}^L \rmW^{(k)}\right)_{\cdot j}\right)^2 \left(\prod_{k'=1}^{i-1}\rmW^{(k')}\right)^{\top} \left(\prod_{k'=1}^{i-1}\rmW^{(k')}\right)\right]\vx' \\
&= \vx^{\top}\left(\sum_{i=1}^L \sum_{j=1}^{n} \E\left[\left(\vone^{\top}\left(\prod_{k=i+1}^L \rmW^{(k)}\right)_{\cdot j}\right)^2\right]\E\left[ \left(\prod_{k'=1}^{i-1}\rmW^{(k')}\right)^{\top}\prod_{k'=1}^{i-1}\rmW^{(k')}\right]\right) \vx' \\
&= \vx^{\top}\left(\sum_{i=1}^L \sum_{j=1}^{n} n^{L-i}\cdot n^{i-1}\rmI \right) \vx' \\[8.2pt]
&= Ln^L \vx^{\top}\vx' \ .
\end{aligned}
\end{equation}

By exploiting the fact that $\rmX^{\top}\rmX = \rmI$ with $\rmX \triangleq [\vx_1 \vx_2\cdots\vx_m]$, we finally conclude the proof by 
\begin{equation}
\begin{aligned}
    \rmTheta_0(\rmX, \rmX) &= Ln\cdot \rmI \\
    \E\left[\rmTheta'_0(\rmX, \rmX)\right] &= Ln^L  \cdot \rmI \ .
\end{aligned}
\end{equation}

\section{Optimization and Experimental Details}
\subsection{Optimization Details for Algorithm \ref{alg:hnas}}\label{sec:app:opt-details}

\paragraph{Solution to the Training-Free NAS Objective \eqref{eq:nas-final}.} Following the common practice in \citep{naswot, knas}, to solve \eqref{eq:nas-final} for the every iteration of our Algorithm \ref{alg:hnas} in practice, we independently and randomly sample a large pool of architectures from the search space to evaluate their training-free metrics and then select the architecture achieving the optimum value of   \eqref{eq:nas-final} (given the values of $\mu$ and $\nu$) from all sampled architectures. 
Meanwhile, following the common practice in \citep{zero-cost}, the training-free metrics of these sampled architectures are evaluated using a batch of sampled data as introduced in Sec.~\ref{sec:exp:connection}.

\paragraph{Introduction to the BO Applied in HNAS.} BO is a type of gradient-free optimization algorithm aiming to optimize a black-box or non-differentiable objective function by iteratively selecting an input (to only evaluate/query its function value) that intuitively trades off between sampling an input likely achieving optimum (i.e., exploitation) given the current belief of the function modeled by a Gaussian process (GP) vs. improving the GP belief over the entire input domain (i.e., exploration) to guarantee finding the global optimum, which recently has been widely extended to various real-world problem settings in order to achieve better optimization in practice \citep{sto-bnts, SebICML22, metaBO, phong-aaai-2021, phong-uai-2021, dai2020federated, dai2021differentially, bala20, sim2021collaborative, yehong2019, dai2019, dai2020, verma2022bayesian}. 
Since we adopt the non-differentiable validation performance (i.e., validation error) as the objective function to be optimized (over $\mu$ and $\nu$) in our Algorithm \ref{alg:hnas}, BO will naturally be a better choice to find the optimal $\mu$ and $\nu$ compared with gradient-based optimization algorithms, and therefore has been applied in our HNAS framework. 
Specifically, in every iteration $k$ of Algorithm \ref{alg:hnas}, a GP belief with mean $u(\mu, \nu)$ and variance~$\sigma^2(\mu,\nu)$ for the entire input domain is firstly obtained following the Equation (1) in \citep{gp-ucb} (i.e., by letting input $x$ in \citep{gp-ucb} be the column vector $(\mu, \nu)^{\top}$ and the function value $y$ in \citep{gp-ucb} be $\Ls_{\text{val}}(\gA)$) using the historical evaluations $\gH_{k-1}=\{((\mu_i, \nu_i), \Ls_{\text{val}}(A^*_i))\}_{i=1}^{k-1}$ (this corresponds to line 6 in Algorithm \ref{alg:hnas} for iteration $k-1$). \footnote{Since BO is usually applied to solve maximization problem, we use the historical evaluations $\gH_{k-1}=\{((\mu_i, \nu_i), -\Ls_{\text{val}}(A^*_i))\}_{i=1}^{k-1}$ for BO instead in order to maximize $-\Ls_{\text{val}}(A)$ in practice.} 
Then, the mean $u(\mu, \nu)$ and standard deviation $\sigma(\mu,\nu)$ from the resulting GP belief are used to construct an acquisition function such as the expected improvement (EI) from \citep{ei} or the upper confidence bound (UCB) $u(\mu, \nu) + \sqrt{\beta} \sigma(\mu,\nu)$ from \citep{gp-ucb} where the parameter $\beta > 0$ is set to trade off between exploitation vs. exploration for guaranteeing no regret asymptotically with high probability.
Finally, an input (i.e., $\mu_k,\nu_k$) will be selected (for querying) by maximizing the acquisition function within the entire input domain (i.e., line 3 in Algorithm \ref{alg:hnas}), e.g., $(\mu_k,\nu_k)=\arg\max_{(\mu,\nu)} u(\mu, \nu) + \sqrt{\beta} \sigma(\mu,\nu)$ for UCB. The acquisition function in BO is usually differentiable and thus gradient-based optimization algorithms (e.g., L-BFGS and gradient ascent) can be applied to maximize it. We refer to \citep{gp-ucb} for more technical details about the BO algorithm based on UCB and \citep{bo-imp}
% \href{https://github.com/fmfn/BayesianOptimization}{this repo}
for the implementation of BO that has been used in our experiments.

\subsection{Experimental Details in NAS-Bench-201}\label{sec:app:exp-details}
In our experiments on NAS-Bench-201, we set the number of iterations $K$ for Algorithm \ref{alg:hnas} to be 20.
In addition, for every iteration of Algorithm \ref{alg:hnas}, we independently and randomly sample a pool of 2,000 architectures from the search space and then choose the architecture enjoying the optimum value of \eqref{eq:nas-final} from all sampled architectures (e.g., $2000 \times k$ architectures in total). After choosing this candidate architecture, we query the validation performance of this architecture on CIFAR-10 after 12-epoch training (i.e., ``hp=12'') from the tabular data in NAS-Bench-201, which then will be employed to update the GP surrogate applied in BO. After completing 20 iterations of our Algorithm \ref{alg:hnas}, there are \textit{(a)} 40,000 sampled architectures with evaluated training-free metrics which can already cover all the architectures in NAS-Bench-201 (consisting of 15,625 architectures) with a high probability, and  \textit{(b)} 20 architectures with evaluated validation performance which can already allow our HNAS to select architectures achieving competitive performances. Overall, our \eqref{eq:nas-final} and Algorithm \ref{alg:hnas} can be solved both efficiently and effectively following our aforementioned optimization techniques.

\section{More Empirical Results}\label{sec:app:empirical}
\subsection{Connections among Training-Free Metrics}\label{sec:app:connection}
\begin{table*}[t!]
%\vskip 0.1in
\renewcommand\multirowsetup{\centering}
\caption{Connection between any two training-free metrics (i.e., $\gM_1$ and $\gM_2$ in the table) from Sec.~\ref{sec:metrics} in NAS-Bench-101/201. Note that each training-free metric is evaluated using a batch of randomly sampled data from CIFAR-10 following that of \citep{zero-cost}.}
\label{tab:app:connection}
\centering
\resizebox{\textwidth}{!}{
\begin{tabular}{llcccccc}
\toprule
% \textbf{Methods} & \textbf{S1} & \textbf{S2} & \textbf{S3} \\
\multirow{2}{*}{\textbf{$\gM_1$}} &
\multirow{2}{*}{\textbf{$\gM_2$}} &
\multicolumn{3}{c}{\textbf{NAS-Bench-101}} &
\multicolumn{3}{c}{\textbf{NAS-Bench-201}} \\
\cmidrule(l){3-5} \cmidrule(l){6-8} 
& & Pearson & Spearman & Kendall's Tau & Pearson & Spearman & Kendall's Tau \\
\midrule
& & \multicolumn{6}{c}{\textbf{Gradient-based training-free metrics}} \\
$\gM_{\gradnorm}$ & $\gM_{\snip}$ & 0.98 & 0.98 & 0.87 & 1.00 & 1.00 & 0.97 \\
$\gM_{\gradnorm}$ & $\gM_{\grasp}$ & 0.35 & 0.61 & 0.43 & 0.60 & 0.92 & 0.77 \\
$\gM_{\gradnorm}$ & $\gM_{\tracenorm}$ & 0.98 & 0.98 & 0.87 & 0.98 & 0.97 & 0.85 \\
$\gM_{\snip}$ & $\gM_{\grasp}$ & 0.34 & 0.59 & 0.42 & 0.55 & 0.92 & 0.77 \\
$\gM_{\snip}$ & $\gM_{\tracenorm}$ & 0.94 & 0.93 & 0.77 & 0.97 & 0.96 & 0.83 \\
$\gM_{\grasp}$ & $\gM_{\tracenorm}$ & 0.37 & 0.57 & 0.40 & 0.69 & 0.89 & 0.73 \\
\midrule
% & & \multicolumn{6}{c}{\textbf{More gradient-based training-free metrics}} \\
$\gM_{\knas}$ & $\gM_{\gradnorm}$ & 0.95 & 0.96 & 0.83 & 0.88 & 0.94 & 0.80\\
$\gM_{\knas}$ & $\gM_{\snip}$ & 0.91 & 0.92 & 0.75 & 0.87 & 0.94 & 0.78 \\
$\gM_{\knas}$ & $\gM_{\grasp}$ & 0.37 & 0.65 & 0.46 & 0.45 & 0.87 & 0.69 \\
$\gM_{\knas}$ & $\gM_{\tracenorm}$ & 0.96 & 0.96 & 0.84 & 0.89 & 0.97 & 0.86 \\
\midrule
& & \multicolumn{6}{c}{\textbf{Non-gradient-based training-free metrics}} \\
$\gM_{\fisher}$ & $\gM_{\tracenorm}$ & 0.69 & 0.97 & 0.85 & 0.30 & 0.78 & 0.69 \\
$\gM_{\synflow}$ & $\gM_{\tracenorm}$ & 0.02 & 0.50 & 0.34 & 0.07 & 0.49 & 0.35 \\
$\gM_{\naswot}$ & $\gM_{\tracenorm}$ & 0.08 & 0.11 & 0.08 & 0.10 & 0.32 & 0.22 \\
\bottomrule
\end{tabular}
}
%\vskip -0.125in
\end{table*}

Besides the theoretical (Theorem~\ref{th:connection}) and empirical (Sec.~\ref{sec:connection}) connections between $\gM_{\tracenorm}$ and other gradient-based training-free metrics from Sec.~\ref{sec:metrics}, we further show in Table~\ref{tab:app:connection} that any two metrics from Sec.~\ref{sec:metrics} are highly correlated, i.e., they consistently achieve large positive correlations in both NAS-Bench-101 and NAS-Bench-201. Similar to the results in our Sec.~\ref{sec:connection}, the correlation between $\gM_{\grasp}$ and any other training-free metric is generally lower than other pairs, which may result from the hessian matrix that has only been applied in $\gM_{\grasp}$. To figure out \emph{whether our Theorem~\ref{th:connection} is also applicable to non-gradient-based training-free metrics}, we then provide the correlation between $\gM_{\fisher}$ \citep{fisher}, $\gM_{\synflow}$ \citep{synflow}, $\gM_{\naswot}$ \citep{naswot} and $\gM_{\tracenorm}$ \citep{nasi} for the comparison. 
% \footnote{Note that both $\gM_{\fisher}$ and $\gM_{\synflow}$ are also adopted by \citet{zero-cost} for training-free NAS.}
Interestingly, both $\gM_{\fisher}$ and $\gM_{\synflow}$ achieve higher positive correlations with $\gM_{\tracenorm}$ than $\gM_{\naswot}$ in general. According to their mathematical forms in the corresponding papers, such a phenomenon may result from the fact that $\gM_{\fisher}$ and $\gM_{\synflow}$ have contained certain gradient information while $\gM_{\naswot}$ only relies on the outputs of each layer in an initialized architecture. \footnote{Of note, the so-called gradient information contained in $\gM_{\fisher}$ and $\gM_{\synflow}$ is different from the commonly used gradient of initialized model parameters that is derived from loss function or the output of DNN models. So, $\gM_{\fisher}$ and $\gM_{\synflow}$ are taken as the non-gradient-based training-free metrics instead in this paper.} These results therefore imply that our Theorem~\ref{th:connection} may also provide valid theoretical connections for the training-free metrics that are not gradient-based but still contain certain gradient information.

\subsection{Valid Generalization Guarantees for Training-Free NAS}\label{sec:app:generalization}
\begin{figure}[t]
%\vskip 0.1in
\centering
\begin{tabular}{c}
    \hspace{-2mm}\includegraphics[width=\columnwidth]{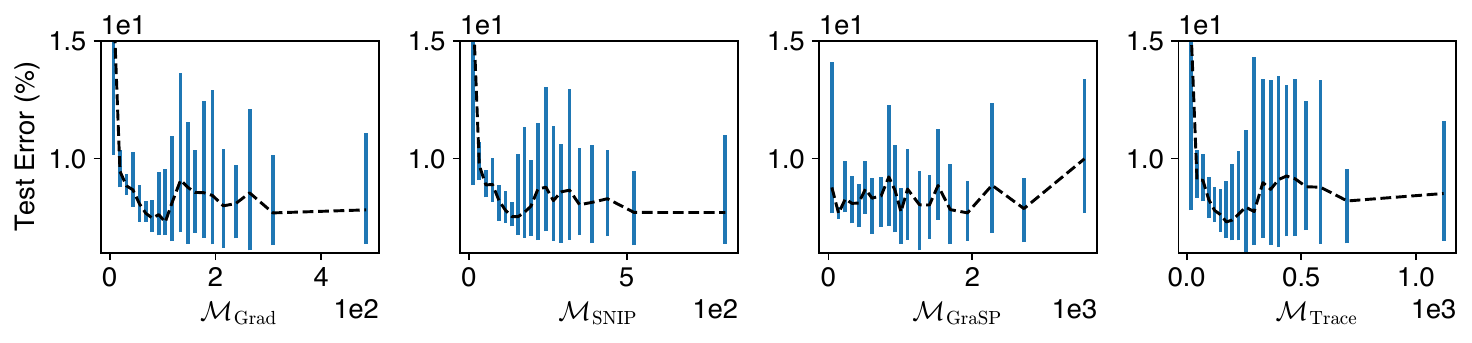} \\
    {(a) Varying architecture performances in the search space} \\
    \hspace{-2mm}\includegraphics[width=\columnwidth]{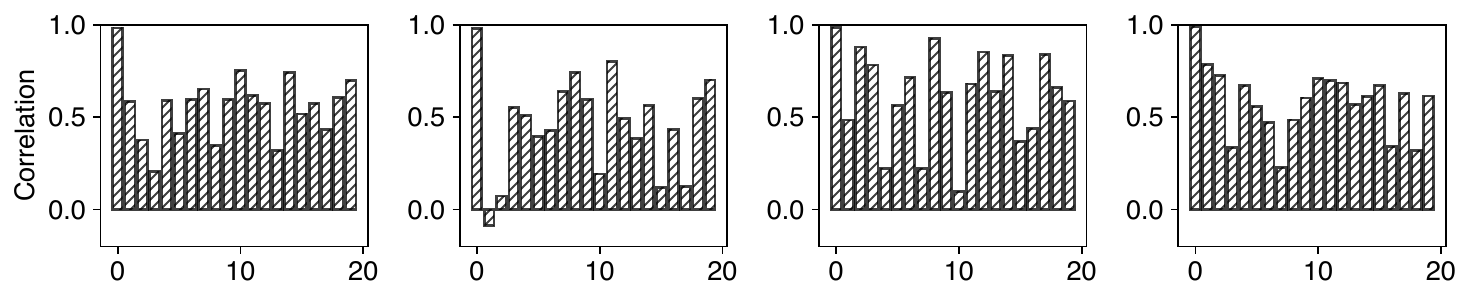} \\
    {(b) Correlation between condition numbers and architecture performances}
\end{tabular}
\caption{(a) Varying architecture performances under different value of training-free metrics in NAS-Bench-201. Note that the $x$-axis denotes the averaged value of training-free metrics over the architectures grouped in the same bin and $y$-axis denoted the test error evaluated on CIFAR-10. 
(b) Correlation between the condition numbers and the true generalization performances of the architectures within the same bin (i.e., the $y$-axis). Note that the $x$-axis denotes the corresponding 20 bins in Figure~\ref{fig:app:corollary}~(a).}
\label{fig:app:corollary}
%\vskip -0.2in
\end{figure}

To further support that our Corollary~\ref{corol:generalization-non-realizable} presents a more practical and valid generalization guarantee for training-free NAS in practice, we examine the true generalization performances of all candidate architectures under their different value of training-free metrics in Figure~\ref{fig:app:corollary}~(a) and exhibit the correlation between the condition number and the true generalization performances of all candidate architectures in Figure~\ref{fig:app:corollary}~(b). Specifically, we group the value of training-free metrics in NAS-Bench-201 into 20 bins and then plot the test errors on CIFAR-10 of all candidate architectures within the same bin into the blue vertical lines in Figure~\ref{fig:app:corollary}~(a). Besides, we plot the averaged test errors over the architectures within the same bin into the black dash lines in Figure~\ref{fig:app:corollary}~(a). Besides, each correlation between condition number and test error in Figure~\ref{fig:app:corollary}~(b) is computed using the candidate architectures within the same bin. 

Notably, as illustrated by the black dash lines in Figure~\ref{fig:app:corollary}~(a), there consistently exists a trade-off for all the training-free metrics in Sec.~\ref{sec:metrics}. Specifically, there exists an optimal value $\gM_{\text{opt}}$ for each training-free metric $\gM$ that is capable of achieving the best generalization performance in the search space. When $\gM < \gM_{\text{opt}}$, architecture with a larger value of $\gM$ typically enjoys a better generalization performance. On the contrary, when $\gM > \gM_{\text{opt}}$, architecture with a smaller value of $\gM$ generally achieves a better generalization performance. Interestingly, these results perfectly align with our Corollary~\ref{corol:generalization-non-realizable}. Furthermore, Figure~\ref{fig:app:corollary}~(b) shows that the condition number is indeed highly correlated to the generalization performance of candidate architectures and a smaller condition number is generally preferred in order to select well-performing architectures in training-free NAS. More interestingly, similar phenomenons can also be found in \citep{nasi} and \citep{te-nas}. Remarkably, our Corollary~\ref{corol:generalization-non-realizable} can provide theoretically grounded interpretations for these results, whereas Corollary~\ref{corol:generalization-realizable} fails to characterize these phenomenons. Consequently, our Corollary~\ref{corol:generalization-non-realizable} is shown to be more practical and valid in practice.

Based on the conclusions above, we then compare the impacts of the trade-off and condition number $\kappa$ mentioned above by examining the correlation between the true generalization performances of candidate architectures and their training-free metrics applied in different scenarios. Here, we use the same parameters applied in Sec. \ref{sec:exp:generalization} for Corollary~\ref{corol:generalization-non-realizable}. Table~\ref{tab:app:generalization} summarizes the comparison. Note that the non-realizable scenario is equivalent to the realizable scenario + trade-off + $\kappa$ as suggested by our Corollary~\ref{corol:generalization-non-realizable}. As revealed in Table~\ref{tab:app:generalization}, both trade-off and condition number $\kappa$ are necessary to achieve an improved characterization of architecture performances over the one in the realizable scenario followed by \citep{zero-cost}, which again verifies the practicality and validity of our Corollary~\ref{corol:generalization-non-realizable}. More interestingly, condition number $\kappa$ is shown to be more essential than the trade-off for training-free NAS in order to improve the correlations in the realizable scenario. By integrating both trade-off and condition number $\kappa$ into the realizable scenario, the non-realizable scenario consistently enjoys the highest correlations on different datasets, which also further verifies the improvement of our training-free NAS objective \eqref{eq:nas-final} over the one used in \citep{zero-cost}.

\begin{table}[t]
%\vskip 0.1in
\renewcommand\multirowsetup{\centering}
\caption{Correlation between the test errors of candidate architectures in NAS-Bench-201 and their training-free metrics applied in several different scenarios. We refer to Sec.~\ref{sec:generalization-nas} for more details about the trade-off and condition number $\kappa$ applied in the following scenarios.}
\label{tab:app:generalization}
\centering
\resizebox{\columnwidth}{!}{
\begin{tabular}{llcccccccc}
\toprule
\multirow{2}{*}{\textbf{Dataset}} &
\multirow{2}{*}{\textbf{Scenario}} &
\multicolumn{4}{c}{\textbf{Spearman}} &
\multicolumn{4}{c}{\textbf{Kendall's Tau}} \\
\cmidrule(l){3-6} \cmidrule(l){7-10}
& & $\gM_{\gradnorm}$ &
$\gM_{\snip}$ &
$\gM_{\grasp}$ &
$\gM_{\tracenorm}$  & 
$\gM_{\gradnorm}$ &
$\gM_{\snip}$ &
$\gM_{\grasp}$ &
$\gM_{\tracenorm}$ \\
\midrule
\multirow{4}{*}{C10} & Realizable & 0.637 & 0.639 & 0.566 & 0.538 & 0.469 & 0.472 & 0.400 & 0.387 \\
& Realizable + Trade-off & 0.642 & 0.641 & 0.570 & 0.549 & 0.475 & 0.474 & 0.403 & 0.397 \\
& Realizable + $\kappa$ & 0.724 & 0.728 & 0.658 & 0.657 & 0.530 & 0.533 & 0.474 & 0.474 \\
& Non-realizable & \textbf{0.750} & \textbf{0.748} & \textbf{0.686} & \textbf{0.697} & \textbf{0.559} & \textbf{0.556} & \textbf{0.501} & \textbf{0.512} \\
\midrule
\multirow{4}{*}{C100} & Realizable & 0.638 & 0.638 & 0.571 & 0.535 & 0.473 & 0.475 & 0.409 & 0.385 \\
& Realizable + Trade-off & 0.642 & 0.645 & 0.578 & 0.546 & 0.476 & 0.481 & 0.414 & 0.394 \\
& Realizable + $\kappa$ & 0.716 & 0.719 & 0.649 & 0.651 & 0.527 & 0.529 & 0.469 & 0.470 \\
& Non-realizable & \textbf{0.740} & \textbf{0.746} & \textbf{0.680} & \textbf{0.686} & \textbf{0.552} & \textbf{0.557} & \textbf{0.498} & \textbf{0.504} \\
\midrule
\multirow{4}{*}{IN-16} & Realizable & 0.578 & 0.578 & 0.550 & 0.486 & 0.430 & 0.433 & 0.397 & 0.354 \\
& Realizable + Trade-off & 0.588 & 0.589 & 0.566 & 0.526 & 0.438 & 0.441 & 0.408 & 0.382 \\
& Realizable + $\kappa$ & 0.646 & 0.649 & 0.612 & 0.587 & 0.472 & 0.474 & 0.443 & 0.423 \\
& Non-realizable & \textbf{0.682} & \textbf{0.685} & \textbf{0.655} & \textbf{0.660} & \textbf{0.505} & \textbf{0.506} & \textbf{0.480} & \textbf{0.482} \\
\bottomrule
\end{tabular}
}
%\vskip -0.125in
\end{table}

\subsection{Transferability of Training-Free NAS}
\begin{table}[t]
% \vskip -0.1in
\renewcommand\multirowsetup{\centering}
\caption{
Deviation of the correlation between the test errors in NAS-Bench-201 and the generalization bounds in Sec. \ref{sec:generalization-nas} using training-free metrics evaluated on various datasets.
Each correlation is reported with the mean and standard deviation using the metrics evaluated on CIFAR-10/100 and ImageNet-16-120. 
Small standard deviations imply strong transferability.
}
\label{tab:transferability}
\centering
% \resizebox{\columnwidth}{!}{
\begin{tabular}{L{1.2cm}lcccc}
\toprule
\multirow{2}{*}{\textbf{Dataset}} &
\multicolumn{4}{c}{\textbf{Training-free Metrics}} \\
\cmidrule(l){2-5}
& $\gM_{\gradnorm}$ &
$\gM_{\snip}$ &
$\gM_{\grasp}$ &
$\gM_{\tracenorm}$ \\
\midrule
\multicolumn{5}{c}{\textbf{Realizable scenario}} \\
C10  & 0.64$\pm$0.01 & 0.64$\pm$0.01 & 0.58$\pm$0.02 & 0.55$\pm$0.01 \\
C100 & 0.64$\pm$0.01 & 0.64$\pm$0.01 & 0.58$\pm$0.03 & 0.54$\pm$0.02 \\
IN-16 & 0.57$\pm$0.01 & 0.57$\pm$0.01 & 0.52$\pm$0.03 & 0.47$\pm$0.02 \\
\midrule
\multicolumn{5}{c}{\textbf{Non-realizable scenario}} \\
C10 & 0.75$\pm$0.00 & 0.75$\pm$0.00 & 0.69$\pm$0.01 & 0.69$\pm$0.00 \\
C100 & 0.74$\pm$0.00 & 0.74$\pm$0.00 & 0.69$\pm$0.01 & 0.69$\pm$0.01 \\
IN-16 & 0.69$\pm$0.00 & 0.69$\pm$0.00 & 0.63$\pm$0.01 & 0.65$\pm$0.00 \\
\bottomrule
\end{tabular}
% }
\vskip -0.1in
\end{table}

In practice, the transferability of the architectures selected by both training-based and training-free NAS algorithms has been widely verified \citep{darts, te-nas, nasi}.
So, in this section, we also verify the transferability of our generalization guarantees for training-free NAS. Specifically, we examine the deviation of the correlation between the architecture performance and the generalization bounds in Sec.~\ref{sec:generalization-nas} using training-free metrics evaluated on different datasets. That is, training-free metrics and architecture performance usually will be evaluated on different datasets.
Table~\ref{tab:transferability} summarizes the results using CIFAR-10/100 (C10/100) and ImageNet-16-120 (IN-16) \citep{imagenet-16-120} in NAS-Bench-201 where we employ the same parameters as Sec. \ref{sec:exp:generalization} for Corollary \ref{corol:generalization-non-realizable}. Notably, nearly the same correlations (i.e., with extremely small deviations) are achieved for training-free metrics evaluated on different datasets. This implies that the training-free metrics computed on a dataset $S$ can also provide a good characterization of the architecture performance evaluated on another dataset $S'$. Therefore, the architectures selected by training-free NAS algorithms on $S$ are also likely to produce a compelling performance on $S'$. That is, the transferability of the architectures selected by training-free NAS is guaranteed.

\subsection{Additional Comparison in NAS-Bench-201}\label{sec:app:nasbench201}
In addition to the comparison of search performances and search costs (measured by GPU seconds) in Table \ref{tab:sota-nasbench201}, we further provide the comparison of the number of queries required by different NAS algorithms in Table \ref{tab:sota-nasbench201-queries}. The queries compared here are applied to evaluate the validation performance of the selected architectures after training, which is typically avoided by training-free NAS algorithms. Consequently, here, we mainly compare HNAS with other training-based NAS algorithms. As shown in Table \ref{tab:sota-nasbench201-queries}, HNAS can consistently achieve improved search performances with fewer number of queries, which also aligns with the results in our Table \ref{tab:sota-nasbench201}. This therefore further confirms the superior search efficiency and the remarkable search effectiveness of our HNAS framework.

\begin{table}[t]
%\vskip 0.1in
\caption{Comparison of the number of queries (to evaluate the validation performances of trained architectures) required by different NAS algorithms in NAS-Bench-201. The performance of each algorithm is reported with the mean and standard deviation of five independent searches.}
\label{tab:sota-nasbench201-queries}
\centering
% \begin{small}
% \resizebox{\textwidth}{!}{
\begin{tabular}{lccccccc}
\toprule
\multirow{2}{*}{\textbf{Algorithm}} & \multicolumn{3}{c}{\textbf{Test Accuracy (\%)}} &
\multirow{2}{*}{\textbf{$\#$ Queries}}\\
\cmidrule(l){2-4} 
& C10 & C100 & IN-16 & \\
\midrule
REA & 93.92$\pm$0.30 & 71.84$\pm$0.99 & 45.15$\pm$0.89 & 102 \\
RS (w/o sharing) & 93.70$\pm$0.36 & 71.04$\pm$1.07 & 44.57$\pm$1.25 & 106 \\
REINFORCE & 93.85$\pm$0.37 & 71.71$\pm$1.09 & 45.24$\pm$1.18 & 103  \\
\midrule
HNAS ($\gM_{\gradnorm}$) & \textbf{94.04}$\pm$0.21 & 71.75$\pm$1.04 & \textbf{45.91}$\pm$0.88 & \textbf{20} \\
HNAS ($\gM_{\snip}$) & \textbf{93.94}$\pm$0.02 & 71.49$\pm$0.11 & \textbf{46.07}$\pm$0.14 & \textbf{20} \\
HNAS ($\gM_{\grasp}$) & \textbf{94.13}$\pm$0.13 & \textbf{72.59}$\pm$0.82 & \textbf{46.24}$\pm$0.38 & \textbf{20} \\
HNAS ($\gM_{\tracenorm}$) & \textbf{94.07}$\pm$0.10 & \textbf{72.30}$\pm$0.70 & \textbf{45.93}$\pm$0.37 & \textbf{20} \\
\midrule
\textbf{Optimal} & 94.37 & 73.51 & 47.31 & - \\
\bottomrule
\end{tabular}
% \end{small}
% \vskip -0.15in
\end{table}

\subsection{HNAS in the DARTS Search Space}\label{sec:app:sota-darts}
To support the effectiveness and efficiency of our HNAS, we also apply HNAS in the DARTS \citep{darts} search space to find well-performing architectures on CIFAR-10/100 and ImageNet \citep{imagenet}. Specifically, we sample a pool of 60000 architecture to evaluate their training-free metrics on CIFAR-10 in order to maintain high computational efficiency for these training-free metrics. For the results on CIFAR-10/100, we then apply the BO algorithm for 25 iterations with a 10-epoch model training for the selected architectures in our HNAS (Algorithm~\ref{alg:hnas}). As for the results on ImageNet, we apply the BO algorithm for 10 iterations with a 3-epoch model training for the selected architectures in our HNAS. We follow \citep{darts} to construct 20-layer final selected architectures with an auxiliary tower of weight 0.4 for CIFAR-10 (0.6 for CIFAR-100) located at 13-th layer and 36 initial channels. We evaluate these architectures on CIFAR-10/100 using stochastic gradient descent (SGD) of 600 epochs with a learning rate cosine scheduled from 0.025 to 0 for CIFAR-10 (from 0.035 to 0.001 for CIFAR-100), momentum 0.9, weight decay 3$\times$10$^{-4}$and batch size 96. Both Cutout \citep{cutout}, and ScheduledDropPath linearly increased from 0 to 0.2 for CIFAR-10 (from 0 to 0.3 for CIFAR-100) are employed for regularization purposes on CIFAR-10/100. As for the evaluation on ImageNet,
% following that of P-DARTS \citep{p-darts} and SDARTS-ADV \citep{sdarts}, 
we train the 14-layer architecture from scratch for 250 epochs with a batch size of 1024. The learning rate is warmed up to 0.7 for the first 5 epochs and then decreased to zero with a cosine schedule. We adopt the SGD optimizer with 0.9 momentum and a weight decay of 3$\times$10$^{-5}$.

The results on CIFAR-10/100 and ImageNet are summarized in Table~\ref{tab:app:cifar} and Table~\ref{tab:app:imagenet}, respectively. As shown in Table~\ref{tab:app:cifar}, both our HNAS (C10) and HNAS (C100) are capable of achieving state-of-the-art performance on CIFAR-10 and CIFAR-100, correspondingly, while incurring lower search costs than other training-based NAS algorithms. Even compared with other training-free NAS baselines, e.g., TE-NAS, our HNAS can still enjoy a compelling search cost. Overall, these results further validate that our HNAS is indeed able to enjoy the superior search efficiency of training-free NAS and also the remarkable search effectiveness of training-based NAS. More interestingly, our HNAS (C10) can achieve a lower test error on CIFAR-10 but a higher test error on CIFAR-100 when compared with HNAS (C100). This result indicates that similar to training-based NAS algorithms, directly searching on the target dataset is also able to improve the final performance in HNAS. By exploiting this advantage over other training-free NAS baselines, our HNAS thus is capable of selecting architectures achieving higher performances, as shown in Table~\ref{tab:app:cifar}. Similar results are also achieved on ImageNet as shown in Table~\ref{tab:app:imagenet}. Overall, these results have further supported the superior search efficiency and remarkable search effectiveness of our HNAS that we have verified in Sec.~\ref{sec:exp:hnas}.

\begin{table*}[t]
%\vskip 0.1in
\caption{Performance comparison among state-of-the-art (SOTA) neural architectures on CIFAR-10/100. The performance of the final architectures selected by HNAS is reported with the mean and standard deviation of five independent evaluations. The search costs are evaluated on a single Nvidia 1080Ti. Note that HNAS (C10 or C100) denoted the architecture selected by our HNAS using the dataset CIFAR-10 or CIFAR-100, respectively.}
\label{tab:app:cifar}
\centering
% \begin{small}
\resizebox{\textwidth}{!}{
\begin{threeparttable}
\begin{tabular}{lcccccc}
\toprule
\multirow{2}{*}{\textbf{Algorithm}} & \multicolumn{2}{c}{\textbf{Test Error (\%)}} &
\multicolumn{2}{c}{\textbf{Params} (M)} &
\multirow{2}{2cm}{\textbf{Search Cost} (GPU Hours)} &
\multirow{2}{*}{\textbf{Search Method}} \\
\cmidrule(l){2-3} \cmidrule(l){4-5} 
& C10 & C100 & C10 & C100 & \\
\midrule 
DenseNet-BC \citep{densenet} & 3.46$^*$ & 17.18$^*$ & 25.6 & 25.6 & - & manual\\
\midrule 
NASNet-A \citep{nasnet} & 2.65 & - & 3.3 & - & 48000 & RL\\
AmoebaNet-A \citep{amoebanet} & 3.34$\pm$0.06 & 18.93$^\dagger$ & 3.2 & 3.1 & 75600 & evolution\\
PNAS \citep{pnas} & 3.41$\pm$0.09 & 19.53$^*$ & 3.2 & 3.2 & 5400 & SMBO\\
ENAS \citep{enas} & 2.89 & 19.43$^*$ & 4.6 & 4.6 & 12 & RL\\
NAONet \citep{naonet} & 3.53 & - & 3.1 & - & 9.6 & NAO\\
\midrule
DARTS (2nd) \citep{darts} & 2.76$\pm$0.09 & 17.54$^\dagger$ & 3.3 & 3.4 & 24 & gradient\\
GDAS \citep{gdas} & 2.93 & 18.38 & 3.4 & 3.4 & 7.2 & gradient\\
NASP \citep{nasp} & 2.83$\pm$0.09 & - & 3.3 & - & 2.4 & gradient\\
P-DARTS \citep{p-darts} & 2.50 & - & 3.4 & - & 7.2 & gradient\\
DARTS- (avg) \citep{darts-} & 2.59$\pm$0.08 & 17.51$\pm$0.25 & 3.5 & 3.3 & 9.6 & gradient\\
SDARTS-ADV \citep{sdarts} & 2.61$\pm$0.02 & - & 3.3 & - & 31.2 & gradient\\
R-DARTS (L2) \citep{r-darts}  & 2.95$\pm$0.21 & 18.01$\pm$0.26 & - & - & 38.4 & gradient\\
DrNAS \citep{drnas} & 2.46$\pm$0.03 & - & 4.1 & - & 14.4 & gradient \\
\midrule
TE-NAS$^{\sharp}$ \citep{te-nas} & 2.83$\pm$0.06 & 17.42$\pm$0.56 & 3.8 & 3.9 & 1.2 & training-free \\
% Random search baseline$^\ddagger$ & 3.29$\pm$0.15 & - & 3.2 & - & 96 & random \\
% NASI & 2.79$\pm$0.07 & 16.12$\pm$0.38 & 3.9 & 4.0 & 0.24 & training-free \\
NASI-ADA \cite{nasi} & 2.90$\pm$0.13 & 16.84$\pm$0.40 & 3.7 & 3.8 & 0.24 & training-free \\
\midrule
HNAS (C10) & 2.62$\pm$0.04 & 17.10$\pm$0.18 & 3.4 & 3.5 & 2.4 & hybrid \\
HNAS (C100) & 2.78$\pm$0.05 & \textbf{16.29}$\pm$0.14 & 3.7 & 3.8 & 2.7 & hybrid \\
\bottomrule
\end{tabular}
\begin{tablenotes}\footnotesize
    \item[$\dagger$] Reported by \citet{gdas} with their experimental settings.
    \item[$*$] Obtained by training corresponding architectures without cutout \citep{cutout} augmentation.
    \item[$\sharp$] Reported by \citet{nasi} with their experimental settings.
\end{tablenotes}
\end{threeparttable}
}
\end{table*}

\begin{table}[t]
%\vskip 0.1in
\renewcommand\multirowsetup{\centering}
\caption{Performance comparison among SOTA image classifiers on ImageNet.}
\label{tab:app:imagenet}
\centering
% \resizebox{\textwidth}{!}{
\begin{tabular}{lccccc}
\toprule
\multirow{2}{*}{\textbf{Algorithm}} & \multicolumn{2}{c}{\textbf{Test Error (\%)}} &
\multirow{2}{1.0cm}{\textbf{Params} (M)} &
\multirow{2}{0.6cm}{\textbf{$+\times$} (M)} & 
\multirow{2}{2cm}{\textbf{Search Cost}  (GPU Days)} \\
\cmidrule(l){2-3}
& Top-1 & Top-5 & & \\
\midrule 
Inception-v1 \citep{inception}  & 30.1 & 10.1 & 6.6 & 1448 & - \\
MobileNet \citep{mobilenet} & 29.4 & 10.5 & 4.2 & 569 & - \\
ShuffleNet 2$\times$(v2) \citep{shufflenetv2} & 25.1 & 7.6 & 7.4 & 591 & - \\
\midrule
NASNet-A \citep{nasnet} & 26.0 & 8.4 & 5.3 & 564 & 2000\\
AmoebaNet-A \citep{amoebanet} & 25.5 & 8.0 & 5.1 & 555 & 3150 \\
PNAS \citep{pnas} & 25.8 & 8.1 & 5.1 & 588 & 225 \\
MnasNet-92 \citep{mnasnet} & 25.2 & 8.0 & 4.4 & 388 & - \\
\midrule
DARTS \citep{darts} & 26.7 & 8.7 & 4.7 & 574 & 4.0 \\
SNAS (mild) \citep{snas} & 27.3 & 9.2 & 4.3 & 522 & 1.5\\
GDAS \citep{gdas} & 26.0 & 8.5 & 5.3 & 581 & 0.21\\
ProxylessNAS \citep{proxyless-nas} & 24.9 & 7.5 & 7.1 & 465 & 8.3 \\
% P-DARTS \citep{p-darts} & 24.4 & 7.4 & 4.9 & 557 & 0.3 \\
DARTS- \citep{darts-} & 23.8 & 7.0 & 4.5 & 467 & 4.5 \\
SDARTS-ADV \citep{sdarts} & 25.2 & 7.8 & 5.4 & 594 & 1.3 \\
DrNAS \citep{drnas} & 23.7 & 7.1 & 5.7 & - & 4.6 \\
\midrule
TE-NAS (C10) \citep{te-nas} & 26.2 & 8.3 & 5.0 & - & 0.05 \\
TE-NAS (ImageNet) \citep{te-nas} & 24.5 & 7.5 & 5.4 & - & 0.17 \\
NASI-ADA \citep{nasi} & 25.0 & 7.8 & 4.9 & 559 & 0.01 \\
\midrule
HNAS (C100) & 24.8 & 7.8 & 5.2 & 601 & 0.1 \\
% HNAS (ImageNet) & 24.5 & 7.5 & 5.1 & 581 & 0.5 \\
HNAS (ImageNet) & 24.3 & 7.4 & 5.1 & 575 & 0.5 \\

\bottomrule
\end{tabular}
% }
%\vskip -0.125in
\end{table}
\subsection{Ablation Studies}\label{sec:app:ablation}

\paragraph{Ablation Study on Initialization Method.} 
While our theoretical analyses throughout this paper are based on the initialization using the standard normal distribution (Sec.~\ref{sec:notations}), \footnote{Note that this initialization is equivalent to the LeCun initialization \citep{init-lecun} according to \citep{ntk}.} we wonder \emph{whether our theoretical results are also applicable to DNNs using different initialization methods}, e.g., Xavier \citep{init-xavier} and Kaiming \citep{init-kaiming} initialization. Specifically, we compare the correlation between the true generalization performances of all candidate architectures in NAS-Bench-201 and the generalization guarantees in Sec.~\ref{sec:generalization-nas} that are evaluated using different initialization methods. Table~\ref{tab:app:generalization-init} summarizes the comparison. Here, we use the same parameters applied in Sec. \ref{sec:exp:generalization} for Corollary~\ref{corol:generalization-non-realizable}. Notably, Table~\ref{tab:app:generalization-init} shows that our generalization guarantees for training-free NAS, i.e., Corollary~\ref{corol:generalization-realizable},~\ref{corol:generalization-non-realizable}, can also perform well for training-free NAS using DNNs initialized with different methods, indicating a wider application of our generalization guarantees in Sec.~\ref{sec:generalization-nas}. Of note, LeCun initialization can achieve the best results among the three initialization methods in Table~\ref{tab:app:generalization-init} since it satisfies our assumption about the initialization of DNNs. As an implication, LeCun initialization is more preferred when using the training-free metrics from Sec.~\ref{sec:metrics} to characterize the architecture performances in training-free NAS.

\begin{table}[t]
%\vskip 0.1in
\renewcommand\multirowsetup{\centering}
\caption{Correlation between the test errors (on CIFAR-10) of all architectures in NAS-Bench-201 and our generalization guarantees in Sec.~\ref{sec:generalization-nas} that are evaluated on DNNs using different initialization methods.}
\label{tab:app:generalization-init}
\centering
\resizebox{\columnwidth}{!}{
\begin{tabular}{lcccccccc}
\toprule
\multirow{2}{*}{\textbf{Initialization}} &
\multicolumn{4}{c}{\textbf{Spearman}} &
\multicolumn{4}{c}{\textbf{Kendall's Tau}} \\
\cmidrule(l){2-5} \cmidrule(l){6-9}
& $\gM_{\gradnorm}$ &
$\gM_{\snip}$ &
$\gM_{\grasp}$ &
$\gM_{\tracenorm}$  & 
$\gM_{\gradnorm}$ &
$\gM_{\snip}$ &
$\gM_{\grasp}$ &
$\gM_{\tracenorm}$ \\
\midrule
& \multicolumn{8}{c}{\textbf{Realizable scenario}}\\
LeCun \citep{init-lecun} & 0.637 & 0.639 & 0.566 & 0.538 & 0.469 & 0.472 & 0.400 & 0.387 \\
Xavier \citep{init-xavier} & 0.608 & 0.627 & 0.449 & 0.465 & 0.445 & 0.463 & 0.316 & 0.334 \\
He \citep{init-kaiming} & 0.609 & 0.615 & 0.340 & 0.460 & 0.446 & 0.454 & 0.242 & 0.334 \\
\midrule
& \multicolumn{8}{c}{\textbf{Non-realizable scenario}}\\
LeCun \citep{init-lecun} & 0.750 & 0.748 & 0.686 & 0.697 & 0.559 & 0.556 & 0.501 & 0.512 \\
Xavier \citep{init-xavier} & 0.676 & 0.685 & 0.615 & 0.635 & 0.493 & 0.501 & 0.442 & 0.460 \\
He \citep{init-kaiming} & 0.607 & 0.611 & 0.505 & 0.569 & 0.436 & 0.439 & 0.358 & 0.407 \\
\bottomrule
\end{tabular}
}
%\vskip -0.125in
\end{table}

\paragraph{Ablation Study on Batch Size.} 
Theoretically, the training-free metrics from Sec.~\ref{sec:metrics} are defined over the whole training dataset. In practice, we usually only apply a batch of randomly sampled data points to evaluate these training-free metrics in order to achieve a desirable computational efficiency, which follows \citep{zero-cost}. To investigate the impact of batch size on these metrics, we examine the correlation between the true generalization performances of all candidate architectures in NAS-Bench-201 and their generalization guarantees in the non-realizable scenario under varying batch sizes. Table~\ref{tab:app:generalization-batchsize} summarizes the results. Here, we use the same parameters applied in Sec. \ref{sec:exp:generalization} for Corollary~\ref{corol:generalization-non-realizable}. Besides the impact of batch size on training-free metrics, we also include the impact of batch size on condition number $\kappa$ in this table. Specifically, in the upper part of Table~\ref{tab:app:generalization-batchsize}, the correlations are evaluated using a batch size of 64 for $\kappa$ and varying batch sizes for any training-free metric $\gM$ from Sec. \ref{sec:metrics}. Meanwhile, in the lower part of Table~\ref{tab:app:generalization-batchsize}, the correlations are evaluated using varying batch sizes for $\kappa$ and a batch size of 64 for any training-free metric $\gM$. Notably, Table~\ref{tab:app:generalization-batchsize} shows that similar results will be achieved even when training-free metrics are evaluated under varying batch sizes, whereas $\kappa$ evaluated under varying batch sizes will lead to different results, indicating that $\kappa$ is more sensitive to batch size than training-free metrics. As an implication, while a small batch size is also able to perform well in practice, a large batch size is more preferred when using our generalization guarantees for training-free NAS.

\begin{table}[t]
%\vskip 0.1in
\renewcommand\multirowsetup{\centering}
\caption{Correlation between the test errors (on CIFAR-10) of all architectures in NAS-Bench-201 and their generalization guarantees in the non-realizable scenario under varying batch size.}
\label{tab:app:generalization-batchsize}
\centering
% \resizebox{\columnwidth}{!}{
\begin{tabular}{ccccccccc}
\toprule
\multirow{2}{*}{\textbf{Batch Size}} &
\multicolumn{4}{c}{\textbf{Spearman}} &
\multicolumn{4}{c}{\textbf{Kendall's Tau}} \\
\cmidrule(l){2-5} \cmidrule(l){6-9}
& $\gM_{\gradnorm}$ &
$\gM_{\snip}$ &
$\gM_{\grasp}$ &
$\gM_{\tracenorm}$  & 
$\gM_{\gradnorm}$ &
$\gM_{\snip}$ &
$\gM_{\grasp}$ &
$\gM_{\tracenorm}$ \\
\midrule
& \multicolumn{8}{c}{\textbf{Batch size 64 for $\kappa$ and varying batch sizes for any $\gM$}} \\
$\ms$4 & 0.737 & 0.741 & 0.671 & 0.684 & 0.547 & 0.550 & 0.487 & 0.501 \\
$\ms$8 & 0.739 & 0.743 & 0.676 & 0.689 & 0.549 & 0.552 & 0.492 & 0.506 \\
16 & 0.747 & 0.748 & 0.685 & 0.690 & 0.556 & 0.556 & 0.499 & 0.507 \\
32 & 0.750 & 0.748 & 0.687 & 0.690 & 0.558 & 0.556 & 0.502 & 0.506 \\
64 & 0.750 & 0.748 & 0.686 & 0.697 & 0.559 & 0.556 & 0.501 & 0.512 \\
\midrule
& \multicolumn{8}{c}{\textbf{Varying batch sizes for $\kappa$ and batch size 64 for any $\gM$}} \\
$\ms$4 & 0.578 & 0.585 & 0.569 & 0.509 & 0.416 & 0.421 & 0.402 & 0.362 \\
$\ms$8 & 0.597 & 0.603 & 0.591 & 0.542 & 0.429 & 0.433 & 0.419 & 0.386 \\
16 & 0.628 & 0.633 & 0.620 & 0.582 & 0.462 & 0.455 & 0.442 & 0.414 \\
32 & 0.663 & 0.666 & 0.645 & 0.621 & 0.479 & 0.481 & 0.462 & 0.445 \\
64 & 0.750 & 0.748 & 0.686 & 0.697 & 0.559 & 0.556 & 0.501 & 0.512 \\
\bottomrule
\end{tabular}
% }
%\vskip -0.125in
\end{table}

\paragraph{Ablation Study on Layer Width.} 
While our theoretical analyses are based on over-parameterized DNNs, i.e., $n>N$ in our Theorem~\ref{th:generalization}, we are also curious about \emph{how the layer width will influence our empirical results}. In particular, we examine the correlation between the true generalization performances of all candidate architectures in NAS-Bench-201 and their generalization guarantee in the non-realizable scenario under varying layer width. Similar to the ablation study on batch size, we investigate the impacts of layer width on the training-free metrics from Sec. \ref{sec:metrics} and the condition number $\kappa$ separately. Table~\ref{tab:app:generalization-channels} summarizes the results. Here, we use the same parameters applied in Sec. \ref{sec:exp:generalization} for Corollary~\ref{corol:generalization-non-realizable}. As shown in Table~\ref{tab:app:generalization-channels}, our generalization guarantee in the non-realizable scenario also performs well when layer width becomes smaller. Surprisingly, similar results can be achieved for training-free metrics evaluated under varying layer widths, whereas a larger layer width for training-free metrics typically leads to marginally higher correlations in Table~\ref{tab:app:generalization-channels}. On the contrary, a larger layer width for $\kappa$ leads to lower correlations in Table~\ref{tab:app:generalization-channels}. This may result from the similar behavior that can be achieved by layer width and topology width since both layer width and topology width are used to measure the width of DNN but in totally different perspectives. Therefore, increasing layer width will make deep architectures (in terms of topology) more indistinguishable from wide architectures (in terms of topology) and hence make it harder to apply our generalization guarantee in Corollary~\ref{corol:generalization-non-realizable} to characterize the architecture performances in a search space. As an implication, a large layer width for training-free metrics and a smaller layer width for condition number $\kappa$ are more preferred when applying our generalization guarantees for training-free NAS in practice.

\begin{table}[t]
%\vskip 0.1in
\renewcommand\multirowsetup{\centering}
\caption{Correlation between the test errors (on CIFAR-10) of all architectures in NAS-Bench-201 and their generalization guarantees in the non-realizable scenario under varying layer widths, which are measured by the number of initial channels in our experiments. Larger initial channels indicates a large layer width.}
\label{tab:app:generalization-channels}
\centering
% \resizebox{\columnwidth}{!}{
\begin{tabular}{ccccccccc}
\toprule
\multirow{2}{*}{\textbf{Init Channels}} &
\multicolumn{4}{c}{\textbf{Spearman}} &
\multicolumn{4}{c}{\textbf{Kendall's Tau}} \\
\cmidrule(l){2-5} \cmidrule(l){6-9}
& $\gM_{\gradnorm}$ &
$\gM_{\snip}$ &
$\gM_{\grasp}$ &
$\gM_{\tracenorm}$  & 
$\gM_{\gradnorm}$ &
$\gM_{\snip}$ &
$\gM_{\grasp}$ &
$\gM_{\tracenorm}$ \\
\midrule
& \multicolumn{8}{c}{\textbf{4 channels for $\kappa$ and varying channels for any $\gM$}} \\
$\ms$4 & 0.744 & 0.746 & 0.688 & 0.732 & 0.550 & 0.552 & 0.499 & 0.539 \\
$\ms$8 & 0.750 & 0.753 & 0.707 & 0.744 & 0.556 & 0.559 & 0.515 & 0.550 \\
16 & 0.753 & 0.753 & 0.728 & 0.750 & 0.558 & 0.559 & 0.535 & 0.556 \\
32 & 0.755 & 0.756 & 0.736 & 0.752 & 0.560 & 0.562 & 0.543 & 0.558 \\
% 64 & 0.750 & 0.752 & 0.705 & 0.599 & 0.558 & 0.560 & 0.518 & 0.437 \\
\midrule
& \multicolumn{8}{c}{\textbf{Varying channels for $\kappa$ and 32 channels for any $\gM$}} \\
$\ms$4 & 0.755 & 0.756 & 0.736 & 0.752 & 0.560 & 0.562 & 0.543 & 0.558 \\
$\ms$8 & 0.720 & 0.722 & 0.700 & 0.709 & 0.529 & 0.531 & 0.512 & 0.522 \\
16 & 0.698 & 0.700 & 0.677 & 0.681 & 0.511 & 0.514 & 0.492 & 0.498 \\
32 & 0.686 & 0.688 & 0.664 & 0.664 & 0.501 & 0.503 & 0.481 & 0.484 \\
\bottomrule
\end{tabular}
% }
%\vskip -0.125in
\end{table}

\paragraph{Ablation Study on Generalization Guarantees and HNAS Using Non-Gradient-Based Training-Free Metrics.} 
As Appendix~\ref{sec:app:connection} has validated that our Theorem~\ref{th:connection} may also provide valid theoretical connections for certain non-gradient-based training-free metrics, we wonder \emph{whether our theoretical generalization guarantees and HNAS based on Theorem~\ref{th:connection} are also applicable to these non-gradient-based training-free metrics}. In particular, we firstly examine the correlation between the true generalization performances of all candidate architectures in NAS-Bench-201 and their generalization (Sec.~\ref{sec:generalization-nas}) using training-free metrics $\gM_{\fisher}$, $\gM_{\synflow}$ and $\gM_{\naswot}$. Table~\ref{tab:app:generalization-other-metrics} summarizes the results. Here, we use the same parameters applied in Sec. \ref{sec:exp:generalization} for Corollary~\ref{corol:generalization-non-realizable}. While $\gM_{\fisher}$ and $\gM_{\synflow}$ enjoy higher correlation to $\gM_{\tracenorm}$ than $\gM_{\naswot}$ in Appendix~\ref{sec:app:connection}, our generalization guarantees also performs better when using $\gM_{\fisher}$ and $\gM_{\synflow}$. We then apply our HNAS based on these training-free metrics in NAS-Bench-201 and the Table~\ref{tab:sota-nasbench201-other-metrics} summarizes the search results. Similarly, our HNAS based on $\gM_{\fisher}$ and $\gM_{\synflow}$ can also find better-performing architectures than HNAS ($\gM_{\naswot}$). Surprisingly, HNAS ($\gM_{\synflow}$) can even achieve competitive results when compared with HNAS using gradient-based training-free metrics. These results therefore indicate that our HNAS sometimes may also be able to improve over training-free NAS using non-gradient-based training-free metrics especially when these non-gradient-based training-free metrics contain certain gradient information.

\begin{table}[t]
%\vskip 0.1in
\renewcommand\multirowsetup{\centering}
\caption{
Correlation between the test errors of all architectures in NAS-Bench-201 and our generalization guarantees in Sec.~\ref{sec:generalization-nas} using training-free metrics $\gM_{\knas}$, $\gM_{\fisher}$, $\gM_{\synflow}$ and $\gM_{\naswot}$ that are evaluated on various datasets.
Each correlation is reported with the mean and standard deviation using the metrics evaluated on CIFAR-10/100 and ImageNet-16-120.
}
\label{tab:app:generalization-other-metrics}
\centering
\resizebox{\columnwidth}{!}{
\begin{tabular}{lcccccccccc}
\toprule
\multirow{2}{*}{\textbf{Dataset}} &
\multicolumn{4}{c}{\textbf{Spearman}} &
\multicolumn{4}{c}{\textbf{Kendall's Tau}} \\
\cmidrule(l){2-5} \cmidrule(l){6-9}
& $\gM_{\knas}$ & 
$\gM_{\fisher}$ &
$\gM_{\synflow}$ &
$\gM_{\naswot}$ &
$\gM_{\knas}$ & 
$\gM_{\fisher}$ &
$\gM_{\synflow}$ &
$\gM_{\naswot}$ \\
\midrule
& \multicolumn{8}{c}{\textbf{Realizable scenario}} \\
C10 & 0.53$\pm$0.02 & 0.39$\pm$0.01 & 0.78$\pm$0.00 & 0.09$\pm$0.02 & 0.39$\pm$0.02 & 0.29$\pm$0.01 & 0.58$\pm$0.00 & 0.10$\pm$0.00 \\
C100 & 0.53$\pm$0.03 & 0.39$\pm$0.01 & 0.76$\pm$0.00 & 0.09$\pm$0.02 & 0.38$\pm$0.02 & 0.29$\pm$0.01 & 0.57$\pm$0.00 & 0.11$\pm$0.01 \\
IN-16 & 0.46$\pm$0.02 & 0.32$\pm$0.01 & 0.75$\pm$0.00 & 0.16$\pm$0.02 & 0.33$\pm$0.02 & 0.24$\pm$0.01 & 0.56$\pm$0.00 & 0.15$\pm$0.02 \\
\midrule
& \multicolumn{8}{c}{\textbf{Non-realizable scenario}} \\
C10 & 0.66$\pm$0.02 & 0.51$\pm$0.00 & 0.81$\pm$0.00 & 0.05$\pm$0.00 & 0.49$\pm$0.02 & 0.37$\pm$0.00 & 0.61$\pm$0.00 & 0.03$\pm$0.00 \\
C100 & 0.67$\pm$0.03 & 0.51$\pm$0.01 & 0.80$\pm$0.02 & 0.05$\pm$0.01 & 0.49$\pm$0.02 & 0.37$\pm$0.00 & 0.60$\pm$0.00 & 0.03$\pm$0.00 \\
IN-16 & 0.62$\pm$0.04 & 0.44$\pm$0.00 & 0.78$\pm$0.00 & 0.05$\pm$0.01 & 0.45$\pm$0.03 & 0.32$\pm$0.00 & 0.59$\pm$0.00 & 0.03$\pm$0.00 \\
\bottomrule
\end{tabular}
}
%\vskip -0.125in
\end{table}

\begin{table}[t]
%\vskip 0.1in
\caption{Comparison among HNAS using different training-free metrics in NAS-Bench-201. The performance of each HNAS variant is reported with the mean and standard deviation of five independent searches and the search costs are evaluated on a single Nvidia 1080Ti.}
\label{tab:sota-nasbench201-other-metrics}
\centering
% \begin{small}
% \resizebox{\textwidth}{!}{
\begin{tabular}{lccccccc}
\toprule
\multirow{2}{*}{\textbf{Algorithm}} & \multicolumn{3}{c}{\textbf{Test Accuracy (\%)}} &
\multirow{2}{*}{\textbf{Search Cost}} \\
\cmidrule(l){2-4} 
& C10 & C100 & IN-16 & (GPU Sec.) \\
% \midrule
% NASWOT \citep{naswot} & 92.96$\pm$0.81 & 69.98$\pm$1.22 &  44.44$\pm$2.10 & 306 \\
% TE-NAS \citep{te-nas} & 93.90$\pm$0.47 & 71.24$\pm$0.56 & 42.38$\pm$0.46 & 1558 \\
% KNAS \citep{knas} & 93.05 & 68.91 & 34.11 & 4200 \\
% NASI \citep{nasi} & 93.55$\pm$0.10 & 71.20$\pm$0.14 & 44.84$\pm$1.41 & 120 \\
\midrule
HNAS ($\gM_{\gradnorm}$) & 94.04$\pm$0.21 & 71.75$\pm$1.04 & 45.91$\pm$0.88 & 3010  \\
HNAS ($\gM_{\snip}$) & 93.94$\pm$0.02 & 71.49$\pm$0.11 & 46.07$\pm$0.14 & 2976 \\
HNAS ($\gM_{\grasp}$) & 94.13$\pm$0.13 & 72.59$\pm$0.82 & 46.24$\pm$0.38 & 3148  \\
HNAS ($\gM_{\tracenorm}$) & 94.07$\pm$0.10 & 72.30$\pm$0.70 & 45.93$\pm$0.37 & 3006 \\
\midrule
HNAS ($\gM_{\knas}$) & 94.19$\pm$0.06 & 72.94$\pm$0.52 & 46.31$\pm$0.38 & 3081 \\
\midrule
HNAS ($\gM_{\fisher}$) & 93.28$\pm$0.73 & 69.42$\pm$1.36 & 42.85$\pm$2.09 & 3309\\
HNAS ($\gM_{\synflow}$) & 94.13$\pm$0.00 & 72.50$\pm$0.00 & 45.47$\pm$0.00 & 3615 \\
HNAS ($\gM_{\naswot}$) & 92.10$\pm$0.62 & 66.81$\pm$0.32 & 39.26$\pm$0.72 & 2832 \\
\midrule
\textbf{Optimal} & 94.37 & 73.51 & 47.31 & - & \\
\bottomrule
\end{tabular}
% \end{small}
% \vskip -0.15in
\end{table}

\paragraph{Ablation Study on Optimization Process of HNAS.} In this section, we examine the evolution of the correlation between the test errors of candidate architectures in the NAS search space and their generalization guarantees in the non-realizable scenario with the increasing BO iterations in our HNAS framework. Figure~\ref{fig:app:evolve} illustrates the results in NAS-Bench-201 with CIFAR-10 dataset and training-free metric $\gM_{\tracenorm}$. Note that in every BO iteration of Figure \ref{fig:app:evolve}, the Spearman correlation we reported corresponds to the pair of hyperparameters $\mu$ and $\nu$ that achieves the best validation performance in the query history. As shown in Figure \ref{fig:app:evolve}, our HNAS framework, interestingly, is indeed selecting better-performing architectures by selecting hyperparameters $\mu$ and $\nu$ that can achieve higher Spearman correlation in the search space. These results therefore further justify the advantages of introducing BO algorithms with training-based performances into training-free NAS for better characterization.

\paragraph{Ablation Study on Training-Free Variant of HNAS.} According to \eqref{eq:nas-final} in our main paper, a completely training-free metric can be produced by simply specifying the values of $\mu$ and $\nu$ with prior knowledge. For example, by setting $\mu=0$, we can obtain the training-free metric $\kappa/\gM_{\tracenorm}$. However, obtaining prior knowledge regarding the best choice of $\mu$ and $\nu$ for NAS is non-trivial. Therefore, tuning $\mu$ and $\nu$ would be a better alternative to achieve more competitive search results in practice. To demonstrate this, we compare the performance of the architecture selected from training-free metric $\kappa/\gM_{\tracenorm}$ vs. our standard HNAS framework in Table \ref{tab:sota-nasbench201-hnas-free}. Notably, the results in Table \ref{tab:sota-nasbench201-hnas-free} demonstrate that tuning $\mu$ and $\nu$ based on training-based performances can indeed lead to improved search results and therefore will be a better alternative than pre-defining $\mu$ and $\nu$ for a completely training-free NAS, which further justifies the essence of combining training-free and training-based methods (as one of our major contributions) in HNAS.

\begin{table}[t]
%\vskip 0.1in
\caption{Comparison between HNAS and its training-free variant in NAS-Bench-201.}
\label{tab:sota-nasbench201-hnas-free}
\centering
% \begin{small}
% \resizebox{\textwidth}{!}{
\begin{tabular}{lcccccc}
\toprule
\multirow{2}{*}{\textbf{Algorithm}} & \multicolumn{3}{c}{\textbf{Test Accuracy (\%)}} \\
\cmidrule(l){2-4} 
& C10 & C100 & IN-16 \\
\midrule
$\kappa/\gM_{\tracenorm}$ & 93.50 & 69.78 & 43.73 \\
HNAS ($\gM_{\tracenorm}$) & 94.10$\pm$0.16 & 72.48$\pm$0.95 & 46.30$\pm$0.17 \\
\midrule
\textbf{Optimal} & 94.37 & 73.51 & 47.31 & \\
\bottomrule
\end{tabular}
% \end{small}
% \vskip -0.15in
\end{table}

\begin{figure}[t]
%\vskip 0.1in
\centering
\includegraphics[width=0.6\columnwidth]{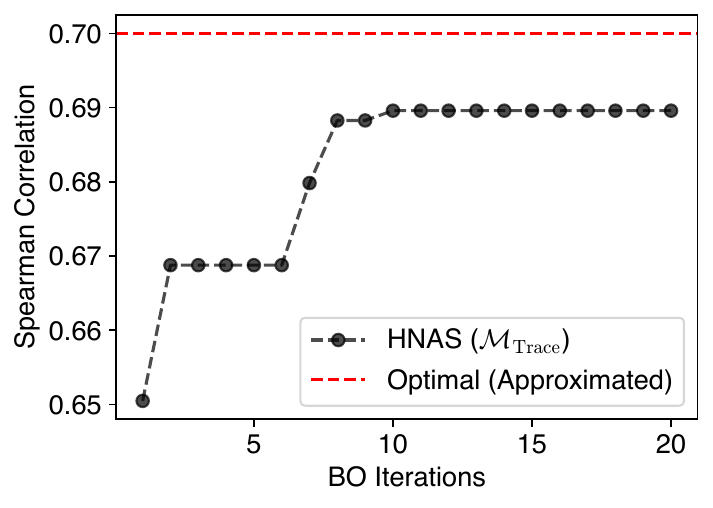}
\caption{Evolution of the correlation between the test errors (on CIFAR-10) of all architectures in NAS-Bench-201 and their generalization guarantees (using $\gM_{\tracenorm}$) in the non-realizable scenario with the BO iterations in our HNAS framework.}
\label{fig:app:evolve}
%\vskip -0.2in
\end{figure}

\paragraph{Ablation Study on BO algorithm in HNAS.} To investigate the influence of different BO algorithms (i.e., different acquisition functions) on the optimization part of our HNAS, we compare the search results obtained from using different acquisition functions (i.e., expected improvement (EI) vs. upper confidence bound (UCB)) with HNAS($\gM_{\tracenorm}$) and HNAS($\gM_{\gradnorm}$) on NAS-Bench-201 in Table \ref{tab:sota-nasbench201-other-acquisition}. 
Since the default hyperparameters for different acquisition functions in \citep{bo-imp} have already been tuned for a variety of tasks, we directly make use of them in our experiments without any changes.
Interestingly, the results in Table \ref{tab:sota-nasbench201-other-acquisition} show that different acquisition functions (i.e., different BO algorithms) typically have limited influence on our HNAS framework. That is, our HNAS is shown to be relatively robust to the change of acquisition function in BO. 

\begin{table}[t]
%\vskip 0.1in
\caption{Comparison among HNAS using different acquisition functions in NAS-Bench-201.~The performance of each HNAS variant is reported with the mean and standard deviation of five independent searches.}
\label{tab:sota-nasbench201-other-acquisition}
\centering
% \begin{small}
% \resizebox{\textwidth}{!}{
\begin{tabular}{lcccccc}
\toprule
\multirow{2}{*}{\textbf{Algorithm}} & \multicolumn{3}{c}{\textbf{Test Accuracy (\%)}} \\
\cmidrule(l){2-4} 
& C10 & C100 & IN-16 \\
\midrule
HNAS ($\gM_{\gradnorm}$) \, w/ EI & 94.04$\pm$0.21 & 71.75$\pm$1.04 & 45.91$\pm$0.88  \\
HNAS ($\gM_{\gradnorm}$) \, w/ UCB & 94.05$\pm$0.18 & 72.04$\pm$1.18 & 45.81$\pm$0.88  \\
\midrule
HNAS ($\gM_{\tracenorm}$) \  w/ EI & 94.07$\pm$0.10 & 72.30$\pm$0.70 & 45.93$\pm$0.37 \\
HNAS ($\gM_{\tracenorm}$) \  w/ UCB & 94.10$\pm$0.16 & 72.48$\pm$0.95 & 46.30$\pm$0.17 \\
\midrule
\textbf{Optimal} & 94.37 & 73.51 & 47.31 & \\
\bottomrule
\end{tabular}
% \end{small}
% \vskip -0.15in
\end{table}

\end{appendices}

\end{document}